%% file: main.tex
\documentclass{article}

\usepackage[in]{fullpage}
\usepackage[numbers]{natbib}
\usepackage{mathpazo}
\usepackage[utf8]{inputenc}
\usepackage[T1]{fontenc}
\usepackage{tabulary}
\usepackage{arydshln}
\usepackage{tablefootnote}
\usepackage{threeparttable}
\usepackage{footnote}
\usepackage{authblk}

\AtBeginDocument{\setlength\abovedisplayskip{4pt}}
\AtBeginDocument{\setlength\belowdisplayskip{4pt}}

\input{packages/defs}
\input{packages/header}

\input{packages/math_commands}
\usepackage{enumitem}
\usepackage{mathabx}

\makesavenoteenv{tabular}
\makesavenoteenv{table}

\definecolor{dmorange500}{HTML}{FF5F19}
\definecolor{dmblue300}{HTML}{2267EB}
\definecolor{dmred300}{HTML}{FF617B}
\hypersetup{
    colorlinks=true,
    citecolor=dmorange500,
    linkcolor=dmorange500,
    urlcolor=dmorange500
}

\renewcommand{\opt}{\operatorname{opt}}

\newcommand{\piE}{\pi^{\operatorname{E}}}

\renewcommand{\opt}{\operatorname{OPT}}
\newcommand{\truereward}{r^{\operatorname{true}}}
\newcommand{\gDE}{\mathcal{D}^{\operatorname{E}}}
\newcommand{\BE}{\operatorname{BE}}
\newcommand{\gec}{\operatorname{GEC}}
\newcommand{\SH}{D^2_{H}}
\newcommand{\discount}{\gamma}
\newcommand{\LOG}{\operatorname{NLL}}
\renewcommand{\textsf}[1]{#1}
\setlist{topsep=0pt,parsep=0pt,partopsep=0pt}

\theoremstyle{plain}
\newtheorem{theorem}{Theorem}[section]

\theoremstyle{definition}

\theoremstyle{remark}
\newtheorem{remark}[theorem]{Remark}

\title{Adversarial Imitation Learning with General Function Approximation: Theoretical Analysis and Practical Algorithms}

\author[1]{Tian Xu\thanks{Tian Xu and Zhilong Zhang contributed to this work equally. Email: \texttt{\{xut, zhangzl\}@lamda.nju.edu.cn}.}}
\author[1]{Zhilong Zhang$^*$}
\author[1]{Zexuan Chen}
\author[1,2]{Ruishuo Chen}
\author[4]{Yihao Sun}
\author[1]{Yang Yu\thanks{Corresponding author. Email: \texttt{yuy@nju.edu.cn}.}}

\affil[1]{National Key Laboratory for Novel Software Technology and School of Artificial Intelligence, Nanjing University}
\affil[2]{School of Mathematics, Nanjing University}
\affil[3]{Mila-Quebec Artificial Intelligence Institute and University of Montreal}

\date{\today}

\begin{document}

\maketitle

\begingroup
\renewcommand{\thefootnote}{}
\footnotetext{This paper is accepted in IEEE Transactions on Pattern Analysis and Machine Intelligence (TPAMI).}
\addtocounter{footnote}{0}
\endgroup

\begin{abstract}
Adversarial imitation learning (AIL), a prominent approach in imitation learning, has achieved significant practical success powered by neural network approximation. However, existing theoretical analyses of AIL are primarily confined to simplified settings—such as tabular and linear function approximation—and involve complex algorithmic designs that impede practical implementation. This creates a substantial gap between theory and practice. This paper bridges this gap by exploring the theoretical underpinnings of online AIL with general function approximation. We introduce a novel framework called optimization-based AIL (OPT-AIL), which performs online optimization for reward learning coupled with optimism-regularized optimization for policy learning. Within this framework, we develop two concrete methods: model-free OPT-AIL and model-based OPT-AIL. Our theoretical analysis demonstrates that both variants achieve polynomial expert sample complexity and interaction complexity for learning near-expert policies. To the best of our knowledge, they represent the first provably efficient AIL methods under general function approximation. From a practical standpoint, OPT-AIL requires only the approximate optimization of two objectives, thereby facilitating practical implementation. Empirical studies demonstrate that OPT-AIL outperforms previous state-of-the-art deep AIL methods across several challenging tasks.

\end{abstract}

\noindent\textbf{Keywords:} Imitation learning, adversarial learning, general function approximation, learning theory.

\section{Introduction}\label{sec:introduction}

Sequential decision-making tasks pervade real-world applications, where agents seek policies that maximize long-term returns. Reinforcement learning (RL) \citep{sutton2018reinforcement} provides a principled framework for developing such policies through environmental interaction and feedback. However, RL faces significant practical challenges: it demands carefully engineered reward functions and often requires millions of environment interactions to achieve acceptable performance \citep{mnih2015human, Janner19mbpo}. Imitation learning (IL) offers a compelling alternative by learning effective policies directly from expert demonstrations, eliminating the need for explicit reward design while dramatically reducing the number of required interactions. This sample efficiency has made IL increasingly attractive for real-world deployment, with demonstrated successes spanning recommendation systems \citep{chen2019recomendation, shi2019taobao} and generalist robot learning \citep{kim2024openvla, black2410pi0}.

IL encompasses two primary categories of methods: behavioral cloning (BC) and adversarial imitation learning (AIL). BC employs supervised learning to directly infer expert policies from demonstration data \citep{Pomerleau91bc, ross11dagger, brantley2020disagreement}. AIL takes a fundamentally different approach, employing an adversarial framework to match the expert's state-action distribution. This process involves the learner recovering an adversarial reward to maximize the policy value gap and subsequently learning a policy that minimizes this gap under the recovered reward. Building on these foundational principles, numerous practical algorithms have been developed \citep{Torabi18bco, Kostrikov19dac, brantley2020disagreement,jiang2020offline, ke19imitation_learning_as_f_divergence, ghasemipour2019divergence, garg2021iq-learn,luo2022transferable, li2024imitation}, achieving significant empirical advancements.

\begin{table*}[tbp]
\centering
\begin{threeparttable}
\caption{A summary of the expert sample complexity and interaction complexity.
Here $H$ is the horizon length, $\varepsilon$ is the desired imitation gap,
$|\gS|$ is the state space size, $|\gA|$ is the action space size,
$|\Pi|$ is the cardinality of the finite policy class $\Pi$,
$d$ is the feature dimension,
$d_{\gec}$ is the generalized eluder coefficient,
$\gN(\gR_h)$ and $\gN(\gQ_h)$ denote the covering numbers of the reward and Q-value classes.
We use $\widetilde{\gO}$ to hide logarithmic factors.\tnote{1}}
\label{table:summary-of-results}

\scriptsize
\setlength{\tabcolsep}{3pt}
\renewcommand{\arraystretch}{1.12}

\begingroup
\setlength{\aboverulesep}{0pt}
\setlength{\belowrulesep}{0pt}

\begin{tabular}{
  >{\centering\arraybackslash}p{2.0cm}
  !{\vrule width 0.4pt}
  >{\centering\arraybackslash}p{1.6cm}
  !{\vrule width 0.4pt}
  >{\centering\arraybackslash}p{4.2cm}
  !{\vrule width 0.4pt}
  >{\centering\arraybackslash}p{6.6cm}
}
\toprule
\addlinespace[0.4ex]
\textbf{Setting} &
\textbf{Algorithm} &
\textbf{Expert Sample Complexity} &
\textbf{Interaction Complexity} \\
\addlinespace[0.3ex]
\midrule
\addlinespace[0.3ex]

General Function Approximation
& BC \citep{foster2024behavior}\tnote{2}
& $\widetilde{\gO}\!\left(\dfrac{H^3 \log (\max_{h \in [H]} |\Pi_h|)}{\varepsilon^2}\right)$
& $0$ \\

Tabular MDPs
& OAL \citep{shani21online-al}
& $\widetilde{\gO}\!\left(\dfrac{H^2 |\gS|}{\varepsilon^2}\right)$
& $\widetilde{\gO}\!\left(\dfrac{H^4 |\gS|^2 |\gA|}{\varepsilon^2}\right)$ \\

Tabular MDPs
& MB-TAIL \citep{xu2023provably}
& $\widetilde{\gO}\!\left(\dfrac{H^{3/2} |\gS|}{\varepsilon}\right)$
& $\widetilde{\gO}\!\left(\dfrac{H^3 |\gS|^2 |\gA|}{\varepsilon^2}\right)$ \\

Linear Mixture MDPs
& OGAIL \citep{liu2021provably}
& $\widetilde{\gO}\!\left(\dfrac{H^3 d^2}{\varepsilon^2}\right)$
& $\widetilde{\gO}\!\left(\dfrac{H^4 d^3}{\varepsilon^2}\right)$ \\

Linear MDPs
& BRIG \citep{viano2024better}
& $\widetilde{\gO}\!\left(\dfrac{H^2 d}{\varepsilon^2}\right)$
& $\widetilde{\gO}\!\left(\dfrac{H^4 d^3}{\varepsilon^2}\right)$ \\

\addlinespace[0.4ex]
\midrule
\addlinespace[0.4ex]

General Function Approximation
& Model-free OPT-AIL
& $\widetilde{\gO}\!\left(\dfrac{H^2 \log \big(\max_{h \in [H]} \gN(\gR_h)\big)}{\varepsilon^2}\right)$
& $\widetilde{\gO}\!\left(
\dfrac{H^4 d_{\gec} \log \big(\max_{h \in [H]} \gN(\gQ_h)\gN(\gR_h)\big) + H^2}{\varepsilon^2}
\right)$ \\

General Function Approximation
& Model-based OPT-AIL
& $\widetilde{\gO}\!\left(\dfrac{H^2 \log \big(\max_{h \in [H]} \gN(\gR_h)\big)}{\varepsilon^2}\right)$
& $\widetilde{\gO}\!\left(
\dfrac{(d_{\gec} H + H^2)
\log \!\big(H \max_{h \in [H]} \gN(\gP_h; \log)\big)}{\varepsilon^2}
\right)$ \\

\addlinespace[0.2ex]
\bottomrule
\end{tabular}
\endgroup

\begin{tablenotes}
\item[1] We do not hide $\log(\gN(\gF))$ in $\widetilde{\gO}$ since it may be large for many function classes.
\item[2] We report the worst-case bound for BC, consistent with this paper.
\end{tablenotes}

\end{threeparttable}
\end{table*}

A notable empirical observation from these advances is that AIL often significantly outperforms BC \citep{ghasemipour2019divergence, ke19imitation_learning_as_f_divergence, Kostrikov19dac, garg2021iq-learn}. Understanding the theoretical foundations behind this superior performance has become a central focus of recent research \citep{xu2020error, shani21online-al,xu2022understanding, liu2021provably, xu2023provably, viano2024better}, particularly in the online setting. This theoretical analysis centers on two critical complexity measures for practical applications: \emph{expert sample complexity}, which quantifies the number of expert trajectories required, and \emph{interaction complexity}, which measures the number of trajectories when interacting with the environment. In the tabular setting, the best-known complexity result is achieved in \citep{xu2023provably}. They developed the MB-TAIL algorithm, which leverages advanced distribution estimation, achieving the expert sample complexity $\widetilde{\mathcal{O}} ( H^{3/2} |\mathcal{S}| / \varepsilon )$ and interaction complexity $\widetilde{\mathcal{O}} (  H^3 |\mathcal{S}|^2|\mathcal{A}| / \varepsilon^2 )$, where $|\gS|$ and $|\gA|$ are the state space size and action space size, respectively, $H$ is the horizon length and $\varepsilon$ is the desired value gap. Furthermore, \citep{liu2021provably,viano2024better} investigated the theory of AIL with linear function approximation. Notably, the BRIG approach proposed in \citep{viano2024better} uses linear regression for policy evaluation, achieving the expert sample complexity $\widetilde{\mathcal{O}}( H^2 d / {\varepsilon^2} )$ and interaction complexity $\tilde{\mathcal{O}} (H^4d^3 / \varepsilon^2 )$, where $d$ is the feature dimension. A complete summary of related results is provided in \cref{table:summary-of-results}.

Despite significant theoretical progress, a substantial gap persists between AIL theory and practice. First, current theoretical analyses are predominantly confined to restrictive settings—either tabular \citep{rajaraman2020fundamental, shani21online-al, xu2023provably} or linear function approximation \citep{liu2021provably, viano2024better}—which diverge markedly from practice where AIL algorithms typically employ general function approximation, particularly neural networks. Besides, most previous theoretical works involve algorithmic designs such as count-based \citep{shani21online-al, xu2023provably} or covariance-matrix-based \citep{liu2021provably, viano2024better} bonuses, which are tailored to their respective settings. Implementing such algorithmic designs in practice, where neural network approximation is employed, presents significant challenges \citep{yang2021exploration,tiapkin2022dirichlet}.

This paper aims to bridge the gap between theory and practice in AIL by developing provably efficient algorithms with general function approximation and providing practical implementations equipped with neural networks.

First, we introduce a new AIL framework called optimization-based adversarial imitation learning (OPT-AIL) for general function approximation. OPT-AIL decomposes adversarial imitation into two coupled optimization problems for reward learning and policy learning, respectively. For reward learning, recognizing that the reward loss evolves dynamically as the policy updates, we formulate reward learning as an online stochastic optimization problem and propose to invoke a no-regret approach to solve it. For policy learning, inspired by \citep{liu2024maximize}, we propose to solve an optimism-regularized optimization problem with the currently learned reward. Guided by this principle, we propose two concrete algorithms depending on the detailed policy update mechanism. In particular, model-free OPT-AIL first infers the Q-value functions by minimizing the optimism-regularized Bellman error and then derives the corresponding greedy policies. Model-based OPT-AIL instead learns the transition functions based on optimism-regularized maximum likelihood estimation and then derives policies by planning on the learned transition model.

Furthermore, we provide a comprehensive theoretical analysis for OPT-AIL in the general function approximation setup. Under mild assumptions, we prove that model-free OPT-AIL achieves the expert sample complexity $\widetilde{\gO} (H^2 \log (\max_{h \in [H]} \gN(\gR_h)) / \varepsilon^2)$ and interaction complexity $\widetilde{\gO} ((H^4 d_{\text{GEC}} \log (\max_{h \in [H]} \gN(\gQ_h) \gN(\gR_h) ) +H^2 ) / \varepsilon^2)$. Moreover, model-based OPT-AIL achieves the same expert sample complexity and an interaction complexity of $\widetilde{\gO} ((H d_{\gec}  + H^2) \log \lp H \max_{h \in [H]} \gN (\gP_h; \log) \rp / \varepsilon^2)$. Here $d_{\text{GEC}}$ is the generalized eluder coefficient, originally proposed in \citep{zhong2022posterior} to measure the complexity of RL with function approximation, which we adapt to the AIL problem. $\gN(\gR_h)$, $\gN (\gQ_h)$ and $\gN (\gP_h; \log)$ are the covering numbers of the reward class $\gR_h$, Q-value class $\gQ_h$ and transition class $\gP_h$, respectively. To the best of our knowledge, model-free and model-based OPT-AIL are the first provably efficient AIL approaches with general function approximation.

Finally, we offer a practical implementation of OPT-AIL, demonstrating its competitive performance on standard benchmarks. Notably, both model-free and model-based OPT-AIL require only the approximate optimization of two objectives, thereby facilitating their practical implementations with deep neural networks. Leveraging this advantage, we implement model-free and model-based OPT-AIL using neural network approximations and compare their performance against prior state-of-the-art (SOTA) deep AIL methods, which often lack theoretical guarantees. Experimental results indicate that OPT-AIL outperforms SOTA deep AIL approaches across several challenging DMControl tasks.\footnote{The code is available at \href{https://github.com/LAMDA-RL/OPT-AIL}{https://github.com/LAMDA-RL/OPT-AIL}.}

We conclude our contributions from three perspectives.

\begin{enumerate}
    \item This work introduces a new optimization-based adversarial imitation learning framework that accommodates general function approximation.
    \item This work establishes the first polynomial expert sample complexity and interaction complexity guarantees for adversarial imitation learning with general function approximation. 
    \item This work designs a practical neural-network-based implementation of optimization-based adversarial imitation learning, demonstrating its superior performance on standard benchmarks. 
\end{enumerate}

\section{Related Work}  \label{sec:related_work}
\subsection{Adversarial Imitation Learning}

The theoretical foundations of AIL have been extensively explored in numerous studies \citep{pieter04apprentice, syed07game, Sun19provably_efficient_ilfo, rajaraman2020fundamental, shani21online-al, liu2021provably,rajaraman2021value, xu2022understanding, swamy2022minimax,viano2022proximal, xu2023provably, viano2024better}. Early research \citep{pieter04apprentice, syed07game, Sun19provably_efficient_ilfo, rajaraman2020fundamental,xu2021error, swamy2022minimax, viano2022proximal} focused on ideal settings where either the transition function is known or exploratory data distributions are available, primarily addressing expert sample efficiency. Notably, under mild conditions, \citep{xu2022understanding} proved that AIL can achieve a horizon-free imitation gap bound $\gO ( \min\{ 1, \sqrt{|\gS| / N} \} )$, where $N$ denotes the number of expert trajectories. 
More recent work has shifted toward practically relevant scenarios, specifically online AIL with unknown transitions \citep{shani21online-al, liu2021provably, xu2023provably, viano2024better}. This line of work investigates both expert sample complexity and interaction complexity. These recent advancements were discussed in the previous section and thus will not be reiterated here. Most existing theoretical works focus on either tabular \citep{rajaraman2020fundamental, shani21online-al, xu2023provably} or linear function approximation settings \citep{liu2021provably, viano2024better}, and often lack practical implementations due to algorithmic designs tailored to specific settings. Our work addresses both limitations by providing theoretical guarantees for general function approximation while delivering a practical implementation with competitive empirical performance.

On the empirical side, there has been extensive research \citep{ho2016gail, Kostrikov19dac, Kostrikov20value_dice, ghasemipour2019divergence, ke19imitation_learning_as_f_divergence, garg2021iq-learn} developing practical AIL approaches that leverage general function approximation, particularly neural networks. A seminal method in this field is generative adversarial imitation learning (GAIL) \citep{ho2016gail}. In GAIL, a discriminator is trained to distinguish expert demonstrations from policy-generated trajectories, while the policy (or generator) learns to maximize the reward signal provided by the discriminator. Building on these foundations, recent methods have explored alternative formulations. Inverse Q-Learning \citep{garg2021iq-learn} and proximal point imitation learning \citep{viano2022proximal} represent a notable departure from GAIL by directly learning Q-value functions instead of reward models, achieving state-of-the-art performance in standard benchmarks.

The above approaches are model-free. In contrast, model-based AIL methods aim to leverage learned dynamics to enhance interaction efficiency. For instance, \citep{baram2017end} developed a model-based variant of GAIL that learns a transition model to render the full GAIL procedure differentiable. \citep{kolev2024efficient} introduced an ensemble-based approach termed CMIL, which learns an ensemble of transition models and uses the disagreement among them as a regularization term to constrain the policy toward the demonstration distribution. More recently, \citep{ren2024hybrid} introduced a hybrid model-based approach called HyPER that incorporates both expert and online data for policy learning, achieving high interaction efficiency. Despite their empirical progress, these model-free and model-based advances generally lack rigorous theoretical guarantees under general function approximation.

\subsection{General Function Approximation in Reinforcement Learning}

Our work is closely related to a body of research focused on general function approximation in RL \citep{osband2014eluder, jin2021bellman, liu2024maximize}. Notably, \citep{liu2024maximize} proposed an algorithmic framework that incorporates a unified objective to balance exploration and exploitation in RL, demonstrating a sublinear regret bound. In this paper, we adapt this algorithmic design to address several RL sub-problems within the context of AIL. While RL operates with a fixed, known reward, AIL must simultaneously infer reward functions from expert demonstrations and learn policies through environmental interaction. This dual learning process—where the reward function evolves as the policy improves—requires a fundamentally different theoretical analysis that accounts for the interdependence between reward estimation and policy optimization, highlighting a unique challenge in AIL compared to traditional RL.

\section{Preliminary}
\label{sec:preliminary}
\subsection{Markov Decision Process}

In this paper, we consider episodic Markov Decision Processes (MDPs), represented by the tuple $\mathcal{M} = (\mathcal{S}, \mathcal{A}, P, \truereward, H, s_1)$. Here, $\mathcal{S}$ and $\mathcal{A}$ denote the state and action spaces, respectively. $H$ signifies the planning horizon, while $s_1$ stands for the fixed initial state. The set $P^\star = \{ P^\star_1, \ldots, P^\star_{H} \}$ characterizes the non-stationary transition function of this MDP. Specifically, $P^\star_h(s_{h+1}|s_h, a_h)$ determines the probability of transitioning to state $s_{h+1}$ given state $s_h$ and action $a_h$ at time step $h$, where $h \in [H]$. Similarly, $\truereward = \{ \truereward_1, \ldots, \truereward_{H} \}$ outlines the unknown true reward function of this MDP. Without loss of generality, we assume $\truereward_h: \mathcal{S} \times \mathcal{A} \rightarrow [0, 1]$ for $h \in [H]$. A non-stationary policy is denoted by $\pi = \{ \pi_1, \ldots, \pi_H \}$ with $\pi_h: \mathcal{S} \rightarrow \Delta(\mathcal{A})$, where $\Delta(\mathcal{A})$ denotes the probability simplex. Here, $\pi_h(a|s)$ represents the probability of selecting action $a$ in state $s$ at time step $h$, for $h \in [H]$.

The quality of policy $\pi$ is evaluated by policy value: $V^{\pi} = \expect [ \sum_{h=1}^{H} \truereward_h (s_h, a_h) | a_h \sim \pi_h (\cdot|s_h), s_{h+1} \sim P^\star_h(\cdot|s_h, a_h), \forall h \in [H] ]$. We denote the Q-value function of policy $\pi$ at time step $h$ as $Q^{\pi}_h: \gS \times \gA \rightarrow \reals$, where $Q^{\pi}_h (s, a) = \expect_{\pi} [\sum_{\ell=h}^{H} \truereward_{\ell} (s_\ell, a_\ell) | s_h=s, a_h = a]$. The optimal Q-value function $Q^\star_h : \gS \times \gA \rightarrow \reals$ is defined as $Q^\star_h (s, a) := \sup_{\pi \in \Pi} Q^{\pi}_h (s, a)$. It is known that $Q^\star_h$ is the fixed point of the Bellman operator $\gT_{h}$: $Q^\star_h (s, a) = (\gT_h Q^\star_{h+1}) (s, a) := \truereward_h (s, a) + \expect_{s^\prime \sim P_h (\cdot|s, a)} [\max_{a^\prime \in \gA} Q^{\star}_{h+1} (s^\prime, a^\prime)] $. In other words, $Q^\star$ has zero Bellman error, i.e., $Q^\star_h (s, a) - (\gT_h Q^\star_{h+1}) (s, a) = 0$. 

\subsection{Imitation Learning}
The goal of IL is to learn a high-quality policy \emph{without} knowledge of the reward function $\truereward$. In pursuit of this objective, we typically posit the existence of a near-optimal expert policy $\piE$ capable of interacting with the environment to generate a dataset, comprising $N$ trajectories each of length $H$: $\gDE = \{ \tau = \lp s_1, a_1, s_2, a_2, \ldots, s_H, a_H \rp; a_h \sim \piE_h(\cdot|s_h), s_{h+1} \sim P^\star_h(\cdot|s_h, a_h), \forall h \in [H] \}$. Subsequently, the learner leverages this dataset $\gDE$ to mimic the behavior of the expert and thereby derives an effective policy. The quality of this imitation is measured by the \emph{imitation gap}~\citep{pieter04apprentice, ross2010efficient, rajaraman2020fundamental}: $V^{\piE} - V^{\pi}$, where $\pi$ represents the learned policy. Essentially, we hope that the learned policy can perfectly mimic the expert such that the imitation gap is small.

AIL is a prominent class of IL methods that imitate expert behavior through an adversarial learning process defined by $\min_{\pi} \max_{r} V^{\piE}_{r} - V^{\pi}_{r}$, where $V^{\pi}_{r}$ denotes the value of policy $\pi$ under reward $r$. In this framework, AIL infers a reward function that maximizes the value gap between the expert policy and the learning policy. Subsequently, it learns a policy that minimizes this value gap using the inferred reward. Essentially, AIL involves solving several RL sub-problems, as the outer optimization problem concerning the policy is equivalent to an RL problem under the inferred reward $r$.

\subsection{AIL with General Function Approximation}
This work considers AIL with general function approximation. In this setup, the learner first has access to a reward class $\gR = \gR_1 \times \gR_2 \times \ldots \times \gR_H$ with $\forall h \in [H], \gR_h \subseteq (\gS \times \gA \rar [0, 1])$ to infer the reward. We assume that $\gR$ captures the unknown true reward.
\begin{asmp}[Realizability of $\gR$]
\label{asmp:realizability_reward_class}
    The unknown true reward lies in the reward class, i.e., $\truereward \in \gR$.
\end{asmp}
Besides, we consider two types of AIL methods: model-free AIL and model-based AIL, which refer to solving several RL sub-problems in a model-free (or model-based) manner. For model-free AIL, the learner has access to a Q-value function class $\gQ = \gQ_1 \times \gQ_2 \times \ldots \times \gQ_H$ with $\forall h \in [H], \gQ_{h} \subseteq ( \gS \times \gA \rar [0, H])$. Since there is no reward at step $H+1$, we always set $Q_{H+1} \equiv 0$. Below, we present a standard assumption about the function class $\gQ$ that is commonly adopted in the literature of RL with function approximation \citep{jin2021bellman, zhong2022posterior, liu2024maximize}.

\begin{asmp}[Realizability and Bellman Completeness of $\gQ$]
\label{asmp:realizability_and_bellman_completeness_q_class}
    For reward $r \in \gR$, $Q^{\star, r} \in \gQ$, where $Q^{\star, r}$ denotes the optimal Q-value function under reward $r$. Besides, for reward $r \in \gR$, $\gT^{r}_{h} \gQ_{h+1} \subseteq \gQ_{h}, \; \forall h \in [H] $, where $\gT^{r}_{h}$ denotes the Bellman operator under reward $r$ and $\gT^{r}_{h} \gQ_{h+1} = \{ \gT^{r}_{h} Q_{h+1}: Q_{h+1} \in \gQ_{h+1}  \}$.  
\end{asmp}

In short, \cref{asmp:realizability_and_bellman_completeness_q_class} states that the Q-value class $\gQ$ should capture the optimal Q-value function and $\gQ$ is closed under the Bellman update.

As for model-based AIL, the learner has access to a transition function class $\gP = \gP_1 \times \gP_2 \times \ldots \gP_H$ with $\forall h \in [H], \gP_h \subseteq (\gS \times \gA \rightarrow \Delta (\gS))$. We assume that $\gP$ captures the true transition function.

\begin{asmp}[Realizability of $\gP$]
\label{asmp:realizability_model_class}
    The true transition function lies in the transition function class, i.e., $P^\star \in \gP$.
\end{asmp}
It is easy to verify that Assumptions \ref{asmp:realizability_reward_class} and \ref{asmp:realizability_and_bellman_completeness_q_class} (or Assumptions \ref{asmp:realizability_reward_class} and \ref{asmp:realizability_model_class}) are more general than the tabular MDP \citep{shani21online-al,xu2023provably}, linear mixture MDP \citep{liu2021provably} and linear MDP \citep{viano2024better} assumptions used in previous works.

When the function class contains a finite number of elements, its cardinality can be used to quantify its ``size''. However, for general function approximation, where the function class may contain an infinite number of elements, we utilize the standard $\varepsilon$-covering number \citep{wainwright2019high} to measure its complexity. In the following part, we present the definitions of $\varepsilon$-covering number for Q-value functions and transition functions, respectively.

\begin{defn}[$\varepsilon$-covering number]
\label{def:covering_number}
    For a function class $\gF \subseteq ( \gX \rightarrow \reals)$ (or $\gF \subseteq ( \gX \rightarrow \Delta (\gY))$), the $\varepsilon$-covering number of $\gF$, denoted as $\gN_{\varepsilon} (\gF)$ (or $\gN_{\varepsilon} (\gF, \log)$), is defined as the minimum integer $n$ such that there exists a finite subset $\gF^{\prime} \subseteq \gF$ with $ \labs \gF^{\prime} \rabs = n $ such that for any function $f \in \gF$, there exists $f^\prime \in \gF^\prime$ satisfying that $\max_{x \in \gX} \labs f (x) - f^{\prime} (x) \rabs \leq \varepsilon$ (or $\max_{x \in \gX, y \in \gY} \labs \log (f (y|x)) - \log (f^{\prime} (y|x)) \rabs \leq \varepsilon$).  
\end{defn}

\section{Optimization-based Adversarial Imitation Learning}
\label{sec:ail_oe}
In this section, we introduce Optimization-Based Adversarial Imitation Learning (OPT-AIL), a provably efficient framework comprising two specific algorithms: model-free OPT-AIL and model-based OPT-AIL. In \cref{subsec:theoretical_analysis_of_ail_oe}, we delve into the core components of OPT-AIL, which involves online reward optimization and optimism-regularized policy optimization. Subsequently, in \cref{subsec:theoretical_analysis}, we explore the underlying principles of OPT-AIL and establish theoretical guarantees under general function approximation.

\subsection{Algorithm Description}
\label{subsec:theoretical_analysis_of_ail_oe}

In this part, we present our provably efficient framework, OPT-AIL, with general function approximation, comprising two concrete algorithms: model-free OPT-AIL and model-based OPT-AIL (see Algorithms \ref{alg:mf_ail_oe} and \ref{alg:mb_ail_oe} for overviews).

We begin by recalling our theoretical objective: ensuring the algorithm outputs a policy with $\varepsilon$-imitation gap using finite expert samples and environment interactions. To obtain the final policy, we employ the standard online-to-batch conversion technique \citep{Hazan16introduction-to-oco}. During the learning process, the algorithm iteratively generates sequences of rewards $\{ r^k \}_{k=1}^K$ and policies $\{ \pi^k \}_{k=1}^K$, then outputs the policy $\widebar{\pi}$ uniformly sampled from $\{ \pi^k \}_{k=1}^K$. To analyze the imitation gap of $\widebar{\pi}$, we employ the following standard error decomposition lemma.

\begin{lem}
\label{lem:error_decomposition}
    Consider sequences of rewards $\{ r^k \}_{k=1}^K$ and  policies $\{ \pi^k \}_{k=1}^K$, and the policy $\widebar{\pi}$ uniformly sampled from $\{ \pi^k \}_{k=1}^K$. Then it holds that
    \begin{equation}
    \label{eq:error_decomposition}
        \begin{split}
            V^{\piE} - V^{\widebar{\pi}} = \underbrace{  \frac{1}{K} \sum_{k=1}^K \lp  V^{\piE}_{\truereward} - V^{\pi^k}_{\truereward} - \lp V^{\piE}_{r^k} - V^{\pi^k}_{r^k} \rp \rp }_{\text{reward error}}  + \underbrace{\frac{1}{K} \sum_{k=1}^K \lp V^{\piE}_{r^k} - V^{\pi^k}_{ r^k} \rp }_{\text{policy error}}.
        \end{split}
    \end{equation}
\end{lem}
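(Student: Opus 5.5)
The identity is essentially algebraic, so the plan is to first pin down what $V^{\widebar{\pi}}$ means and then add and subtract the right terms. Since $\widebar{\pi}$ is drawn uniformly from $\{\pi^k\}_{k=1}^K$, I interpret $V^{\widebar{\pi}}$ as the value of the mixture policy. Equivalently, it is the expected value over the random index:
\[
V^{\widebar{\pi}} = \frac{1}{K}\sum_{k=1}^K V^{\pi^k}_{\truereward}.
\]
This holds because the index is sampled once, before the episode begins, and the trajectory is then generated by $\pi^k$. The value is linear in the trajectory distribution, which is the corresponding mixture of the trajectory distributions of the $\pi^k$. I will state this convention explicitly. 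If a high-probability statement were wanted instead, one would pass through the expectation, but the lemma as written is an identity in expectation.

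Next, $V^{\piE} = V^{\piE}_{\truereward}$, since $V^{\pi}$ is by definition the value under $\truereward$. Writing $V^{\piE}_{\truereward}$ as $\frac{1}{K}\sum_{k=1}^K V^{\piE}_{\truereward}$ gives
\[
V^{\piE} - V^{\widebar{\pi}} = \frac{1}{K}\sum_{k=1}^K \bigl(V^{\piE}_{\truereward} - V^{\pi^k}_{\truereward}\bigr).
\]

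Finally, I add and subtract $V^{\piE}_{r^k} - V^{\pi^k}_{r^k}$ inside each summand. Splitting the sum then yields exactly the reward-error term plus the policy-error term in \eqref{eq:error_decomposition}.

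There is no real obstacle here. The only point needing care is justifying the mixture-value step, namely the linearity of the value in the occupancy or trajectory measure under uniform sampling of the index. I will spell that step out briefly.
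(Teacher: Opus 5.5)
Your proposal is correct and follows the paper's proof exactly: the paper also uses the definition of $\widebar{\pi}$ to write $V^{\widebar{\pi}}$ as the average of the $V^{\pi^k}_{\truereward}$, and then adds and subtracts $V^{\piE}_{r^k} - V^{\pi^k}_{r^k}$ inside the sum. Your explicit remark that the index is drawn once per episode (a trajectory-level mixture, not a per-step mixture of action distributions) is a useful clarification that the paper leaves implicit.
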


\cref{lem:error_decomposition} demonstrates that achieving a small imitation gap requires controlling both reward error and policy error. The reward error quantifies the distance between the true reward $\truereward$ and the learned reward $r^k$ through the imitation gap, while the policy error measures the value difference between the expert policy $\piE$ and the learned policy $\pi^k$ under the inferred reward $r^k$. This policy error differs from the concept of regret in RL \citep{jin2021bellman, liu2024maximize}, where the reward function remains fixed.

Importantly, \cref{lem:error_decomposition} converts the adversarial formulation in AIL into two coupled optimization problems. To theoretically solve these coupled problems, we adopt an iterative approach where each iteration first updates the reward function and subsequently derives the corresponding policy. The following parts detail these reward and policy updates, which involve solving two optimization problems.

\subsubsection{Reward Update via Online Optimization (Line 3 in Algorithms \ref{alg:mf_ail_oe} and \ref{alg:mb_ail_oe}).}

This step aims to control the reward error. Specifically, in iteration $k$, we seek to learn a reward $r^{k}$ such that the error $V^{\pi^{k}}_{r^{k}} - V^{\piE}_{r^{k}} - (V^{\pi^{k}}_{\truereward} - V^{\piE}_{\truereward})$ remains small, which motivates minimizing the loss function $V^{\pi^{k}}_{r} - V^{\piE}_{r}$. However, since reward learning precedes policy learning, the loss function $V^{\pi^{k}}_{r} - V^{\piE}_{r}$ is unknown to the reward learner when learning $r^k$ because $\pi^k$ has not yet been determined. The loss can only be evaluated after the reward learner commits to its decision $r^k$. This sequential structure naturally motivates formulating the reward learning problem as an \emph{online} optimization problem \citep{Hazan16introduction-to-oco}, where the player cannot observe the loss function beforehand and only receives feedback after making a commitment.

Specifically, in iteration $k$, the reward learner selects \( r^{k} \) based on the previous loss functions $\{ V^{\pi^{i}}_{r} - V^{\piE}_{r} \}_{i=0}^{k-1}$, after which the current loss function $V^{\pi^{k}}_{r} - V^{\piE}_{r}$ is determined. Since the previous \emph{expected} loss functions $\{ V^{\pi^{i}}_{r} - V^{\piE}_{r} \}_{i=0}^{k-1}$ are unavailable, we instead minimize the \emph{estimated} loss functions. In particular, we construct an unbiased estimation $\gL^{i} (r)  =\widehat{V}^{\pi^i}_{r} - \widehat{V}^{\piE}_{r} $ for $V^{\pi^i}_{r} - V^{\piE}_{r} $ using expert demonstrations $\gDE$ and the trajectory $\tau^i$ collected by policy $\pi^i$, where
\begin{align*}
    \widehat{V}^{\pi^i}_{r} = \sum_{h=1}^H r_h (s^i_h, a^i_h), \; \widehat{V}^{\piE}_{r} = \frac{1}{N} \sum_{\tau \in \gDE} \sum_{h=1}^H r_h (\tau (s_h), \tau (a_h)) .
\end{align*}
Here $(\tau (s_h), \tau (a_h))$ is the state-action pair of trajectory $\tau$ visited at time step $h$ and $\tau^{i} = \{ s^i_1, a^i_1, \ldots, s^i_H, a^i_H \}$ is the trajectory collected by policy $\pi^i$. The ultimate goal of the reward learner is to minimize the cumulative losses $\sum_{k=1}^K \widehat{V}^{\pi^k}_{r^k} - \widehat{V}^{\piE}_{r^k}$. To achieve this, we employ a no-regret algorithm \citep{Hazan16introduction-to-oco}. We now formally define the reward optimization error resulting from running the no-regret algorithm.

\begin{defn}[Reward Optimization Error]
\label{def:rew_opt_error}
For any sequence of policies $\{ \pi^k \}_{k=1}^K$, the no-regret reward optimization algorithm sequentially outputs rewards $r^1, \ldots, r^K$. The reward optimization error $\varepsilon^{r}_{\operatorname{opt}}$ is defined as
\begin{align*}
    \varepsilon^{r}_{\operatorname{opt}}  :=  \frac{1}{K} \max_{r \in \gR} \sum_{k=1}^K  \widehat{V}^{\pi^k}_{r^k} - \widehat{V}^{\piE}_{r^k} -(\widehat{V}^{\pi^k}_{r}- \widehat{V}^{\piE}_{r}).
\end{align*}
\end{defn}
The reward optimization error aligns with the standard average regret in online optimization \citep{Hazan16introduction-to-oco}, a concept not extensively explored in the context of AIL. Various no-regret algorithms can achieve sublinear reward optimization error rates. For convex loss functions $\{ \gL^k (r) \}_{k=0}^K$ and convex reward class $\gR$, online projected gradient descent \citep{Hazan16introduction-to-oco} guarantees $\varepsilon^{r}_{\operatorname{opt}} = \gO (1/\sqrt{K})$. As for non-convex functions and sets, Follow-the-Perturbed-Leader achieves the same $\varepsilon^{r}_{\operatorname{opt}} = \gO (1/\sqrt{K})$ \citep{suggala2020online}.

In summary, we formulate reward learning in AIL as an online stochastic optimization problem and employ no-regret algorithms to solve it. 

\subsubsection{Policy Update via Optimism-Regularized Optimization (Lines 4-5 in Algorithms \ref{alg:mf_ail_oe} and \ref{alg:mb_ail_oe})}
Policy updates aim to control the policy error. In iteration $k$, the policy learner seeks to learn a policy $\pi^{k}$ such that the policy error $V^{\piE}_{r^{k}} - V^{\pi^k}_{r^{k}}$ is small, where $r^{k}$ is the learned reward function from the current iteration. This reduces to an RL problem with reward function $r^{k}$. Building upon \citep{liu2024maximize}, we propose model-free and model-based approaches that leverage optimism-regularized optimization to solve this RL subproblem.

\begin{algorithm}[htbp]
\caption{Model-free Optimization-based Adversarial Imitation Learning}
\label{alg:mf_ail_oe}

\begin{algorithmic}[1]
\REQUIRE{Reward class $\gR$, Q-value class $\gQ$, initialized reward $r^0$, policy $\pi^0$ and dataset $\gD^{0} = \emptyset$.}
\FOR{$k = 1, 2, \ldots, K$}
\STATE{Apply $\pi^{k-1}$ to roll out a trajectory $\tau^{k-1}$ and append it to the dataset $\gD^{k} = \gD^{k-1} \cup \{ \tau^{k-1} \}$.}
\STATE{Obtain $r^{k}$ by running a no-regret algorithm to solve the online optimization problem with observed loss functions  $\{ \gL^{i} (r)  \}_{i=0}^{k-1}$ up to an error $\varepsilon^{r}_{\opt}$, where $\gL^{i} (r)  =\widehat{V}^{\pi^i}_{r} - \widehat{V}^{\piE}_{r}$.}
\label{alg_line:reward_update}
\STATE{Obtain $Q^{k}$ by solving the optimization problem \eqref{eq:q_opt_obj} up to an error $\varepsilon^{Q}_{\opt}$.}
\label{alg_line:Q_update}
\STATE{Obtain $\pi^{k}$ by $\pi^{k}_h (s) = \argmax_{a \in \gA} Q^{k}_h (s, a)$.}
\label{alg_line:policy_update}
\ENDFOR
\ENSURE{$\widebar{\pi}$ sampled uniformly from $\{ \pi^k \}_{k=1}^K$.}
\end{algorithmic}
\end{algorithm}

\textbf{Model-free Policy Update. }The model-free approach learns Q-value functions based on Bellman error minimization and then derives the greedy policy. In particular, we first learn Q-value functions by solving the optimization problem of
\begin{equation}
\label{eq:q_opt_obj}
    \begin{split}
        &\min_{Q \in \gQ} \gL^{k} (Q) := \BE^{k} (Q) - \lambda_{Q} \max_{a \in \gA} Q_1 (s_1, a),
    \\
    & \text{with } \BE^{k} (Q) = \sum_{h=1}^H \gE_{h} (Q_h, Q_{h+1}; \gD^{k}, r^{k}) - \inf_{Q^\prime_h \in \gQ_h} \gE_{h} (Q^\prime_h, Q_{h+1}; \gD^{k}, r^{k}).
    \end{split}
\end{equation}
Here $\gE_{h} (Q_h, Q_{h+1}; \gD^{k}, r^{k}) = \sum_{i=0}^{k-1} (Q_h (s^i_h, a^i_h) - r^k_h - \max_{a^\prime \in \gA} Q_{h+1} (s^i_{h+1}, a^\prime) )^2$, $\gD^{k} = \{ \tau^{i} \}_{i=0}^{k-1}$ with $\tau^{i} = \{ s^i_1, a^i_1, \ldots, s^i_H, a^i_H \}$ and $\lambda_{Q} > 0 $ is the regularization coefficient. As shown in \citep{antos2008learning, jin2021bellman}, $\BE^{k} (Q)$ is an estimation of the true squared Bellman error of $Q$ with respect to reward $r^k$ and dataset $\gD^{k}$, i.e., $\sum_{h=1}^H \sum_{i=0}^{k-1} (Q_h (s^i_h, a^i_h) - (\gT^{r^k}_h Q_h) (s^i_h, a^i_h) )^2$. In \eqref{eq:q_opt_obj}, the primary term $\BE^{k}(Q)$ enforces Bellman consistency, while the optimism regularization term $\max_{a \in \mathcal{A}} Q_1(s_1,a)$, which serves as a proxy for the optimal initial value, biases the optimization toward optimistic value estimates and thereby promotes effective exploration. Moreover, this design is theoretically grounded, as it directly controls the gap between the true optimal value and its Q-based estimate, a key term in our policy error analysis. Notably, \cref{alg:mf_ail_oe} only requires approximately solving the optimization problem up to an error $\varepsilon^{Q}_{\opt}$ with $\varepsilon^{Q}_{\opt} = \gL^{k} (Q^k) - \min_{Q \in \gQ} \gL^{k} (Q)$. After obtaining $Q^{k}$, we derive $\pi^{k}$ as its greedy policy.

\textbf{Model-based Policy Update. } The model-based approach learns a transition model from the online data, then derives the policy through planning in the learned transition model. Concretely, the transition model is learned based on optimism-regularized maximum likelihood estimation (MLE).
\begin{equation}
\label{eq:model_opt_obj}
\min_{P \in \gP} \gL^k (P) := \text{NLL}^k (P) - \lambda_{P} V^{*}_{P, r^k}, \quad \text{with } \text{NLL}^k (P) = - \sum_{i=0}^{k-1} \sum_{h=1}^H \log \lp P_h \lp s^i_{h+1} | s^i_h, a^i_h \rp \rp.
\end{equation}
Here, $V^{\star}_{P, r^k}$ denotes the optimal value function induced by the learned transition model $P$ and reward $r^k$, serving as a model-based estimate of the true optimal value. The objective balances two distinct goals: the negative log-likelihood term $\text{NLL}^k (P)$ ensures the learned model fits the observed transition data, while the regularization term $V^{*}_{P, r^k}$ favors models that yield optimistic value estimates, thereby encouraging exploration. After learning the model, we perform planning within this model to derive the corresponding policy.

\begin{algorithm}[htbp]
\caption{Model-based Optimization-based Adversarial Imitation Learning}
\label{alg:mb_ail_oe}

\begin{algorithmic}[1]
\REQUIRE{Reward class $\gR$, Transition class $\gP$, initialized reward $r^0$, policy $\pi^0$ and dataset $\gD^{0} = \emptyset$.}
\FOR{$k = 1, 2, \ldots, K$}
\STATE{Apply $\pi^{k-1}$ to roll out a trajectory $\tau^{k-1}$ and append it to the dataset $\gD^{k} = \gD^{k-1} \cup \{ \tau^{k-1} \}$.}
\STATE{Obtain $r^{k}$ by running a no-regret algorithm to solve the online optimization problem with observed loss functions  $\{ \gL^{i} (r)  \}_{i=0}^{k-1}$ up to an error $\varepsilon^{r}_{\opt}$, where $\gL^{i} (r)  =\widehat{V}^{\pi^i}_{r} - \widehat{V}^{\piE}_{r}$.}
\STATE{Obtain $P^{k}$ by solving the optimization problem \eqref{eq:model_opt_obj} up to an error $\varepsilon^{P}_{\opt}$.}
\STATE{Obtain $\pi^{k}$ by $\pi^{k} = \argmax_{\pi} V^{\pi}_{P^k, r^k}$, where $V^{\pi}_{P^k, r^k}$ denotes the value of $\pi$ under $P^k$ and $r^k$.}
\ENDFOR
\ENSURE{$\widebar{\pi}$ sampled uniformly from $\{ \pi^k \}_{k=1}^K$.}
\end{algorithmic}
\end{algorithm}

\subsection{Theoretical Analysis}
\label{subsec:theoretical_analysis}

Having explained the algorithmic mechanisms of OPT-AIL, we now present its theoretical guarantees. To ensure the sample efficiency of solving RL sub-problems within AIL, we make a structural assumption on the underlying MDP. In particular, we assume that the MDP has a small generalized eluder coefficient (GEC). This coefficient, introduced in \citep{zhong2022posterior}, quantifies the inherent difficulty of exploring the MDP with function approximation in RL. We adapt this concept to AIL, where the reward function evolves across iterations.

\begin{asmp}[Low generalized eluder coefficient \citep{zhong2022posterior}]
\label{asmp:low_gec}
We assume that given a function class $\gF$, a discrepancy function $D$ and $\varepsilon > 0$, the generalized eluder coefficient $d_{\gec} (\varepsilon)$ is the smallest $d $ ($d \geq 0$) such that for any sequence of $\{ r^{k}  \}_{k=1}^K \subseteq \gR$, $\{ f^{k} \}_{k=1}^K \subseteq \gF$ and the deriving policies $\{ \pi^k = \pi (f^k; r^k) \}_{k=1}^{K}$,
\begin{align*}
     \sum_{k=1}^K V (f^k; r^k) - V^{\pi^k}_{r^k} \leq \inf_{\mu > 0} \frac{\mu}{2} \sum_{k=1}^K \sum_{i=1}^{k-1} \expect \ls \sum_{h=1}^H D (f^k, s_h, a_h; r^k) \bigg| \pi^{i} \rs + \frac{d}{2 \mu} + \sqrt{d H K} + \varepsilon H K.
\end{align*}
Here $V (f^k; r^k)$ is a prediction of the optimal value under $r^k$, building upon the function $f^k$.
\end{asmp}
\begin{remark}
\label{rem:gec}
    GEC is a generic complexity measure that can be applied in both the model-free function class and the model-based function class. For the model-free method where $\gF = \gQ$, we can choose $D (f^k, s_h, a_h; r^k) = (f^k(s_h, a_h) - (\gT_h^{r^k} f^k) (s_h, a_h))^2$, $\pi^k$ as the greedy policy regarding $f^k$ and $V (f^k; r^k) = \max_{a \in \gA} Q^k_1 (s_1, a)$. 
    
    As for the model-based case where $\gF = \gP$, we choose $D (f^k, s_h, a_h; r^k) = \SH (P^\star_h (\cdot|s_h, a_h), f^k_h (\cdot|s_h, a_h) )$, where $\SH$ is the squared Hellinger distance between two distributions. Besides, we derive $\pi^k$ as the optimal policy under $f^k$ and $r^k$, i.e., $\pi^k = \argmax_{\pi} V^{\pi}_{f^k, r^k}$, and choose $V (f^k; r^k) = \max_{\pi} V^{\pi}_{f^k, r^k}$.
\end{remark}

\begin{remark}
    To understand GEC, we can interpret $V (f^k; r^k) - V^{\pi^k}_{r^k}$ as the "out-of-distribution" prediction error evaluated on the next distribution induced by $\pi^k$, and interpret $\sum_{i=1}^{k-1} \expect [ \sum_{h=1}^H D (f^k, s_h, a_h; r^k) | \pi^{i} ]$ as the "in-distribution" training error evaluated on historical distributions generated by $\{ \pi^i \}_{i=1}^{k-1}$. From this perspective, GEC is to measure how well the "in-distribution" training error approximates the "out-of-distribution" prediction error. 
\end{remark}

\begin{remark}
    As demonstrated in \citep{zhong2022posterior}, the MDPs with low generalized eluder coefficient form a rich class of MDPs, which covers many well-known MDP instances such tabular MDPs, linear MDPs \citep{jin2020linear} and MDPs with low Bellman eluder dimension \citep{jin2021bellman}. Therefore, the assumption of low GEC is weaker than the tabular \citep{shani21online-al, xu2023provably} and linear MDP assumptions \citep{liu2021provably, viano2024better} used in previous works.   
\end{remark}
Now we are ready to present the theoretical guarantee of OPT-AIL.

\begin{thm}[Complexity Analysis of Model-free OPT-AIL]
\label{thm:mf_opt_ail_complexity}
Under Assumptions \ref{asmp:realizability_reward_class}, \ref{asmp:realizability_and_bellman_completeness_q_class}, and \ref{asmp:low_gec}. Fix any $\varepsilon \in (0, 1]$ and $\delta \in (0, 1]$. Consider Algorithm \ref{alg:mf_ail_oe} with regularization parameter $\lambda_{Q} = c_1 \sqrt{ (K H^3 \log (4 K H \gN_{\rho} (\gQ) \gN_{\rho} (\gR)/\delta) + K^2 H^3 \rho) / d_{\gec} }$, where $d_{\gec} := d_{\operatorname{GEC}} (\varepsilon/H)$, $\rho := c_2 \varepsilon^2 / (H^2 d_{\gec} + H)$, and $c_1, c_2$ are absolute constants. Then with probability at least $1-\delta$, the imitation gap satisfies $V^{\piE} - V^{\widebar{\pi}} \leq \varepsilon + \varepsilon^{r}_{\opt} + (\varepsilon^{Q}_{\opt}/\lambda_Q)$, provided the expert sample complexity and interaction complexity satisfy
\begin{align*}
    &N \gtrsim  \big( H^2 \log ( \max_{h \in [H]} \gN_{\rho} (\gR_h) /\delta )  \big) / \varepsilon^2,
    \\
    &K \gtrsim  \big( H^4 d_{\gec}  \log ( H d_{\gec} \max_{h \in [H]} \gN_{\rho} (\gQ_h) \gN_{\rho} (\gR_h) / (\delta \varepsilon)  )  + H^2 \log (1/\delta) \big) / \varepsilon^2.
\end{align*}
\end{thm}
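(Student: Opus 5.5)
We start from \cref{lem:error_decomposition} and bound the reward error and the policy error separately. Each term will be held below $\varepsilon/2$ plus the corresponding optimization error.

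\textbf{Reward error.} We write $V^{\piE}_{r}-V^{\pi^k}_{r}$ as $\widehat{V}^{\piE}_{r}-\widehat{V}^{\pi^k}_{r}$ plus two estimation errors, and then apply \cref{def:rew_opt_error} with the comparator $r=\truereward\in\gR$ (\cref{asmp:realizability_reward_class}). This gives
\[
\text{reward error}\le \varepsilon^{r}_{\opt}+\frac{2}{K}\max_{r\in\gR}\Bigl|\sum_{k=1}^K\bigl(\widehat{V}^{\pi^k}_{r}-V^{\pi^k}_{r}\bigr)\Bigr|+2\max_{r\in\gR}\bigl|\widehat{V}^{\piE}_{r}-V^{\piE}_{r}\bigr|.
\]
For the expert term we use Hoeffding's inequality over the $N$ i.i.d.\ trajectories, whose returns lie in $[0,H]$. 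A union bound over a $\rho$-cover of each $\gR_h$ handles the supremum; the discretization error is $H\rho\le\varepsilon$ by the choice of $\rho$. This yields $\gO(H\sqrt{\log(\max_h\gN_\rho(\gR_h)/\delta)/N})$, which is $\le\varepsilon/8$ under the stated condition on $N$. For the online term, $\widehat V^{\pi^k}_r-V^{\pi^k}_r$ is a martingale difference sequence, because $\pi^k$ is determined before $\tau^k$ is drawn. We apply Azuma--Hoeffding together with a union bound over the reward cover. Note that $\truereward$ is fixed, so for the comparator term alone no cover is needed; the $r^k$ side does need one, since $r^k$ depends on the data. The result is $\gO(H\sqrt{\log(\cdot/\delta)/K})$, which accounts for the $H^2\log(1/\delta)/\varepsilon^2$ part of the condition on $K$.

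\textbf{Policy error (main step).} We adapt the MEX analysis of \citep{liu2024maximize} to rewards that change over iterations. We split
\[
V^{\piE}_{r^k}-V^{\pi^k}_{r^k}=\bigl(V^{\piE}_{r^k}-\max_a Q^k_1(s_1,a)\bigr)+\bigl(\max_aQ^k_1(s_1,a)-V^{\pi^k}_{r^k}\bigr).
\]
The second bracket is handled by \cref{asmp:low_gec} with the choices of \cref{rem:gec}. It is at most $\frac{\mu}{2}\sum_k\sum_{i<k}\expect[\sum_h(Q^k_h-\gT^{r^k}_hQ^k_{h+1})^2\mid\pi^i]+\frac{d_{\gec}}{2\mu}+\sqrt{d_{\gec}HK}+\varepsilon K$, using $d_{\gec}=d_{\gec}(\varepsilon/H)$. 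For the first bracket we use $V^{\piE}_{r^k}\le V^{\star,r^k}=\max_aQ^{\star,r^k}_1(s_1,a)$. Since $Q^{\star,r^k}\in\gQ$ (\cref{asmp:realizability_and_bellman_completeness_q_class}) and $Q^k$ is an $\varepsilon^{Q}_{\opt}$-minimizer of \eqref{eq:q_opt_obj}, we obtain
\[
\lambda_Q\bigl(\max_aQ^{\star,r^k}_1-\max_aQ^k_1\bigr)\le \BE^k(Q^{\star,r^k})-\BE^k(Q^k)+\varepsilon^{Q}_{\opt}.
\]
We then need a uniform concentration result, proved with Freedman's inequality plus Bellman completeness and a union bound over $\rho$-covers of $\gQ_h\times\gQ_{h+1}\times\gR_h$. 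It should give, simultaneously for all $k$ and all reward/Q pairs, two bounds. The first is $\BE^k(Q^{\star,r^k})\lesssim H^2\iota$. The second is $\BE^k(Q^k)\gtrsim\frac12\sum_{i<k}\expect[\sum_h(Q^k_h-\gT^{r^k}_hQ^k_{h+1})^2\mid\pi^i]-H^2\iota$. Here $\iota=H\log(4KH\gN_\rho(\gQ)\gN_\rho(\gR)/\delta)+KH\rho$. The main obstacle is this step, because $r^k$ is data-dependent and changes with $k$. That is why the reward cover must appear inside the log and why the $K^2H^3\rho$ term appears in $\lambda_Q$.

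\textbf{Combining.} Summing over $k$ and choosing $\mu=1/\lambda_Q$, the in-distribution squared Bellman errors cancel against the $-\BE^k(Q^k)/\lambda_Q$ terms, leaving
\[
\text{policy error}\le \frac1K\Bigl(\frac{KH^2\iota}{\lambda_Q}+\frac{\lambda_Qd_{\gec}}{2}+\sqrt{d_{\gec}HK}\Bigr)+\varepsilon+\frac{\varepsilon^{Q}_{\opt}}{\lambda_Q}.
\]
Substituting the stated $\lambda_Q\asymp\sqrt{KH^2\iota/d_{\gec}}$ balances the first two terms at $\gO(H\sqrt{d_{\gec}\iota/K})$, i.e.\ $\gO(\sqrt{H^3 d_{\gec}\log(\cdot)/K})$ up to the $\rho$ term. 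With $\rho$ as chosen, this is below $\varepsilon/4$ once $K\gtrsim H^4d_{\gec}\log(\cdot)/\varepsilon^2$, where the log absorbs the $K$ factor. Rescaling the GEC slack $\varepsilon$ by a constant and adding the reward error then gives the claimed bound $\varepsilon+\varepsilon^r_{\opt}+\varepsilon^Q_{\opt}/\lambda_Q$, with probability $1-\delta$ by a union bound over the concentration events.
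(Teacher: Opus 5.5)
Your policy-error argument is essentially the paper's. It compares $Q^k$ with $Q^{\star,r^k}$ through the regularized objective, uses Freedman-based upper and lower bounds on $\BE^k$, and applies the GEC with $\mu=1/\lambda_Q$ so that the in-distribution Bellman errors cancel. Your GOLF-style cover of $\gQ_h\times\gQ_{h+1}\times\gR_h$ is a harmless variant of the paper's cover of $\gQ\times\gR$ plus \cref{lem:perturbabtion_analysis}.

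The genuine gap is in the reward error. Your first display implicitly uses
\[
\frac{1}{K}\sum_{k=1}^K\bigl(V^{\pi^k}_{r^k}-\widehat{V}^{\pi^k}_{r^k}\bigr)\le\frac{1}{K}\max_{r\in\gR}\Bigl|\sum_{k=1}^K\bigl(\widehat{V}^{\pi^k}_{r}-V^{\pi^k}_{r}\bigr)\Bigr|,
\]
which is false in general. The left side uses a different reward in each summand, so a maximum over one common $r$ does not control it. A union bound over a $\rho$-cover of $\gR$ does not repair this:
\begin{itemize}
\item It still only controls sums with a single fixed cover element. Covering whole sequences $(\widehat{r}^1,\dots,\widehat{r}^K)$ would put $K\log\gN_\rho(\gR)$ in the exponent and destroy the rate.
\item Bounding each summand uniformly in $r$ is useless, because $\widehat{V}^{\pi^k}_r$ is a one-trajectory estimate with error of order $H$.
\end{itemize}
The missing observation, which is exactly what the paper uses, is that $r^k$ is computed from $\gDE$ and $\tau^0,\dots,\tau^{k-1}$ only (Line 3 of \cref{alg:mf_ail_oe}). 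So, like $\pi^k$, it is fixed before $\tau^k$ is rolled out. Hence $V^{\pi^k}_{r^k}-\widehat{V}^{\pi^k}_{r^k}$ is itself a martingale difference sequence bounded by $H$. Azuma--Hoeffding then gives order $H\sqrt{\log(1/\delta)/K}$ with no cover at all. Your remark that the $r^k$ side ``does need'' a cover is therefore backwards. The data dependence of $r^k$ forces a cover only for the expert term $\widehat{V}^{\piE}_{r^k}-V^{\piE}_{r^k}$, where your uniform bound over $\gR$ is correct.

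A smaller point concerns your final arithmetic. With your $\iota$, the discretization enters the policy error at order $\sqrt{H^3 d_{\gec}\rho}$. Indeed, already $\lambda_Q d_{\gec}/(2K)\ge\tfrac{c_1}{2}\sqrt{H^3 d_{\gec}\rho}$ for the stated $\lambda_Q$. For $\rho=c_2\varepsilon^2/(H^2 d_{\gec}+H)$ this is of order $\sqrt{H}\,\varepsilon$, not below $\varepsilon/4$. The paper's proof actually takes $\rho=\varepsilon^2/(54H^3 d_{\gec}+4H)$. With $\rho\lesssim\varepsilon^2/(H^3 d_{\gec})$ your bookkeeping goes through, and the change only affects the covering numbers inside the logarithms.
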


\begin{thm}[Complexity Analysis of Model-based OPT-AIL]
\label{thm:mb_opt_ail_complexity}
    Under Assumptions \ref{asmp:realizability_reward_class}, \ref{asmp:realizability_model_class} and \ref{asmp:low_gec}. Fix any $\varepsilon \in (0, 1]$ and $\delta \in (0, 1]$, consider \cref{alg:mb_ail_oe} with regularization parameter $\lambda_{P} = c_1 \sqrt{ (K H \log ( H \max_{h \in [H]} \gN_{\rho} (\gP_h; \log)  /\delta) + K^2 H \rho) / d_{\gec} }$, where $d_{\gec} := d_{\operatorname{GEC}} (\varepsilon/H)$, $\rho := c_2 \varepsilon^2 / (H d_{\gec} + H)$, and $c_1, c_2$ are absolute constants. Then with probability at least $1-\delta$, the imitation gap satisfies $V^{\piE} - V^{\widebar{\pi}} \leq \varepsilon + \varepsilon^{r}_{\opt} + (\varepsilon^{P}_{\opt}/\lambda_P)$, provided the expert sample complexity and interaction complexity satisfy
    \begin{align*}
    &N \gtrsim  \big( H^2 \log (\max_{h \in [H]} \gN_{\rho} (\gR_h) /\delta )  \big) / \varepsilon^2,
    \\
    &K \gtrsim  (d_{\gec}  H + H^2) \log \big( H \max_{h \in [H]} \gN_{\rho} (\gP_h; \log) / \delta \big) / \varepsilon^2.
\end{align*}
\end{thm}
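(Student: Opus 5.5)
Following the model-free analysis, I would start from \cref{lem:error_decomposition} and bound the reward error and the policy error separately, with the requirement on $N$ coming from the reward error and the requirement on $K$ from the policy error.

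\textbf{Reward error.} The reward-error step is identical to the model-free case, since the reward update is the same in both algorithms. Write
\[
V^{\piE}_{\truereward}-V^{\pi^k}_{\truereward}-(V^{\piE}_{r^k}-V^{\pi^k}_{r^k})
\]
in terms of the true losses $V^{\pi^k}_r-V^{\piE}_r$, and add and subtract the empirical losses $\gL^k(r)$. Because $\truereward\in\gR$ (\cref{asmp:realizability_reward_class}), comparing against the fixed comparator $r=\truereward$ gives at most $\varepsilon^r_{\opt}$ by Definition~\ref{def:rew_opt_error}. Two concentration terms remain.
\begin{itemize}
\item The expert term $\max_{r}|\widehat V^{\piE}_r-V^{\piE}_r|$. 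Hoeffding's inequality plus a union bound over a $\rho$-cover of each $\gR_h$ makes it $\lesssim H\sqrt{\log(\max_h\gN_\rho(\gR_h)/\delta)/N}+H\rho$. This yields the stated condition on $N$.
\item The online term $\frac1K\sum_k(\widehat V^{\pi^k}_{r^k}-V^{\pi^k}_{r^k})$, together with the same term at $\truereward$. This is a martingale difference sum, since $r^k$ is chosen before $\tau^k$ is drawn. Azuma--Hoeffding bounds it by $H\sqrt{\log(1/\delta)/K}$, which contributes the $H^2\log(1/\delta)/\varepsilon^2$ part of $K$.
\end{itemize}

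\textbf{Policy error: optimism.} Split the policy error as
\[
V^{\piE}_{r^k}-V^{\pi^k}_{r^k}=\bigl(V^{\piE}_{r^k}-V^*_{P^k,r^k}\bigr)+\bigl(V^*_{P^k,r^k}-V^{\pi^k}_{r^k}\bigr).
\]
For the first term, note $V^{\piE}_{r^k}\le V^*_{P^\star,r^k}$. Then use near-optimality of $P^k$ against the comparator $P^\star\in\gP$ (\cref{asmp:realizability_model_class}):
\[
\LOG^k(P^k)-\lambda_P V^*_{P^k,r^k}\le \LOG^k(P^\star)-\lambda_P V^*_{P^\star,r^k}+\varepsilon^P_{\opt}.
\]
Rearranging gives
\[
V^*_{P^\star,r^k}-V^*_{P^k,r^k}\le \frac{\LOG^k(P^\star)-\LOG^k(P^k)}{\lambda_P}+\frac{\varepsilon^P_{\opt}}{\lambda_P}.
\]

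\textbf{Policy error: MLE concentration.} Next I would use the standard MLE/Hellinger concentration via the log-covering number. With probability $1-\delta$, uniformly over $P\in\gP$ and $k\le K$,
\[
\LOG^k(P^\star)-\LOG^k(P)\le -\sum_{i<k}\sum_h \expect_{\pi^i}\bigl[\SH(P^\star_h,P_h)\bigr]+\beta,
\qquad \beta\asymp H\log\bigl(HK\max_h\gN_\rho(\gP_h;\log)/\delta\bigr)+KH\rho.
\]
The negative term is exactly the in-distribution quantity appearing in the GEC.

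\textbf{Policy error: GEC.} The second term is handled by \cref{asmp:low_gec} with the model-based choice of \cref{rem:gec}, using $\mu=2/\lambda_P$ so that the GEC's training-error term cancels the negative Hellinger sum above. Summing over $k$ gives
\[
\sum_k\bigl(V^{\piE}_{r^k}-V^{\pi^k}_{r^k}\bigr)\le \frac{K\beta+K\varepsilon^P_{\opt}}{\lambda_P}+\frac{\lambda_P d_{\gec}}{4}+\sqrt{d_{\gec}HK}+\varepsilon K.
\]
Here I use $d_{\gec}(\varepsilon/H)$, so the slack term $\varepsilon HK\cdot(1/H)$ becomes $\varepsilon K$. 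With the chosen $\lambda_P\asymp\sqrt{K\beta/d_{\gec}}$, this is $\lesssim\sqrt{Kd_{\gec}\beta}+\sqrt{d_{\gec}HK}+\varepsilon K+K\varepsilon^P_{\opt}/\lambda_P$. Dividing by $K$ and choosing $\rho$ small enough to make the $KH\rho$ part of $\beta$ negligible yields the stated condition on $K$.

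\textbf{Main obstacle.} I expect the hardest step to be the MLE concentration. It must hold uniformly over the data-dependent $P^k$, and the $r^k$ are adaptively chosen. Fortunately the likelihood does not depend on $r^k$, so only the cover of $\gP$ is needed, which explains the absence of $\gN(\gR)$ in $K$. I would prove it via the exponential supermartingale $\prod\sqrt{P/P^\star}$ plus the log-cover, giving $\expect_{\pi^i}[\SH]$ with discretization error $\rho$ per step.
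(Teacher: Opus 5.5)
Your proposal is correct and follows essentially the same route as the paper. Both arguments use the decomposition of \cref{lem:error_decomposition}, bound the reward error via the comparator $\truereward$ together with Hoeffding/Azuma over a $\rho$-cover of each $\gR_h$, compare $P^k$ against $P^\star$ in the optimism step, apply the MLE/Hellinger concentration of \cref{lem:transition_model_concentration} via the exponential supermartingale and log-cover, and let the GEC training-error term cancel the negative Hellinger sum. The differences are only cosmetic: the paper's cover step loses a factor of two (coefficient $-\tfrac{1}{2}$ on the Hellinger sum, hence $\mu = 1/\lambda_P$ rather than your $\mu = 2/\lambda_P$), and since \cref{lem:martingale_chernoff} is time-uniform, the extra $K$ inside your logarithm is unnecessary; dropping it is what makes the bound match the stated $\lambda_P$ and $K$ exactly.
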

The proof of Theorems \ref{thm:mf_opt_ail_complexity} and \ref{thm:mb_opt_ail_complexity} is deferred to Appendix A.

\begin{remark}
Theorems \ref{thm:mf_opt_ail_complexity} and \ref{thm:mb_opt_ail_complexity} establish that both model-free and model-based OPT-AIL achieve polynomial expert sample complexity and interaction complexity under general function approximation. To the best of our knowledge, these results provide the first provably efficient online AIL algorithms for the general function approximation setting.
\end{remark}

\begin{remark}
OPT-AIL achieves an improvement in expert sample complexity compared to behavioral cloning \citep{foster2024behavior}, reducing the dependence by a factor of $\mathcal{O}(H)$. This improvement demonstrates that OPT-AIL provably mitigates the compounding error problem inherent in behavioral cloning under general function approximation.
\end{remark}

\begin{remark}
Finally, both model-free and model-based OPT-AIL only require the approximate optimization of two objectives, thereby facilitating practical implementation with neural networks, which will be presented in the next section.
\end{remark}

\section{Practical Implementation of OPT-AIL}
\label{sec:practical_implementation_of_ail_oe}
In this section, we provide a practical implementation for OPT-AIL, which is based on the stochastic-gradient-based methods; see Algorithms \ref{alg:practical_ail_oe} and \ref{alg:practical_mb_opt_ail} for an overview. We elaborate on the practical reward update and policy update in detail as follows.

\subsection{Practical Reward Update}
\label{subsec:practical_reward_update}
We now detail the practical implementation of the reward update using an online optimization approach. Recall that line 3 of Algorithms \ref{alg:mf_ail_oe} and \ref{alg:mb_ail_oe} employs a no-regret algorithm to solve the online optimization problem. We implement this using the Follow-the-Regularized-Leader (FTRL) algorithm \citep{Hazan16introduction-to-oco}, a classical no-regret optimization method. In iteration $k$, FTRL minimizes the sum of all historical loss functions with a regularization.
\begin{equation}
\label{eq:practical_reward_update}
     \min_{r \in \gR} \ell^{k} (r) := \sum_{i=0}^{k-1} \gL^{i} (r) + \beta \psi (r) = k \bigg( \expect_{\tau \sim \gD^{k}} \bigg[ \sum_{h=1}^H r_h (s^i_h, a^i_h) \bigg] - \expect_{\tau \sim \gDE}\bigg[  \sum_{h=1}^H r_h (s^i_h, a^i_h) \bigg] \bigg)  + \beta \psi (r),  
\end{equation}
where the expectation $\expect_{\gD} [\cdot]$ is taken over the empirical distribution of dataset $\gD$. Here $\psi (r)$ is the regularization term. In practice, we choose $\psi (r)$ as the gradient penalty \citep{Arjovsky2017wgan} of the reward model, which helps stabilize the learning process \citep{Kostrikov19dac}. According to \cref{eq:practical_reward_update}, the reward learner seeks to maximize the value gap between the expert policy and all previous policies.

Notably, \cref{eq:practical_reward_update} utilizes all historical samples in $\gD^k$ for the reward update, which aligns with off-policy reward learning \citep{Kostrikov19dac, Kostrikov20value_dice}. In particular, applying FTRL for the reward update and off-policy reward learning share the same main objective. While previous works \citep{Kostrikov19dac, Kostrikov20value_dice} demonstrated the practical effectiveness of off-policy reward learning, they lacked theoretical justification. Our work provides this missing explanation through an online optimization lens: the off-policy learning paradigm, which inherently aligns with FTRL, effectively controls the reward optimization error.

\subsection{Practical Policy Update}
In the following part, we introduce the practical implementations of the model-free and model-based policy updates.
\subsubsection{Practical Model-free Policy Update}
\begin{algorithm}[htbp]
\caption{Practical Implementation of Model-free OPT-AIL}
\label{alg:practical_ail_oe}
\begin{algorithmic}[1]
\REQUIRE{Initialized reward $r^0$, Q-value $Q^0$, target Q-value $\widebar{Q}^0 = Q^0$, policy $\pi^0$ and dataset $\gD^{0} = \emptyset$.}
\FOR{$k = 1, 2, \ldots, K$}
\STATE{Apply $\pi^{k-1}$ to roll out a trajectory $\tau^{k-1}$ and append it to the dataset $\gD^{k} = \gD^{k-1} \cup \{ \tau^{k-1} \}$.}
\STATE{Update the reward function by $r^{k} \leftarrow r^{k-1} - \alpha_{r} \nabla \ell^{k} (r)$ from \cref{eq:practical_reward_update}.}
\STATE{Update the Q-value function by $Q^{k} \leftarrow Q^{k-1} - \alpha_{Q} \nabla \ell^{k} (Q) $ from \cref{eq:practical_q_update}.}
\STATE{Update the policy by $\pi^{k} \leftarrow \pi^{k-1} + \alpha_{\pi} \nabla \ell^{k} (\pi)$ from \cref{eq:mf_practica_policy_update}.}
\STATE{Update the target Q-value by $\widebar{Q}^{k} \leftarrow \tau Q^{k} + (1-\tau) \widebar{Q}^{k-1}$}
\ENDFOR
\end{algorithmic}
\end{algorithm}

For the practical model-free policy update, we adopt the actor-critic framework \citep{haarnoja2018sac, Kostrikov19dac}, maintaining both a policy model $\pi$ and a Q-function model $Q$. Recall that line 4 of \cref{alg:mf_ail_oe} learns the Q-value function by minimizing the optimism-regularized Bellman error. Following \citep{cheng2022adversarially, bhardwaj2024adversarial}, we implement this principle using the temporal difference (TD) loss \citep{li2022note} of the Q-function model and its delayed target to approximate the theoretical Bellman error. Then we arrive at the following objective.
\begin{equation}
\label{eq:practical_q_update}
\min_{Q \in \mathcal{Q}} \ell^{k} (Q) := \expect_{\tau \sim \gD^k} \bigg[ \sum_{h=1}^H \bigg( Q_h (s_h, a_h) - r^k_h - \widebar{Q}^{k-1}_{h+1} (s_{h+1}, \pi^{k-1}) \bigg)^2 \bigg] - \lambda_{Q} Q_1 (s_1, \pi^{k-1}).
\end{equation}
Here $\widebar{Q} = \{ \widebar{Q}_1, \ldots, \widebar{Q}_H \}$ is the delayed target Q-function model. Besides, we define that $\widebar{Q}^{k-1}_{h+1} (s_{h+1}, \pi^{k-1}) := \expect_{a^\prime \sim \pi^{k-1}_{h+1} (\cdot|s_{h+1})} [ \widebar{Q}_{h+1} (s_{h+1}, a^\prime) ]$ where the previous greedy policy $\pi^{k-1}$ is used to approximate the maximum operator \citep{haarnoja2018sac}. Consequently, we derive the greedy policy by optimizing the objective of 
\begin{align}
\label{eq:mf_practica_policy_update}
    \max_{\pi} \ell^k (\pi) := \expect_{\tau \sim \gD^k} \ls \sum_{h=1}^H Q^{k}_{h} (s_h, \pi) \rs.
\end{align}

\subsubsection{Practical Model-based Policy Update}
\begin{algorithm}[htbp]
\caption{Practical Implementation of Model-based OPT-AIL}
\label{alg:practical_mb_opt_ail}
\begin{algorithmic}[1]
\REQUIRE{Initialized reward $r^0$, transition $P^0$, Q-value $Q^0$, target Q-value $\widebar{Q}^0 = Q^0$, policy $\pi^0$ and dataset $\gD^{0} = \emptyset$.}
\FOR{$k = 1, 2, \ldots, K$}
\STATE{Apply $\pi^{k-1}$ to roll out a trajectory $\tau^{k-1}$ and append it to the dataset $\gD^{k} = \gD^{k-1} \cup \{ \tau^{k-1} \}$.}
\STATE{Update the reward function by $r^{k} \leftarrow r^{k-1} - \alpha_{r} \nabla \ell^{k} (r)$ from \cref{eq:practical_reward_update}.}
\STATE{Rollout $\pi^{k-1}$ in $P^{k-1}$ to collect the dataset $\gD^{\pi^{k-1}, P^{k-1}}$.}
\STATE{Update the Q-value function by $Q^{k} \leftarrow Q^{k-1} - \alpha_{Q} \nabla \ell^{k} (Q) $ from \cref{eq:mb_practical_Q_update}.}
\STATE{Update the transition by $P^{k} \leftarrow P^{k-1} - \alpha_{P} \nabla \ell^{k} (P) $ from \cref{eq:model_gradient}.}
\STATE{Update the policy by $\pi^{k} \leftarrow \pi^{k-1} + \alpha_{\pi} \nabla \ell^{k} (\pi)$ from \cref{eq:mb_practical_policy_update}.}
\STATE{Update the target Q-value by $\widebar{Q}^{k} \leftarrow \tau Q^{k} + (1-\tau) \widebar{Q}^{k-1}$}
\ENDFOR
\end{algorithmic}
\end{algorithm}
For the practical model-based policy update, we also employ the actor-critic framework with three model components: a transition model $P$, policy model $\pi$ and Q-function model $Q$. See lines 4-8 in \cref{alg:practical_mb_opt_ail} for an overview. First, in line 4 of \cref{alg:mb_ail_oe}, the transition model is learned based on optimism-regularized MLE. 
\begin{align*}
    \min_{P} \min_{\pi}  - \expect_{\tau \sim \gD^k} \ls \sum_{h=1}^H \log \lp P_h \lp s^i_{h+1} | s^i_h, a^i_h \rp \rp \rs - \lambda_{P} V^{\pi}_{P, r^k}.  
\end{align*}
We optimize this joint objective using an alternating update strategy. Specifically, in iteration $k$, we update the transition model by optimizing the following objective.
\begin{align*}
\min_{P} \ell^k (P) := - \expect_{\tau \sim \gD^k} \ls \sum_{h=1}^H \log \lp P_h \lp s^i_{h+1} | s^i_h, a^i_h \rp \rp \rs - \lambda_{P} V^{\pi^{k-1}}_{P, r^k}.
\end{align*}
Here $\pi^{k-1}$ is the policy model obtained in the previous iteration $k-1$. Stochastic-gradient-based methods can be applied to optimize the above objective. In particular, following \citep{rigter2022rambo}, we can calculate the gradient as
\begin{equation}
\label{eq:model_gradient}
    \begin{split}
        &\nabla \ell^k (P) := - \expect_{\tau \sim \gD^k} \ls \sum_{h=1}^H \nabla \log \lp P_h \lp s_{h+1} | s_h, a_h \rp \rp \rs
    \\
    & - \lambda_{P} \expect_{\pi^{k-1}, P} \bigg[  \sum_{h=1}^H \nabla \log \lp P_h \lp s_{h+1} | s_h, a_h \rp \rp \bigg( r^k_h (s_h, a_h) + Q^{\pi^{k-1}, P, r^k}_{h+1} (s_{h+1}, a_{h+1}) - Q^{\pi^{k-1}, P, r^k}_h (s_{h}, a_h) \bigg)  \bigg].
    \end{split}
\end{equation}
To approximate the term $Q^{\pi^{k-1}, P, r^k}_h$ in \cref{eq:model_gradient}, we employ the TD loss to learn a Q-function model.
\begin{equation}
\label{eq:mb_practical_Q_update}
\min_{Q} \ell^k (Q) := \expect_{\tau \sim \gD^{\pi^{k-1}, P}} \bigg[ \sum_{h=1}^H \bigg( Q_h (s_h, a_h) - r^k_h (s_h, a_h) - \widebar{Q}^{k-1}_{h+1} (s_{h+1}, \pi^{k-1}) \bigg)^2 \bigg].
\end{equation}
Here $\gD^{\pi^{k-1}, P}$ is the set of trajectories collected by $\pi^{k-1}$ in $P$. Once the Q-function model $Q^k$ is obtained, we update the transition model using stochastic-gradient-based methods on \eqref{eq:model_gradient}.

Finally, we update the policy by maximizing the previously learned Q-function model.
\begin{align}
\label{eq:mb_practical_policy_update}
    \max_{\pi} \ell^k (\pi) := \expect_{\tau \sim \gD^k} \ls \sum_{h=1}^H Q^{k}_{h} (s_h, \pi) \rs.
\end{align}

\section{Experiments}\label{sec:exp}
In this section, we evaluate the expert sample efficiency and environment interaction efficiency of OPT-AIL through experiments. Below, we provide a brief overview of the experimental set-up, with detailed information in Appendix B.
\subsection{Experiment Set-up}
\textbf{Environment.} We conduct experiments on 8 tasks sourced from the feature-based DMControl benchmark \citep{tassa2018deepmind}, a leading benchmark in IL that offers a diverse set of continuous control tasks. For each task, we adopt online DrQ-v2 \citep{yarats2021mastering} to train an agent with sufficient environment interactions and regard the resultant policy as the expert policy. Then we roll out this expert policy to collect expert demonstrations. Each algorithm is tested over five trials with different random seeds, and in each run, we evaluate the policy return using Monte Carlo approximation with 10 trajectories.
\\
\textbf{Baselines.} Existing theoretical AIL approaches such as MB-TAIL \citep{xu2023provably} and OGAIL \citep{liu2021provably} rely on count-based or covariance-based bonuses that are difficult to implement with neural network approximations. Thus, we do not include these methods in our experiments. 
Instead, we compare OPT-AIL against prior deep IL methods: BC \citep{Pomerleau91bc}, PPIL \citep{viano2022proximal}, FILTER \citep{swamy2023inverse}, CMIL \citep{kolev2024efficient}, HyPE \citep{ren2024hybrid} and HyPER \citep{ren2024hybrid}, despite that most of them lack theoretical guarantees. Specifically, PPIL, FILTER, and HyPE are leading model-free AIL approaches, while CMIL and HyPER represent the state-of-the-art in model-based methods. We refer to our model-free and model-based variants as MF OPT-AIL and MB OPT-AIL, respectively. Implementation details are provided in Appendix B.

\definecolor{lightblue}{RGB}{173, 216, 230}
\definecolor{pink}{RGB}{255, 182, 193}

\begin{table*}[htbp]
\caption{Policy returns on 8 DMControl tasks over 5 random seeds following 500k environment interactions for model-free methods or 200k environment interactions for model-based methods. Here $^*$ denotes the model-based methods and the remaining are model-free. We use $\pm$ to denote the standard deviation over 5 random seeds. We highlight both the maximum mean values and values within one standard deviation thereof for model-free and model-based methods, respectively.}
\label{table:averaged_discounted_return_gradient_penalty}

\begingroup
\scriptsize
\setlength{\tabcolsep}{2.5pt}
\renewcommand{\arraystretch}{1.08}
\newlength{\hlwd}
\setlength{\hlwd}{5em}
\newlength{\hlwds}
\setlength{\hlwds}{2.5em}
\newcommand{\hlblue}[1]{\colorbox{lightblue}{\makebox[\hlwd][c]{#1}}}
\newcommand{\hlpink}[1]{\colorbox{pink}{\makebox[\hlwd][c]{#1}}}
\newcommand{\hlblues}[1]{\colorbox{lightblue}{\makebox[\hlwds][c]{#1}}}
\newcommand{\hlpinks}[1]{\colorbox{pink}{\makebox[\hlwds][c]{#1}}}
\centering
\resizebox{\textwidth}{!}{%
\begin{tabular}{cc!{\vrule}c!{\vrule}cccccc!{\vrule}cc}
\toprule
\textbf{Demos} & \textbf{DMC Task} & \textbf{Expert} & \textbf{BC} & \textbf{PPIL} & \textbf{FILTER} & \textbf{HyPE} & \textbf{HyPER*} & \textbf{CMIL*} & \textbf{MF OPT-AIL} & \textbf{MB OPT-AIL*} \\ \midrule

\multirow[=]{8}{*}{1} 
&Cartpole Swingup  &858.5  &307.1  &630.3  &421.4  &664.7  &\hlpinks{861.1}  &711.8  &\hlblue{862.1{\scriptsize $\pm0.9$}}  &858.6{\scriptsize $\pm3.5$}  \\
&Cheetah Run  &890.2  &64.7  &88.1  &212.9  &121.0  &406.1  &195.1  &\hlblue{348.7{\scriptsize $\pm82.0$}}  &\hlpink{577.0{\scriptsize $\pm77.1$}}  \\
&Finger Spin  &976.4  &0.8  &18.6  &910.8  &924.4  &\hlpinks{961.1}  &1.1  &\hlblue{970.6{\scriptsize $\pm1.1$}}  &941.2{\scriptsize $\pm14.6$}  \\
&Hopper Hop  &318.7  &1.2  &20.1  &0.0  &29.1  &\hlpinks{260.4}  &138.8  &\hlblue{284.7{\scriptsize $\pm15.9$}}  &\hlpink{274.7{\scriptsize $\pm21.9$}}  \\
&Hopper Stand  &939.5  &2.4  &61.6  &2.4  &44.1  &\hlpinks{360.0}  &198.5  &\hlblue{374.8{\scriptsize $\pm16.3$}}  &\hlpink{358.0{\scriptsize $\pm30.3$}}  \\
&Walker Run  &778.2  &32.6  &73.6  &331.5  &656.4  &662.6  &81.1  &\hlblue{753.0{\scriptsize $\pm7.9$}}  &\hlpink{734.9{\scriptsize $\pm11.7$}}  \\
&Walker Stand  &970.0  &192.4  &731.6  &655.8  &937.4  &692.7  &\hlpinks{959.0}  &\hlblue{957.2{\scriptsize $\pm10.9$}}  &937.8{\scriptsize $\pm13.2$}  \\
&Walker Walk  &961.4  &54.9  &874.2  &674.2  &\hlblues{895.0}  &703.2  &\hlpinks{832.3}  &\hlblue{913.1{\scriptsize $\pm28.5$}}  &\hlpink{849.1{\scriptsize $\pm26.7$}}  \\
\noalign{\vskip 1pt}
\cdashline{2-11}
\noalign{\vskip 2pt}
&Average  &774.6  &82.0  &312.3  &401.1  &534.0  &613.4  &389.7  &\hlblue{683.0{\scriptsize $\pm10.7$}}  &\hlpink{691.4{\scriptsize $\pm9.2$}}  \\

\midrule
\multirow[=]{8}{*}{4} 
&Cartpole Swingup  &858.5  &817.4  &680.3  &264.7  &714.2  &\hlpinks{862.0}  &697.3  &\hlblue{860.5{\scriptsize $\pm0.8$}}  &857.0{\scriptsize $\pm3.4$}  \\
&Cheetah Run  &890.2  &81.1  &121.5  &162.3  &313.5  &293.1  &165.1  &\hlblue{625.0{\scriptsize $\pm64.1$}}  &\hlpink{556.0{\scriptsize $\pm59.7$}}  \\
&Finger Spin  &976.4  &6.2  &724.7  &721.0  &963.6  &\hlpinks{956.1}  &428.9  &\hlblue{971.7{\scriptsize $\pm3.3$}}  &903.4{\scriptsize $\pm90.0$}  \\
&Hopper Hop  &318.7  &2.6  &74.7  &0.1  &41.4  &261.5  &172.7  &\hlblue{283.6{\scriptsize $\pm11.4$}}  &\hlpink{287.9{\scriptsize $\pm4.9$}}  \\
&Hopper Stand  &939.5  &6.1  &125.6  &4.0  &97.7  &\hlpinks{356.6}  &\hlpinks{495.8}  &\hlblue{368.9{\scriptsize $\pm16.1$}}  &\hlpink{383.8{\scriptsize $\pm12.2$}}  \\
&Walker Run  &778.2  &30.3  &683.3  &461.5  &691.6  &\hlpinks{741.3}  &158.9  &\hlblue{766.4{\scriptsize $\pm6.0$}}  &729.8{\scriptsize $\pm12.0$}  \\
&Walker Stand  &970.0  &296.5  &902.9  &919.0  &900.2  &660.7  &547.6  &\hlblue{945.3{\scriptsize $\pm19.9$}}  &\hlpink{933.3{\scriptsize $\pm16.7$}}  \\
&Walker Walk  &961.4  &236.6  &667.0  &834.4  &\hlblues{950.5}  &\hlpinks{873.8}  &\hlpinks{870.6}  &937.3{\scriptsize $\pm20.3$}  &\hlpink{891.0{\scriptsize $\pm30.2$}}  \\
\noalign{\vskip 1pt}
\cdashline{2-11}
\noalign{\vskip 2pt}
&Average  &774.6  &184.6  &497.5  &420.9  &584.1  &625.6  &439.4  &\hlblue{719.9{\scriptsize $\pm7.7$}}  &\hlpink{692.8{\scriptsize $\pm20.0$}}  \\

\midrule
\multirow[=]{8}{*}{7} 
&Cartpole Swingup  &858.5  &861.7  &852.2  &453.6  &613.3  &\hlpinks{861.6}  &628.6  &\hlblue{863.1{\scriptsize $\pm0.6$}}  &859.9{\scriptsize $\pm3.0$}  \\
&Cheetah Run  &890.2  &66.5  &135.0  &163.7  &289.7  &372.8  &126.4  &\hlblue{733.1{\scriptsize $\pm53.3$}}  &\hlpink{534.2{\scriptsize $\pm16.0$}}  \\
&Finger Spin  &976.4  &20.1  &951.0  &874.8  &961.3  &\hlpinks{967.0}  &187.6  &\hlblue{971.5{\scriptsize $\pm2.9$}}  &\hlpink{962.9{\scriptsize $\pm6.4$}}  \\
&Hopper Hop  &318.7  &1.0  &136.8  &0.3  &60.6  &\hlpinks{259.1}  &172.8  &\hlblue{291.0{\scriptsize $\pm3.4$}}  &\hlpink{269.3{\scriptsize $\pm20.4$}}  \\
&Hopper Stand  &939.5  &3.4  &232.3  &2.3  &67.7  &\hlpinks{357.4}  &\hlpinks{381.6}  &\hlblue{399.4{\scriptsize $\pm5.9$}}  &\hlpink{356.8{\scriptsize $\pm19.0$}}  \\
&Walker Run  &778.2  &41.5  &670.5  &295.9  &655.0  &\hlpinks{738.6}  &58.2  &\hlblue{739.5{\scriptsize $\pm27.5$}}  &707.2{\scriptsize $\pm31.9$}  \\
&Walker Stand  &970.0  &276.4  &782.2  &885.7  &\hlblues{972.3}  &606.7  &513.6  &961.8{\scriptsize $\pm11.7$}  &\hlpink{941.1{\scriptsize $\pm7.6$}}  \\
&Walker Walk  &961.4  &232.3  &255.6  &772.2  &\hlblues{934.3}  &\hlpinks{909.4}  &806.7  &\hlblue{939.3{\scriptsize $\pm7.7$}}  &886.6{\scriptsize $\pm8.0$}  \\
\noalign{\vskip 1pt}
\cdashline{2-11}
\noalign{\vskip 2pt}
&Average  &774.6  &187.9  &501.9  &431.1  &569.3  &634.1  &359.4  &\hlblue{737.3{\scriptsize $\pm4.8$}}  &\hlpink{689.7{\scriptsize $\pm8.1$}}  \\

\midrule
\multirow[=]{8}{*}{10}
&Cartpole Swingup  &858.5  &\hlblues{861.8}  &809.3  &271.3  &394.8  &\hlpinks{861.5}  &742.3  &\hlblue{861.5{\scriptsize $\pm1.2$}}  &\hlpink{858.8{\scriptsize $\pm4.4$}}  \\
&Cheetah Run  &890.2  &77.0  &137.2  &206.9  &363.6  &337.3  &27.8  &\hlblue{870.7{\scriptsize $\pm28.7$}}  &\hlpink{570.1{\scriptsize $\pm91.2$}}  \\
&Finger Spin  &976.4  &41.3  &935.6  &889.6  &962.3  &\hlpinks{955.4}  &620.8  &\hlblue{971.3{\scriptsize $\pm3.2$}}  &941.9{\scriptsize $\pm34.8$}  \\
&Hopper Hop  &318.7  &2.3  &286.5  &0.0  &70.6  &\hlpinks{264.4}  &155.0  &\hlblue{294.6{\scriptsize $\pm2.0$}}  &\hlpink{267.5{\scriptsize $\pm23.1$}}  \\
&Hopper Stand  &939.5  &162.6  &368.3  &3.7  &71.1  &\hlpinks{381.8}  &\hlpinks{436.6}  &\hlblue{399.4{\scriptsize $\pm5.9$}}  &\hlpink{359.8{\scriptsize $\pm25.7$}}  \\
&Walker Run  &778.2  &40.4  &656.5  &459.4  &677.3  &\hlpinks{734.1}  &53.6  &\hlblue{739.5{\scriptsize $\pm27.5$}}  &\hlpink{743.1{\scriptsize $\pm15.1$}}  \\
&Walker Stand  &970.0  &370.3  &830.8  &915.9  &\hlblues{974.7}  &469.9  &792.9  &967.5{\scriptsize $\pm6.0$}  &\hlpink{950.8{\scriptsize $\pm13.5$}}  \\
&Walker Walk  &961.4  &257.6  &467.2  &711.3  &\hlblues{935.2}  &\hlpinks{921.7}  &772.7  &\hlblue{940.3{\scriptsize $\pm8.1$}}  &\hlpink{908.3{\scriptsize $\pm8.4$}}  \\
\noalign{\vskip 1pt}
\cdashline{2-11}
\noalign{\vskip 2pt}
&Average  &774.6  &226.7  &561.4  &432.3  &556.2  &615.8  &450.2  &\hlblue{755.6{\scriptsize $\pm3.5$}}  &\hlpink{700.0{\scriptsize $\pm14.4$}}  \\

\bottomrule
\end{tabular}
}
\endgroup
\end{table*}

\begin{figure*}[htbp]
    \centering
    \includegraphics[width=\linewidth]{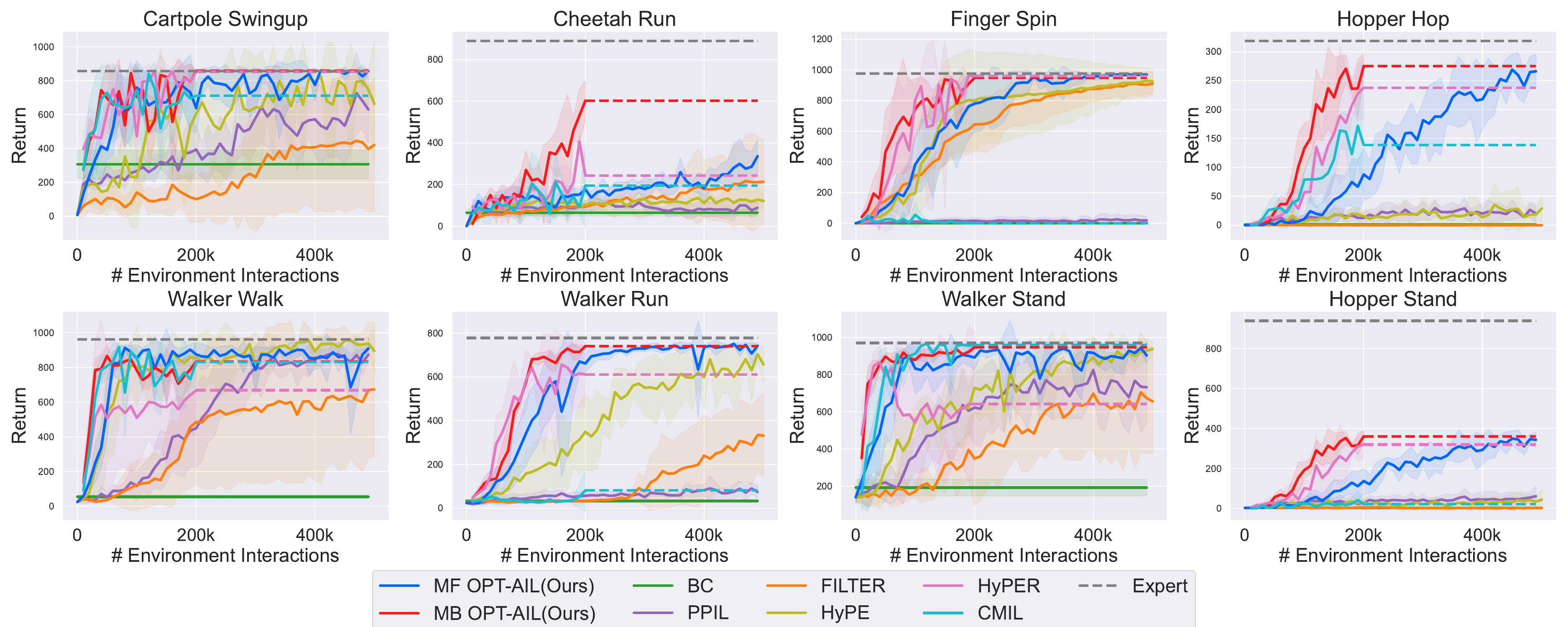}
    \caption{Learning curves on 8 DMControl tasks over 5 random seeds using 1 expert trajectory. Here the $x$-axis is the number of environment interactions and the $y$-axis is the return. The shaded region corresponds to the standard deviation over 5 random seeds. We run model-based approaches for 200k environment interactions, after which the last return is extended horizontally with dotted lines.}
    \label{fig:interaction_efficiency_n=1}
\end{figure*}

\subsection{Experiment Results}
\subsubsection{Expert Sample Efficiency}
Table \ref{table:averaged_discounted_return_gradient_penalty} presents the performance of all methods across varying numbers of expert trajectories. Model-free methods use 500k environment interactions, while model-based methods use 200k interactions due to their faster convergence properties.

The results demonstrate several key findings. First, both MF OPT-AIL and MB OPT-AIL consistently outperform BC across nearly all tasks, confirming our theoretical analysis that OPT-AIL effectively mitigates the compounding error problem inherent in BC under general function approximation. Second, our methods achieve substantial improvements over existing state-of-the-art approaches: MF OPT-AIL outperforms the leading model-free method HyPE by an average margin of 163 (approximately 29\% improvement), while MB OPT-AIL surpasses the top model-based method HyPER by 71 (approximately 11\% improvement). Particularly noteworthy is OPT-AIL's superior performance in low-data regimes, which are common in real-world applications where expert demonstrations are scarce. When trained on only a single expert trajectory, our method uniquely achieves expert-level or near-expert performance on challenging tasks such as \texttt{Walker Run} and \texttt{Walker Stand}, demonstrating exceptional sample efficiency.

\subsubsection{Environment Interaction Efficiency}
Figure \ref{fig:interaction_efficiency_n=1} shows the learning curves of different algorithms using a single expert trajectory. Overall, we can observe that model-based approaches achieve higher environment interaction efficiency than model-free ones, as expected due to their ability to leverage learned transition models to generate synthetic data. Among the model-based algorithms, MB OPT-AIL matches or exceeds the performance of CMIL and HyPER across all eight tasks in terms of interaction efficiency. Similarly, MF OPT-AIL distinguishes itself among model-free approaches, achieving near-expert performance with significantly fewer environment interactions than existing model-free methods, particularly on the \texttt{Hopper Hop}, \texttt{Walker Run}, and \texttt{Walker Stand} tasks.

\subsubsection{Ablation Studies on Optimism Regularization}
OPT-AIL incorporates an optimism regularization term in its objective. In this part, we conduct ablation studies to assess the impact of this design choice. We compare MF OPT-AIL and MB OPT-AIL with their respective variants that remove optimism regularization. As shown in \cref{fig:ablation}, we observe that both MF OPT-AIL and MB OPT-AIL match or exceed the convergence rates of their non-optimistic counterparts across all 8 tasks. The benefit is particularly evident in environments such as \texttt{Finger Spin}, \texttt{Hopper Hop}, and \texttt{Walker Run}, where OPT-AIL achieves notably faster convergence. These results suggest that optimism regularization can facilitate more effective exploration, thereby improving interaction efficiency and accelerating learning.

\begin{figure*}[htbp]
    \centering
    \includegraphics[width=0.95\linewidth]{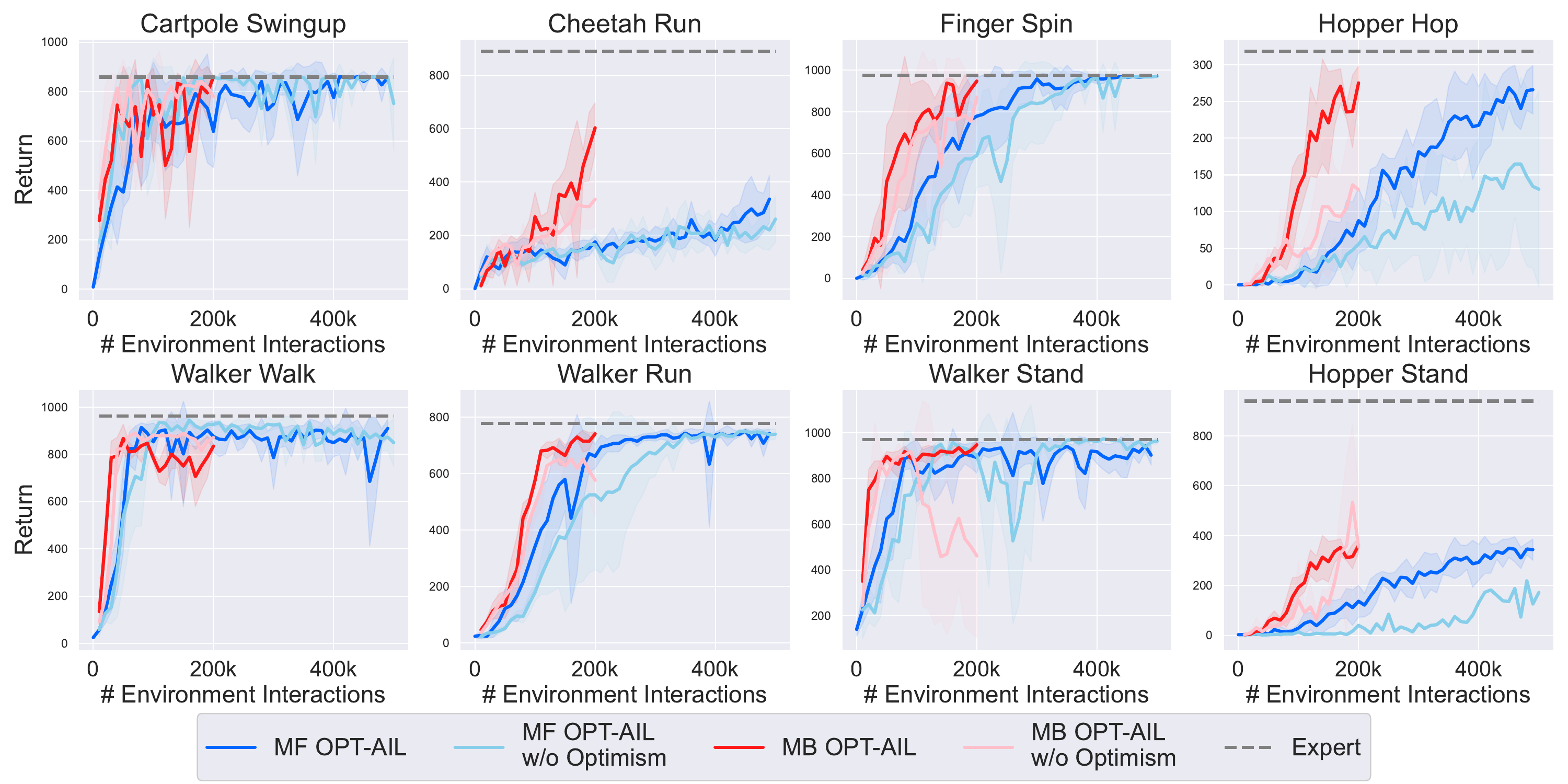}
    \caption{Learning curves on 8 DMControl tasks over 5 random seeds using 1 expert trajectory.}
    \label{fig:ablation}
\end{figure*}

\subsubsection{Performance of Reward Models}
In this part, we evaluate the quality of the learned reward models. Specifically, we measure the Pearson correlation between the ground-truth reward and the finally learned reward on a replay-buffer dataset collected by an independent RL process. We compare OPT-AIL against representative baselines that explicitly learn reward functions, including FILTER, CMIL, HyPE, and HyPER. As shown in \cref{table:reward_model_performance}, MF OPT-AIL and MB OPT-AIL achieve high average Pearson correlations of $0.7763$ and $0.7472$, respectively, substantially outperforming all competing methods. This performance advantage can be attributed to the reward update rule in OPT-AIL, which is derived from a principled online optimization process that explicitly minimizes reward error.

\begin{table*}[htbp]
    \caption{Pearson correlation between the ground-truth reward and the finally learned reward on 8 DMControl tasks over 5 random seeds.}
    \label{table:reward_model_performance}
    
    \setlength{\hlwd}{6.5em}
    \setlength{\hlwds}{3em}
    \newcommand{\hlblue}[1]{\colorbox{lightblue}{\makebox[\hlwd][c]{#1}}}
    \newcommand{\hlpink}[1]{\colorbox{pink}{\makebox[\hlwd][c]{#1}}}
    \newcommand{\hlblues}[1]{\colorbox{lightblue}{\makebox[\hlwds][c]{#1}}}
    \newcommand{\hlpinks}[1]{\colorbox{pink}{\makebox[\hlwds][c]{#1}}}
    
    \centering
    \renewcommand{\arraystretch}{1.3}
    \begin{tabular}{c!{\vrule}cccc!{\vrule}cc}
    \toprule
    \textbf{DMC Task}  & \textbf{FILTER} & \textbf{HyPE} & \textbf{HyPER*} & \textbf{CMIL*} & \textbf{MF OPT-AIL} & \textbf{MB OPT-AIL*} \\ \midrule
    Cartpole Swingup & 0.8978 & 0.8993 & 0.7999 & 0.8838 & \hlblue{0.9141{\scriptsize $\pm0.0040$}} & \hlpink{0.9109{\scriptsize $\pm0.0032$}} \\
    Cheetah Run & 0.8971 & \hlblues{0.9131} & 0.6524 & 0.8609 & \hlblue{0.9116{\scriptsize $\pm0.0006$}} & \hlpink{0.8945{\scriptsize $\pm0.0026$}} \\
    Finger Spin & \hlblues{0.8314} & 0.8213 & 0.6316 & 0.3419 & \hlblue{0.8323{\scriptsize $\pm0.0035$}} & \hlpink{0.6619{\scriptsize $\pm0.0046$}} \\
    Hopper Hop & 0.1805 & 0.1985 & 0.3439 & 0.4412 & \hlblue{0.4869{\scriptsize $\pm0.0029$}} & \hlpink{0.4673{\scriptsize $\pm0.0060$}} \\
    Hopper Stand & 0.2158 & 0.2676 & -0.0179 & 0.7717 & \hlblue{0.8141{\scriptsize $\pm0.0053$}} & \hlpink{0.8186{\scriptsize $\pm0.0067$}} \\
    Walker Run & 0.7313 & 0.6490 & 0.1947 & 0.8945 & \hlblue{0.9428{\scriptsize $\pm0.0009$}} & \hlpink{0.9104{\scriptsize $\pm0.0015$}} \\
    Walker Stand & 0.5135 & 0.3261 & -0.6355 & -0.1060 & \hlblue{0.6323{\scriptsize $\pm0.0030$}} & \hlpink{0.7716{\scriptsize $\pm0.0014$}} \\
    Walker Walk & 0.4271 & 0.2912 & \hlpinks{0.7399} & 0.5106 & \hlblue{0.6760{\scriptsize $\pm0.0098$}} & 0.5424{\scriptsize $\pm0.0061$} \\
    
    \noalign{\vskip 1pt}
    \cdashline{1-7}
    \noalign{\vskip 2pt}
    Average & 0.5868 & 0.5458 & 0.3386 & 0.5748 & \hlblue{0.7763{\scriptsize $\pm0.0037$}} & \hlpink{0.7472{\scriptsize $\pm0.0040$}} \\
    \bottomrule
    \end{tabular}
    \end{table*}

\section{Conclusion}
\label{sec:conclusion}
To narrow the gap between theory and practice in adversarial imitation learning, this paper investigates AIL with general function approximation. We develop OPT-AIL, a new AIL framework that centers on performing online optimization for reward learning and optimism-regularized optimization for policy learning. Under this framework, we propose two specific methods: model-free OPT-AIL and model-based OPT-AIL. In theory, both model-free and model-based OPT-AIL achieve polynomial expert sample complexity and interaction complexity for general function approximation. In practice, OPT-AIL only requires approximately solving two optimization problems, enabling efficient implementation with neural networks. Our experiments demonstrate that OPT-AIL outperforms prior SOTA methods in several challenging tasks, successfully bridging theoretical rigor with practical performance.

Several promising directions emerge for future work. In tabular MDPs, the current optimal expert sample complexity is $\gO (H^{3/2} / \varepsilon)$ \citep{rajaraman2020fundamental, xu2023provably}, which is better than $\gO (H^2/\varepsilon^2)$ attained in this paper. Therefore, a promising and valuable future direction would be to develop more advanced AIL approaches that achieve this expert sample complexity in the setting of general function approximation. Additionally, \citep{xu2022understanding} established horizon-free imitation gap bounds for tabular AIL, motivating the exploration of similar horizon-independent guarantees in the function approximation setting.

\section*{Acknowledgments}
We thank Ziniu Li and Yichen Li for their helpful discussions and feedback. This work was supported by the Fundamental Research Program for Young Scholars (PhD Candidates) of the National Science Foundation of China (623B2049) and Jiangsu Science Foundation (BK20243039).

\bibliographystyle{IEEEtran}
\bibliography{reference.bib}

\clearpage
\appendix
\input{appendix/appendix}

\end{document}

%% file: packages/defs.tex
\newcommand{\reals}{{\mathbb{R}}}

\newcommand{\argmax}{\mathop{\rm argmax}}

%% file: packages/header.tex
\usepackage{url}            %
\usepackage{booktabs}       %
\usepackage{multirow}    
\usepackage{amsfonts}       %
\usepackage{nicefrac}       %
\usepackage{microtype}      %
\usepackage{enumerate}
\usepackage{hhline}
\usepackage{makecell}
\usepackage{pifont}

\usepackage{graphicx} %
\usepackage{caption}
\usepackage{subcaption}
\usepackage{amsmath}
\usepackage{amsthm}
\usepackage{amssymb}
\usepackage{tikz}
\usepackage{xcolor}
\usetikzlibrary{arrows}

\allowdisplaybreaks

\usepackage{mathrsfs}

\usepackage{algorithm}
\usepackage{algorithmic}
\usepackage{hyperref}
\usepackage{bm}

\allowdisplaybreaks

\newcommand{\labs}{\left\vert}
\newcommand{\rabs}{\right\vert}

\newcommand{\opt}{\mathrm{opt}}

\newcommand{\expect}{\mathbb{E}}

\newtheorem{thm}{Theorem}
\newtheorem{lem}{Lemma}

\newtheorem{asmp}{Assumption}
\newtheorem{defn}{Definition}

\usepackage[capitalize,noabbrev]{cleveref}
\crefname{thm}{Theorem}{Theorems}
\crefname{lem}{Lemma}{Lemmas}
\crefname{cor}{Corollary}{Corollaries}
\crefname{prop}{Proposition}{Propositions}
\crefname{asmp}{Assumption}{Assumptions}
\crefname{defn}{Definition}{Definitions}
\crefname{oracle}{Oracle}{Oracles}
\crefname{fact}{Fact}{Facts}
\crefname{conj}{Conjecture}{Conjectures}
\crefname{rem}{Remark}{Remarks}
\crefname{claim}{Claim}{Claims}
\crefname{ec}{Empirical Observation}{Empirical Observations}

\definecolor{red}{rgb}{1, 0, 0}

\definecolor{green}{rgb}{0, 1, 0}
\definecolor{darkgreen}{rgb}{0.0, 0.2, 0.13}
\definecolor{darkseagreen}{rgb}{0.56, 0.74, 0.56}
\definecolor{officegreen}{rgb}{0.0, 0.5, 0.0}

\definecolor{blue}{rgb}{0, 0, 1}

\definecolor{orange}{rgb}{1, 0.4, 0.0}

%% file: appendix/appendix.tex
\newpage

\input{appendix/proof}

\input{appendix/experiment_details}

%% file: appendix/proof.tex
\section{Omitted Proof}
\label{sec:omitted_proof}

\subsection{Proof of Lemma \ref{lem:error_decomposition}}
\cref{lem:error_decomposition} presents an error decomposition theory in adversarial imitation learning. According to the definition of $\widebar{\pi}$, we have that
\begin{align*}
        V^{\piE} - V^{\widebar{\pi}}
        &= V^{\piE}_{\truereward} - V^{\widebar{\pi}}_{\truereward}
        \\
        &= \frac{1}{K} \sum_{k=1}^K V^{\piE}_{\truereward} - V^{\pi^k}_{\truereward}
        \\
        &= \frac{1}{K}   \sum_{k=1}^K \lp V^{\piE}_{\truereward} - V^{\pi^k}_{\truereward} - \lp V^{\piE}_{r^k} - V^{\pi^k}_{r^k} \rp \rp + \frac{1}{K} \sum_{k=1}^K V^{\piE}_{r^k} - V^{\pi^k}_{ r^k}.
\end{align*}
We complete the proof.

\subsection{Proof of Theorem \ref{thm:mf_opt_ail_complexity}}
\label{subsec:proof_of_theorem_ail_oe_complexity}
In this section, we present the proof of \cref{thm:mf_opt_ail_complexity}.

To prove \cref{thm:mf_opt_ail_complexity}, we need the following two useful lemmas which upper bound the reward error and policy error, respectively. Please refer to Appendix \ref{subsec:proof_of_lemma_reward_error_bound} and \ref{subsec:proof_of_lemma_policy_error_bound} for the detailed proof.

\begin{lem}[Upper Bound on Reward Error]
\label{lem:reward_error_bound}
Under \cref{asmp:realizability_reward_class}. Consider Algorithms \ref{alg:mf_ail_oe} and \ref{alg:mb_ail_oe}. For any fixed $\delta \in (0, 1]$, with probability at least $1-\delta$,
\begin{align*}
    &\frac{1}{K} \sum_{k=1}^K V^{\piE}_{\truereward} - V^{\pi^k}_{\truereward} 
    - \lp V^{\piE}_{r^k} - V^{\pi^k}_{r^k} \rp \\
    &\leq 2H \sqrt{\frac{\log (6 \max_{h \in [H]} \gN_{\rho} (\gR_h) /\delta)}{N}} 
    + 4 H \rho + 2 H \sqrt{ \frac{\log (3/\delta)}{K}} + \varepsilon^{r}_{\opt}. 
\end{align*}
\end{lem}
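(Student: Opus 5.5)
We rewrite the summand as a loss comparison. For any reward $r$, let $\ell^k(r) := V^{\pi^k}_{r} - V^{\piE}_{r}$. The reward error is then
\[
\frac{1}{K}\sum_{k=1}^K \bigl(\ell^k(r^k) - \ell^k(\truereward)\bigr).
\]
Our plan is to replace each expected loss by its empirical counterpart $\gL^k(r) = \widehat{V}^{\pi^k}_{r} - \widehat{V}^{\piE}_{r}$, and to pay for the replacement with concentration terms. Concretely, we split
\begin{align*}
\frac{1}{K}\sum_{k}\bigl(\ell^k(r^k) - \ell^k(\truereward)\bigr)
&= \frac{1}{K}\sum_k \bigl(\gL^k(r^k) - \gL^k(\truereward)\bigr)
+ \frac{1}{K}\sum_k \bigl(\ell^k(r^k) - \gL^k(r^k)\bigr) \\
&\quad + \frac{1}{K}\sum_k \bigl(\gL^k(\truereward) - \ell^k(\truereward)\bigr).
\end{align*}
By \cref{asmp:realizability_reward_class}, $\truereward \in \gR$. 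Hence the first term is at most $\max_{r\in\gR}$ of the same expression, which is $\varepsilon^{r}_{\opt}$ by \cref{def:rew_opt_error}. There is a minor indexing issue: the algorithm plays $r^k$ against losses indexed up to $k-1$. We handle it by matching indices with the regret definition, which is stated directly for the pair $(\pi^k, r^k)$.

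For the remaining two terms, each $\ell^k - \gL^k$ splits into a policy part $V^{\pi^k}_r - \widehat{V}^{\pi^k}_r$ and an expert part $\widehat{V}^{\piE}_r - V^{\piE}_r$.

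\textbf{Expert parts.} These involve the fixed dataset $\gDE$. Since $r^k$ depends on $\gDE$, we need uniform concentration over $\gR$. For each $h$, we take a $\rho$-cover of $\gR_h$ and apply Hoeffding to the $N$ i.i.d.\ values $r_h(s_h,a_h)\in[0,1]$ under $\piE$, followed by a union bound over the cover and over $h$. This gives a per-step deviation of
\[
\sqrt{\log(\gN_\rho(\gR_h)H/\delta')/N} + 2\rho.
\]
Summing over $h$ gives $H\sqrt{\log(\cdot)/N} + 2H\rho$, and this applies to both the $r^k$ and $\truereward$ terms, hence the factor $2$. To reach the stated form $\log(6\max_h\gN_\rho(\gR_h)/\delta)$ without an extra $\log H$, we would instead cover the product class. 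Alternatively, we use the bound for the whole sum $\sum_h r_h$ with $r$ ranging over the product cover, absorbing constants; I expect the paper treats the log-cardinality loosely here.

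\textbf{Policy parts.} For the policy parts we use martingale concentration. The main subtlety, and the obstacle, is that $r^k$ is chosen before $\tau^k$ is rolled out. Indeed, $\tau^k$ is collected by $\pi^k$ in the next iteration, and $r^k$ depends only on $\tau^0,\dots,\tau^{k-1}$ and $\gDE$. So with the filtration $\mathcal{F}_k = \sigma(\gDE,\tau^0,\dots,\tau^{k-1})$, both $r^k$ and $\pi^k$ are $\mathcal{F}_k$-measurable. Then $V^{\pi^k}_{r^k} - \widehat{V}^{\pi^k}_{r^k}$ is a martingale difference bounded by $H$. Azuma--Hoeffding gives $H\sqrt{2\log(3/\delta)/K}$, and the same bound holds for the $\truereward$ sequence. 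This avoids any covering of $\gR$ for the online data.

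Finally, we allocate failure probability $\delta/3$ to each of the three concentration events (the expert uniform bound and the two Azuma bounds) and sum them. This yields
\[
2H\sqrt{\log(6\max_h\gN_\rho(\gR_h)/\delta)/N} + 4H\rho + 2H\sqrt{\log(3/\delta)/K} + \varepsilon^r_{\opt},
\]
up to the constant bookkeeping.
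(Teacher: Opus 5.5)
Your proposal is correct and follows essentially the same route as the paper. You split the error into the empirical regret term (at most $\varepsilon^{r}_{\opt}$ because $\truereward \in \gR$), a uniform cover-plus-Hoeffding bound for the expert estimates, and Azuma--Hoeffding for the online estimates, with $\delta/3$ per event. Your use of Azuma also for the $\truereward$ sequence, and your inclusion of $\gDE$ in the filtration, are more careful than the paper, which applies plain Hoeffding there even though $\pi^k$ depends on past data.

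One caution on your aside: covering the product class would cost a factor $\sqrt{H}$, since $\log \prod_h \gN_{\rho}(\gR_h)$ scales like $H \log \max_h \gN_{\rho}(\gR_h)$. So it does not remove the $\log H$, and you should keep the per-step covers with a union bound over $h$, as the paper does. The paper then silently drops the resulting $\log H$. That step is justified by the slack between Hoeffding's $\log(\cdot)/(2N)$ and the stated $\log(\cdot)/N$, but only when $H \leq 6 \max_h \gN_{\rho}(\gR_h)/\delta$.
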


\begin{lem}[Upper Bound on Policy Error in Model-free OPT-AIL]
\label{lem:policy_error_bound}
Under Assumptions \ref{asmp:realizability_and_bellman_completeness_q_class} and \ref{asmp:low_gec}. Consider \cref{alg:mf_ail_oe}. For any fixed $\delta \in (0, 1]$, with probability at least $1-\delta$, it holds that 
\begin{align*}
    &\frac{1}{K} \sum_{k=1}^K V^{\piE}_{r^k} - V^{\pi^k}_{r^k} \\
    &\leq \frac{57 H^4 \log (4 K H \max_{h \in [H]} \gN_{\rho} (\gQ_h) \gN_{\rho} (\gR_h) /\delta)}{ \lambda_{Q}} + \frac{57 KH^3 \rho + \varepsilon^{Q}_{\opt}}{\lambda_{Q}} + \frac{\lambda_{Q} d_{\gec} (\varepsilon^\prime)}{2 K} + \sqrt{\frac{d_{\gec} (\varepsilon^\prime) H}{K}} + \varepsilon^\prime H.
\end{align*}
\end{lem}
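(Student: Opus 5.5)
The plan follows the optimism-regularized analysis of \citep{liu2024maximize}, adapted so that the reward $r^k$ may change across iterations. Fix $k$ and write $V^{\star}_{r^k}:=\max_{\pi}V^{\pi}_{r^k}$, so that $V^{\piE}_{r^k}\le V^{\star}_{r^k}=\max_a Q^{\star,r^k}_1(s_1,a)$. This gives the split
\begin{align*}
V^{\piE}_{r^k}-V^{\pi^k}_{r^k}\le \underbrace{\Big(V^{\star}_{r^k}-\max_{a}Q^k_1(s_1,a)\Big)}_{\text{(I)}_k}+\underbrace{\Big(\max_a Q^k_1(s_1,a)-V^{\pi^k}_{r^k}\Big)}_{\text{(II)}_k}.
\end{align*}
Term (II) is exactly the prediction-error quantity in \cref{asmp:low_gec}, with the model-free choices of \cref{rem:gec}: $f^k=Q^k$, $\pi^k$ greedy, and $D$ the squared Bellman residual under $\gT^{r^k}_h$. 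For any $\mu>0$,
\begin{align*}
\sum_{k=1}^K \text{(II)}_k\le \frac{\mu}{2}\sum_{k=1}^K\sum_{i<k}\expect\Big[\sum_{h=1}^H \big(Q^k_h-\gT^{r^k}_hQ^k_{h+1}\big)^2(s_h,a_h)\,\Big|\,\pi^i\Big]+\frac{d_{\gec}(\varepsilon')}{2\mu}+\sqrt{d_{\gec}(\varepsilon')HK}+\varepsilon' HK .
\end{align*}
I will take $\mu=1/\lambda_Q$. Dividing by $K$ then produces the last three terms of the lemma.

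Term (I) is controlled by the near-optimality of $Q^k$ for \eqref{eq:q_opt_obj}. By \cref{asmp:realizability_and_bellman_completeness_q_class}, $Q^{\star,r^k}\in\gQ$ is feasible, so $\gL^k(Q^k)\le \gL^k(Q^{\star,r^k})+\varepsilon^Q_{\opt}$. Rearranging,
\begin{align*}
\lambda_Q\,\text{(I)}_k\le \BE^k(Q^{\star,r^k})-\BE^k(Q^k)+\varepsilon^Q_{\opt}.
\end{align*}
I then need two concentration statements, to hold simultaneously for all $k$ with probability $1-\delta$.
\begin{enumerate}[label=(\alph*)]
\item \emph{Upper bound.} $\BE^k(Q^{\star,r^k})\lesssim H^3\log(KH\gN_\rho(\gQ)\gN_\rho(\gR)/\delta)+kH^3\rho$. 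This holds because $Q^{\star,r^k}$ has zero true Bellman error, and Bellman completeness lets the inner infimum be compared against $\gT^{r^k}_hQ_{h+1}\in\gQ_h$.
\item \emph{Lower bound.} $\BE^k(Q^k)\ge \tfrac12\sum_{i<k}\sum_h\expect\big[(Q^k_h-\gT^{r^k}_hQ^k_{h+1})^2\mid\pi^i\big]-O\big(H^3\log(\cdot)+kH^3\rho\big)$.
\end{enumerate}
Both follow the standard least-squares argument of \citep{jin2021bellman}: apply Freedman's inequality to the martingale differences $(Q_h-r_h-\max Q_{h+1})^2-(\gT^r_hQ_{h+1}-r_h-\max Q_{h+1})^2$ along the adaptively collected trajectories $\tau^0,\dots,\tau^{k-1}$. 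Make this uniform over $\rho$-covers of $\gQ_h,\gQ_{h+1},\gR_h$ and a union bound over $h\in[H]$ and $k\in[K]$, and pay $O(kH\rho)$ per step for discretization.

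Combining, summing over $k$, and dividing by $K\lambda_Q$: the negative in-distribution error from (b) carries weight $\tfrac{1}{2\lambda_Q}$, which exactly cancels the $\tfrac{\mu}{2}$ term of the GEC bound when $\mu=1/\lambda_Q$. What remains is
\begin{align*}
\frac{O\big(H^4\log(4KH\gN_\rho(\gQ)\gN_\rho(\gR)/\delta)+KH^3\rho\big)+\varepsilon^Q_{\opt}}{\lambda_Q}+\frac{\lambda_Q d_{\gec}}{2K}+\sqrt{\frac{d_{\gec}H}{K}}+\varepsilon' H.
\end{align*}
The $H^4$ arises from per-step squared losses of magnitude $H^2$, summed over $H$ steps, together with the variance/range factor in Freedman's inequality. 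Tracking constants through this should give the stated $57$.

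The main obstacle is the uniform concentration in (b). Both $Q^k$ and $r^k$ are chosen after seeing the data, and $r^k$ varies with $k$, so the covering argument must range jointly over $\gQ\times\gR$. This is why $\gN_\rho(\gR_h)$ enters the logarithm even though this lemma concerns only the policy. I would first handle a fixed cover element $(Q,Q',r)$ with a Freedman bound that uses variance $\lesssim H^2\cdot$(conditional mean). This converts to the multiplicative ``$\ge\tfrac12\,$mean $-\,O(H^2\log)$'' form, after which I union bound and transfer to $(Q^k,r^k)$ via the $\rho$-approximation.
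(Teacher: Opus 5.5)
Your proposal is correct and follows the paper's proof essentially step for step. It uses the same split of $V^{\star}_{r^k}-V^{\pi^k}_{r^k}$ into an optimism gap plus the GEC prediction error, the same comparison $\gL^k(Q^k)\le\gL^k(Q^{\star,r^k})+\varepsilon^{Q}_{\opt}$, the same two Freedman-plus-covering bounds on $\BE^k(Q^{\star,r^k})$ and $\BE^k(Q^k)$ (constants $16H^4$ and $41H^4$, combined by a union bound at level $\delta/2$ each to give the $57$), and the same cancellation of the in-distribution error via $\mu=1/\lambda_Q$.

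One small correction on attribution. Bellman completeness is needed only for the lower bound (b), where it lets you upper-bound the infimum by the loss at $\gT^{r^k}_hQ^k_{h+1}\in\gQ_h$. The upper bound (a) uses only that $Q^{\star,r^k}_h=\gT^{r^k}_hQ^{\star,r^k}_{h+1}$ is the conditional-mean regressor. Hence every competitor's excess empirical loss has nonnegative conditional mean, and Freedman gives $-\sum_i(\cdot)\lesssim H^2\log(\cdot)$.
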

Now we start to prove \cref{thm:mf_opt_ail_complexity}. With \cref{lem:error_decomposition}, we can derive that
\begin{align*}
        V^{\piE}_{\truereward} - V^{\widebar{\pi}}_{\truereward} 
        &= \frac{1}{K} \sum_{k=1}^K V^{\piE}_{\truereward} - V^{\pi^k}_{\truereward} 
        - \lp V^{\piE}_{r^k} - V^{\pi^k}_{r^k} \rp + \frac{1}{K} \sum_{k=1}^K V^{\piE}_{r^k} - V^{\pi^k}_{ r^k}.
\end{align*}
Furthermore, \cref{lem:reward_error_bound} and \cref{lem:policy_error_bound} offer upper bounds on reward error and policy error, respectively. By union bound, with probability at least $1-\delta$, we obtain
\begin{align*}
    &\quad V^{\piE}_{\truereward} - V^{\widebar{\pi}}_{\truereward}
    \\
    &\leq 2H \sqrt{\frac{\log (12 \max_{h \in [H]} \gN_{\rho} (\gR_h)  /\delta)}{N}} 
    + 4 H \rho 
    + 2 H \sqrt{ \frac{\log (6/\delta)}{K}} + \varepsilon^{r}_{\opt} 
    \\
    &\; + \frac{57 H^4 \log (8 K H \max_{h \in [H]} \gN_{\rho} (\gQ_h) \gN_{\rho} (\gR_h) /\delta)}{\lambda_{Q}} + \frac{57 KH^3 \rho + \varepsilon^{Q}_{\opt}}{\lambda_{Q}}
    + \frac{\lambda_{Q} d_{\gec} (\varepsilon^\prime)}{2 K} + \sqrt{\frac{d_{\gec} (\varepsilon^\prime) H}{K} } 
    + \varepsilon^\prime H. 
\end{align*}
We choose $\varepsilon^\prime = \varepsilon / H$ and obtain 
\begin{align*}
    &\quad V^{\piE}_{\truereward} - V^{\widebar{\pi}}_{\truereward}
    \\
    &\leq 2H \sqrt{\frac{\log (12 \max_{h \in [H]} \gN_{\rho} (\gR_h)  /\delta)}{N}} 
    + 4 H \rho + 2 H \sqrt{ \frac{\log (6/\delta)}{K}} 
    \\
    &\; + \frac{57 H^4 \log (8 K H \max_{h \in [H]} \gN_{\rho} (\gQ_h) \gN_{\rho} (\gR_h) /\delta)}{\lambda_{Q}} + \frac{57 KH^3 \rho}{\lambda_{Q}}
    + \frac{\lambda_{Q} d_{\gec}}{2 K} 
    + \sqrt{\frac{d_{\gec} H}{K} } + \varepsilon^{r}_{\opt} 
    + \frac{\varepsilon^{Q}_{\opt}}{\lambda_{Q}} 
    + \varepsilon, 
\end{align*}
where $d_{\gec} := d_{\gec} (\varepsilon/H)$. By choosing the regularization coefficient\\ $\lambda_{Q} = \sqrt{\frac{114K H^4 \log (8 K H \max_{h \in [H]} \gN_{\rho} (\gQ_h) \gN_{\rho} (\gR_h) /\delta)}{d_{\gec}} 
    + \frac{114 K^2H^3 \rho}{d_{\gec}}}$,
we further obtain
\begin{align*}
    &\quad V^{\piE}_{\truereward} - V^{\widebar{\pi}}_{\truereward}
    \\
    &\leq 2H \sqrt{\frac{\log (12 \max_{h \in [H]} \gN_{\rho} (\gR_h) /\delta)}{N}} 
    + 4 H \rho + 2 H \sqrt{ \frac{\log (6/\delta)}{K}} 
    \\
    &\; + \sqrt{\frac{114 H^4 d_{\gec} \log (8 K H \max_{h \in [H]} \gN_{\rho} (\gQ_h) \gN_{\rho} (\gR_h) /\delta) }{K}} + \sqrt{114H^3 d_{\gec} \rho}
    + \sqrt{\frac{d_{\gec} H}{K} } 
    + \varepsilon^{r}_{\opt} + \frac{\varepsilon^{Q}_{\opt}}{\lambda_{Q}} 
    + \varepsilon \\
    &\overset{\text{(a)}}{\leq} 2H \sqrt{\frac{\log (12 \max_{h \in [H]} \gN_{\rho} (\gR_h) /\delta)}{N}} 
    + 4 H \rho + 2 H \sqrt{ \frac{\log (6/\delta)}{K}} 
    \\
    &\; + \sqrt{\frac{114 H^4 d_{\gec} \log (8 K H \max_{h \in [H]} \gN_{\rho} (\gQ_h) \gN_{\rho} (\gR_h) /\delta) }{K}} + \sqrt{114H^3 d_{\gec} \rho}
    + \sqrt{\frac{d_{\gec} H}{K} } 
    + \varepsilon^{r}_{\opt} + \frac{\varepsilon^{Q}_{\opt}}{\lambda_{Q}} 
    + \varepsilon \\
    &\leq 2H \sqrt{\frac{\log (12 \max_{h \in [H]} \gN_{\rho} (\gR_h) /\delta)}{N}} 
    + 4 H \rho + 2 H \sqrt{ \frac{\log (6/\delta)}{K}} 
    \\
    &\; + 2 \sqrt{\frac{114 H^4 d_{\gec} \log (8 K H \max_{h \in [H]} \gN_{\rho} (\gQ_h) \gN_{\rho} (\gR_h) /\delta) }{K}} + \sqrt{54H^3 d_{\gec} \rho}
    + \varepsilon^{r}_{\opt} 
    + \frac{\varepsilon^{Q}_{\opt}}{\lambda_{Q}} 
    + \varepsilon \\
    &\overset{\text{(b)}}{\leq} 2H \sqrt{\frac{\log (12 \max_{h \in [H]} \gN_{\rho} (\gR_h) /\delta)}{N}}  
    + 2 H \sqrt{ \frac{\log (6/\delta)}{K}} 
    \\
    &\; + 24 \sqrt{\frac{ H^4 d_{\gec} \log (8 K H \max_{h \in [H]} \gN_{\rho} (\gQ_h) \gN_{\rho} (\gR_h) /\delta) }{K}} + \varepsilon^{r}_{\opt} 
    + \frac{\varepsilon^{Q}_{\opt}}{\lambda_{Q}} 
    + 3\varepsilon
\end{align*}
Inequality $\text{(a)}$ follows $\sqrt{a+b} \leq \sqrt{a} + \sqrt{b}, \; \forall a, b \geq 0$ and inequality (b) holds because of the choice $\rho = \varepsilon^2 / (54H^3 d_{\gec} + 4H)$. Now we determine the number of expert trajectories and the number of interaction trajectories. With \cref{lem:error_to_sample_complexity}, when the expert sample complexity and interaction complexity satisfies 
\begin{align*}
    N &\geq \lp 4H^2 \log \left( 12 \max_{h \in [H]} \gN_{\rho} (\gR_h) /\delta \right) \rp / \varepsilon^2, \\
    K &\geq \left( 2304  \left( H^4 d_{\gec}\log \left( 768 H^{3} d_{\gec}^{1/2} \max_{h \in [H]} \gN_{\rho} (\gQ_h) \gN_{\rho} (\gR_h) / (\delta \varepsilon) \right) + H^2 \log (6/\delta) \right) \right)  / \varepsilon^2,
\end{align*}
we have that
\begin{align*}
    V^{\piE}_{\truereward} - V^{\widebar{\pi}}_{\truereward} \leq 6 \varepsilon + \varepsilon^{r}_{\opt} + \frac{\varepsilon^{Q}_{\opt}}{\lambda_{Q}}.
\end{align*}
Scaling $\varepsilon$ as $\varepsilon/6$ completes the proof.

\subsection{Proof of Theorem \ref{thm:mb_opt_ail_complexity}}
To prove \cref{thm:mb_opt_ail_complexity}, we also need upper bounds on reward error and policy error. For reward error bound, we can leverage \cref{lem:reward_error_bound} because model-free OPT-AIL and model-based OPT-AIL share the same reward update procedure. Besides, we notice that model-free OPT-AIL and model-based OPT-AIL mainly differ in the policy update rule. The following lemma provides the policy error bound for model-based OPT-AIL.

\begin{lem}[Upper Bound on Policy Error in Model-based OPT-AIL]
\label{lem:policy_error_bound_mb}
Under Assumptions \ref{asmp:realizability_model_class} and \ref{asmp:low_gec}. Consider \cref{alg:mb_ail_oe}. For any fixed $\delta \in (0, 1]$, with probability at least $1-\delta$, it holds that
\begin{align*}
        &\quad \frac{1}{K} \sum_{k=1}^K V^{\piE}_{r^k} - V^{\pi^k}_{ r^k}
        \\
        &\leq \frac{2 H \log \lp H \max_{h \in [H]} \gN_{\rho} (\gP_h; \log) / \delta \rp}{\lambda_P} + \frac{2K H\rho + \varepsilon^P_{\opt}}{\lambda_P} + \frac{\lambda_{P} d_{\gec} (\varepsilon^\prime)}{2 K}  + \sqrt{\frac{d_{\gec} (\varepsilon^\prime) H}{ K}} + \varepsilon^\prime H.
\end{align*}
\end{lem}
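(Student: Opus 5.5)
The plan follows the standard MLE-plus-optimism (MEX/MOPS-type) argument of \citep{liu2024maximize}, adapted to a reward $r^k$ that changes with $k$. For each $k$ we split
\begin{align*}
V^{\piE}_{r^k} - V^{\pi^k}_{r^k} = \underbrace{V^{\piE}_{r^k} - V^{*}_{P^k, r^k}}_{\text{(I)}} + \underbrace{V^{*}_{P^k, r^k} - V^{\pi^k}_{r^k}}_{\text{(II)}}.
\end{align*}
By \cref{alg:mb_ail_oe}, $\pi^k = \argmax_{\pi} V^{\pi}_{P^k,r^k}$, so $V^*_{P^k,r^k} = V(f^k;r^k)$ with the model-based choice in \cref{rem:gec}. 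Term (II) is therefore exactly what \cref{asmp:low_gec} controls:
\begin{align*}
\sum_{k} \text{(II)} \leq \frac{\mu}{2}\sum_{k}\sum_{i<k} \expect\Big[\sum_h \SH\big(P^\star_h, P^k_h\big)\Big|\pi^i\Big] + \frac{d_{\gec}(\varepsilon^\prime)}{2\mu} + \sqrt{d_{\gec}(\varepsilon^\prime) H K} + \varepsilon^\prime H K.
\end{align*}
We will take $\mu = 1/\lambda_P$, which turns the second term into $\lambda_P d_{\gec}/2$.

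For term (I), $V^{\piE}_{r^k} \le V^{*}_{P^\star, r^k}$, so (I) is at most $V^*_{P^\star,r^k} - V^*_{P^k,r^k}$. Since $P^\star \in \gP$ by \cref{asmp:realizability_model_class} and $P^k$ is $\varepsilon^P_{\opt}$-optimal for \eqref{eq:model_opt_obj}, we have $\gL^k(P^k) \le \gL^k(P^\star) + \varepsilon^P_{\opt}$. Rearranging,
\begin{align*}
\lambda_P\big(V^*_{P^\star,r^k} - V^*_{P^k,r^k}\big) \le \LOG^k(P^\star) - \LOG^k(P^k) + \varepsilon^P_{\opt}.
\end{align*}
Dividing by $\lambda_P$ gives $\text{(I)} \le \big(\LOG^k(P^\star)-\LOG^k(P^k)+\varepsilon^P_{\opt}\big)/\lambda_P$.

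The main technical step is a uniform MLE concentration bound that converts the log-likelihood gap into the in-sample Hellinger error. The target inequality is
\begin{align*}
\LOG^k(P^\star) - \LOG^k(P) \le -\sum_{i<k}\sum_h \expect_{\pi^i}\big[\SH(P^\star_h,P_h)\big] + 2H\log\big(H\max_h\gN_\rho(\gP_h;\log)/\delta\big) + 2KH\rho,
\end{align*}
holding for all $P\in\gP$ and all $k$ simultaneously. To prove it:
\begin{enumerate}
\item Fix a $\rho$-cover of each $\gP_h$ in the log-sup metric.
\item Apply the exponential martingale inequality to $\tfrac12\log(P_h/P^\star_h)(s^i_{h+1}|s^i_h,a^i_h)$ over the filtration of trajectories, using $-\log\expect[\sqrt{P/P^\star}] \ge \tfrac12\SH$ (up to constant conventions) to obtain the negative Hellinger term.
\item Take a union bound over the cover and the $H$ steps.
\item Pay a discretization cost of $\rho$ per sample, which accumulates to $2KH\rho$.
\end{enumerate}
Handling uniformity over $k$ will likely need either a union bound over $K$ or a stopping-time/Freedman-style argument. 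This is where I expect care to be needed, since the displayed bound has no explicit $\log K$.

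Finally, we sum over $k$. The Hellinger terms coming from (I), weighted by $1/\lambda_P$, carry a negative sign, and with $\mu=1/\lambda_P$ they cancel the positive in-sample Hellinger sum from (II). One index detail needs checking: the dataset $\gD^k$ contains $\tau^0,\dots,\tau^{k-1}$, while the GEC sum runs over $i=1,\dots,k-1$. This only matters up to including the extra $i=0$ term, which has the favorable sign. What remains is
\begin{align*}
\frac{K\cdot 2H\log(\cdot)+2K^2H\rho+K\varepsilon^P_{\opt}}{\lambda_P}+\frac{\lambda_P d_{\gec}}{2}+\sqrt{d_{\gec}HK}+\varepsilon^\prime HK.
\end{align*}
Dividing by $K$ then gives the stated bound of \cref{lem:policy_error_bound_mb}.
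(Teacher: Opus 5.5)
Your proposal is correct and follows the paper's proof: the same split through $V^\star_{P^k,r^k}$, the same use of $\varepsilon^P_{\opt}$-optimality against the realizable $P^\star$, the same cover-based MLE concentration (the paper's \cref{lem:transition_model_concentration}), and the same choice $\mu = 1/\lambda_P$ so the Hellinger sums cancel, with the extra $i=0$ term dropped because it has the favorable sign. On the two points you flag, the paper handles uniformity over $k$ with the time-uniform exponential martingale bound (\cref{lem:martingale_chernoff}), which holds for all prefixes simultaneously, so no $\log K$ appears; and its discretization step keeps only coefficient $-\tfrac12$ on the Hellinger sum (not $-1$), which is exactly enough to cancel the $\mu/2$ term from the GEC.
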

Although \cref{lem:policy_error_bound_mb} and \cref{lem:policy_error_bound} share similar statements, their proofs differ substantially. For a detailed derivation, see Appendix \ref{subsec:proof_of_lem_policy_error_bound_mb}. Now we proceed to prove \cref{thm:mb_opt_ail_complexity}. With \cref{lem:error_decomposition}, we can derive that
\begin{align*}
        V^{\piE}_{\truereward} - V^{\widebar{\pi}}_{\truereward} 
        &= \frac{1}{K} \sum_{k=1}^K V^{\piE}_{\truereward} - V^{\pi^k}_{\truereward} 
        - \lp V^{\piE}_{r^k} - V^{\pi^k}_{r^k} \rp + \frac{1}{K} \sum_{k=1}^K V^{\piE}_{r^k} - V^{\pi^k}_{ r^k}.
\end{align*}
Furthermore, by leveraging \cref{lem:reward_error_bound} and \cref{lem:policy_error_bound_mb}, with probability at least $1-\delta$,
\begin{align*}
    &\quad V^{\piE}_{\truereward} - V^{\widebar{\pi}}_{\truereward} 
    \\
    &\leq 2H \sqrt{\frac{\log (12 \max_{h \in [H]} \gN_{\rho} (\gR_h) /\delta)}{N}} 
    + 4 H \rho + 2 H \sqrt{ \frac{\log (6/\delta)}{K}} + \varepsilon^{r}_{\opt} 
    \\
    &\; + \frac{2 H \log \lp 2 H \max_{h \in [H]} \gN_{\rho} (\gP_h; \log) / \delta \rp}{\lambda_P} + \frac{2K H\rho + \varepsilon^P_{\opt}}{\lambda_P}  + \frac{\lambda_{P} d_{\gec} (\varepsilon^\prime)}{2 K} + \sqrt{\frac{d_{\gec} (\varepsilon^\prime) H}{ K}} + \varepsilon^\prime H.
\end{align*}
We choose $\varepsilon^\prime = \varepsilon/H$ in the definition of GEC and define that $d_{\gec} = d_{\gec} (\varepsilon/H)$. Then with probability at least $1-\delta$, it holds that
\begin{align*}
    &\quad V^{\piE}_{\truereward} - V^{\widebar{\pi}}_{\truereward} 
    \\
    &\leq 2H \sqrt{\frac{\log (12 \max_{h \in [H]} \gN_{\rho} (\gR_h) /\delta)}{N}} 
    + 4 H \rho + 2 H \sqrt{ \frac{\log (6/\delta)}{K}} + \varepsilon^{r}_{\opt} + \frac{2 H \log \lp 2 H \max_{h \in [H]} \gN_{\rho} (\gP_h; \log) / \delta \rp}{\lambda_P} 
    \\
    &\;+ \frac{2K H\rho + \varepsilon^P_{\opt}}{\lambda_P}  + \frac{\lambda_{P} d_{\gec}}{2 K}  + \sqrt{\frac{d_{\gec} H}{ K}} + \varepsilon.
\end{align*}
Then we choose a proper $\lambda_{P}$ to upper bound the RHS of the above inequality. In particular, with $\lambda_P = \sqrt{ \lp 4KH \log \lp 2 H \max_{h \in [H]} \gN_{\rho} (\gP_h; \log) / \delta \rp + 4K^2H\rho \rp / d_{\gec}}$, it holds that
\begin{align*}
    &\quad V^{\piE}_{\truereward} - V^{\widebar{\pi}}_{\truereward} 
    \\
    &\leq 2H \sqrt{\frac{\log (12 \max_{h \in [H]} \gN_{\rho} (\gR_h) /\delta)}{N}} 
    + 4 H \rho + 2 H \sqrt{ \frac{\log (6/\delta)}{K}} + \frac{2 H \log \lp 2 H \max_{h \in [H]} \gN_{\rho} (\gP_h; \log) / \delta \rp}{\lambda_P} 
    \\
    &\; + \frac{2K H\rho}{\lambda_P}  + \frac{\lambda_{P} d_{\gec}}{2 K}  + \sqrt{\frac{d_{\gec} H}{ K}} + \varepsilon^{r}_{\opt} + \frac{\varepsilon^{P}_{\opt}}{\lambda_{P}} + \varepsilon
     \\
    &= 2H \sqrt{\frac{\log (12 \max_{h \in [H]} \gN_{\rho} (\gR_h) /\delta)}{N}} 
    + 4 H \rho + 2 H \sqrt{ \frac{\log (6/\delta)}{K}} 
    \\
    &\; + \sqrt{ \frac{ 2 d_{\gec}  (2 H \log \lp 2 H \max_{h \in [H]} \gN_{\rho} (\gP_h; \log) / \delta \rp + 2KH\rho ) }{K} } + \sqrt{\frac{d_{\gec} H}{ K}} + \varepsilon^{r}_{\opt} + \frac{\varepsilon^{P}_{\opt}}{\lambda_{P}} + \varepsilon
\\
    &\leq 2H \sqrt{\frac{\log (12 \max_{h \in [H]} \gN_{\rho} (\gR_h) /\delta)}{N}} 
    + 4 H \rho + 2 H \sqrt{ \frac{\log (6/\delta)}{K}} + 2 \sqrt{ \frac{  d_{\gec}  H \log \lp 2 H \max_{h \in [H]} \gN_{\rho} (\gP_h; \log) / \delta \rp }{K} } 
    \\
    &\; + \sqrt{4H d_{\gec} \rho} + \sqrt{\frac{d_{\gec} H}{ K}} + \varepsilon^{r}_{\opt} + \frac{\varepsilon^{P}_{\opt}}{\lambda_{P}} + \varepsilon
    \\
    &\leq 2H \sqrt{\frac{\log (12 \max_{h \in [H]} \gN_{\rho} (\gR_h) /\delta)}{N}} 
    + 4 H \rho + 2 H \sqrt{ \frac{\log (6/\delta)}{K}} + 3 \sqrt{ \frac{  d_{\gec}  H \log \lp 4 H \max_{h \in [H]} \gN_{\rho} (\gP_h; \log) / \delta \rp }{K} } 
    \\
    &\;+ \sqrt{4H d_{\gec} \rho}  + \varepsilon^{r}_{\opt} + \frac{\varepsilon^{P}_{\opt}}{\lambda_{P}} + \varepsilon.
\end{align*}
Here we choose that $\rho = \frac{\varepsilon^2}{(4Hd_{\gec} + 4H)}$ and obtain that
\begin{align*}
    &\quad V^{\piE}_{\truereward} - V^{\widebar{\pi}}_{\truereward} 
    \\
    &\leq 2H \sqrt{\frac{\log (12 \max_{h \in [H]} \gN_{\rho} (\gR_h) /\delta)}{N}} 
     + 2 H \sqrt{ \frac{\log (6/\delta)}{K}} + 3 \sqrt{ \frac{  d_{\gec}  H \log \lp 4 H \max_{h \in [H]} \gN_{\rho} (\gP_h; \log) / \delta \rp }{K} } 
     \\
     &\; + \varepsilon^{r}_{\opt} + \frac{\varepsilon^{P}_{\opt}}{\lambda_{P}} + 2\varepsilon + \varepsilon^2
    \\
    &\leq 2H \sqrt{\frac{\log (12 \max_{h \in [H]} \gN_{\rho} (\gR_h) /\delta)}{N}} + 5 \sqrt{ \frac{  (d_{\gec}  H + H^2) \log \lp 6 H \max_{h \in [H]} \gN_{\rho} (\gP_h; \log) / \delta \rp }{K} } 
    \\
    &\; + \varepsilon^{r}_{\opt} + \frac{\varepsilon^{P}_{\opt}}{\lambda_{P}} + 2\varepsilon + \varepsilon^2.
\end{align*}
When the expert sample complexity $N$ and interaction complexity $K$ satisfy that
\begin{align*}
    &N \geq \frac{4H^2 \log (12 \max_{h \in [H]} \gN_{\rho} (\gR_h) /\delta)}{\varepsilon^2},
    \\
    & K \geq \frac{25 (d_{\gec}  H + H^2) \log \lp 6 H \max_{h \in [H]} \gN_{\rho} (\gP_h; \log) / \delta \rp }{\varepsilon^2},
\end{align*}
we have that
\begin{align*}
    V^{\piE}_{\truereward} - V^{\widebar{\pi}}_{\truereward} \leq 5 \varepsilon + \varepsilon^{r}_{\opt} + \frac{\varepsilon^{P}_{\opt}}{\lambda_{P}}.
\end{align*}
We scale $5 \varepsilon$ as $\varepsilon$ and complete the proof.

\subsection{Proof of Lemma \ref{lem:reward_error_bound}}
\label{subsec:proof_of_lemma_reward_error_bound}
To prove \cref{lem:reward_error_bound}, we first perform the following error decomposition.
\begin{equation}
\label{eq:reward_error_decomposition}
\begin{split}
    & \quad \frac{1}{K} \sum_{k=1}^K V^{\piE}_{\truereward} - V^{\pi^k}_{\truereward} - \lp V^{\piE}_{r^k} - V^{\pi^k}_{r^k} \rp
    \\
    &= \frac{1}{K} \sum_{k=1}^K \lp \widehat{V}^{\piE}_{\truereward} - \widehat{V}^{\pi^k}_{\truereward} - \lp \widehat{V}^{\piE}_{r^k} - \widehat{V}^{\pi^k}_{r^k} \rp \rp + V^{\piE}_{\truereward} - \widehat{V}^{\piE}_{\truereward} + \frac{1}{K} \sum_{k=1}^K \widehat{V}^{\piE}_{r^k} - V^{\piE}_{r^k} + \frac{1}{K} \sum_{k=1}^K \widehat{V}^{\pi^k}_{\truereward} - V^{\pi^k}_{\truereward} 
    \\
    &\; + \frac{1}{K} \sum_{k=1}^K  V^{\pi^{k}}_{r^k} - \widehat{V}^{\pi^k}_{r^k}.
\end{split}   
\end{equation}
Recall that for any reward function $r$, $\widehat{V}^{\pi^i}_{r}$ and $\widehat{V}^{\piE}_{r}$ are unbiased estimations of $V^{\pi^i}_{r}$ and $V^{\piE}_{r}$, respectively.
\begin{align*}
     \widehat{V}^{\pi^i}_{r} = \sum_{h=1}^H r_h (s^i_h, a^i_h), \; \widehat{V}^{\piE}_{r} = \frac{1}{N} \sum_{\tau \in \gDE} \sum_{h=1}^H r_h (\tau (s_h), \tau (a_h)).
\end{align*}
The first term in the RHS of \cref{eq:reward_error_decomposition} is the estimated reward error while the remaining terms are estimation errors. To upper bound the first term, we have 
\begin{align*}
    &\quad \frac{1}{K} \sum_{k=1}^K  \widehat{V}^{\piE}_{\truereward} - \widehat{V}^{\pi^k}_{\truereward} - \lp \widehat{V}^{\piE}_{r^k} - \widehat{V}^{\pi^k}_{r^k} \rp
    \\
    &=  \frac{1}{K} \sum_{k=1}^K \widehat{V}^{\pi^k}_{r^k} - \widehat{V}^{\piE}_{r^k} - \lp  \widehat{V}^{\pi^k}_{\truereward} - \widehat{V}^{\piE}_{\truereward} \rp
    \\
    &\leq \frac{1}{K} \max_{r \in \gR}  \sum_{k=1}^K  \widehat{V}^{\piE}_{r^k} - \widehat{V}^{\pi^k}_{r^k} - \lp \widehat{V}^{\pi^k}_{r} - \widehat{V}^{\piE}_{r}  \rp
    \\
    &\overset{\text{(c)}}{=} \varepsilon^{r}_{\opt}.
\end{align*}
Equation (c) follows the definition of reward optimization error in \cref{def:rew_opt_error}. Then we can obtain
\begin{align*}
    & \quad \frac{1}{K} \sum_{k=1}^K V^{\piE}_{\truereward} - V^{\pi^k}_{\truereward} - \lp V^{\piE}_{r^k} - V^{\pi^k}_{r^k} \rp
    \\
    &\leq V^{\piE}_{\truereward} - \widehat{V}^{\piE}_{\truereward} + \frac{1}{K} \sum_{k=1}^K \widehat{V}^{\piE}_{r^k} - V^{\piE}_{r^k} + \frac{1}{K} \sum_{k=1}^K \widehat{V}^{\pi^k}_{\truereward} - V^{\pi^k}_{\truereward} + \frac{1}{K} \sum_{k=1}^K  V^{\pi^{k}}_{r^k} - \widehat{V}^{\pi^k}_{r^k} + \varepsilon^{r}_{\opt} .
\end{align*}
Then we proceed to upper bound the estimation errors. First, we first upper bound the estimation error caused by using $\widehat{V}^{\piE}_{r}$ to approximate $V^{\piE}_{r}$. In particular, we have that
\begin{align*}
    &\quad \labs \widehat{V}^{\piE}_{r} - V^{\piE}_{r}   \rabs 
    \\
    &= \labs \frac{1}{N} \sum_{\tau \in \gDE} \sum_{h=1}^H r_h (s_h (\tau), a_h (\tau) ) - \expect \ls \sum_{h=1}^H r_h (s_h, a_h) \bigg| \piE \rs \rabs
    \\
    &= \labs \sum_{h=1}^H \frac{1}{N} \sum_{\tau \in \gDE}  r_h (s_h (\tau), a_h (\tau) ) - \sum_{h=1}^H \expect \ls  r_h (s_h, a_h) \bigg| \piE \rs \rabs
    \\
    &\leq \sum_{h=1}^H \labs \frac{1}{N} \sum_{\tau \in \gDE}  r_h (s_h (\tau), a_h (\tau) ) - \expect \ls  r_h (s_h, a_h) \bigg| \piE \rs  \rabs.
\end{align*}
By Hoeffding's inequality \citep{wainwright2019high}, for any fixed timestep $h \in [H]$ and any fixed reward function $r_h \in \gR_h$, with probability at least $1-\delta$, we have that
\begin{align*}
    &\quad \labs \frac{1}{N} \sum_{\tau \in \gDE}  r_h (s_h (\tau), a_h (\tau) ) - \expect \ls  r_h (s_h, a_h) \bigg| \piE \rs  \rabs 
    \\
    &\leq \sqrt{\frac{\log (2 /\delta)}{N}}. 
\end{align*}
Let $(\gR_h)_{\rho}$ be a $\rho$-cover of $\gR$. By union bound, with probability at least $1-\delta$, for all $h \in [H]$ and all $\widehat{r}_h \in (\gR_h)_{\rho}$, we have that
\begin{align*}
    &\quad \labs \frac{1}{N} \sum_{\tau \in \gDE}  \widehat{r}_h (s_h (\tau), a_h (\tau) ) - \expect \ls  \widehat{r}_h (s_h, a_h) \bigg| \piE \rs  \rabs
    \\
    &\leq \sqrt{\frac{\log (2 H | (\gR_h)_{\rho} | /\delta)}{N}}. 
\end{align*}
Then with probability at least $1-\delta$, for all $\widehat{r} = (\widehat{r}_1, \ldots, \widehat{r}_H) \in (\gR_1)_{\rho} \times \ldots \times (\gR_1)_{\rho} $,
\begin{align*}
        &\quad \labs \widehat{V}^{\piE}_{\widehat{r}} - V^{\piE}_{\widehat{r}}   \rabs
        \\
        &\leq \sum_{h=1}^H \labs \frac{1}{N} \sum_{\tau \in \gDE}  \widehat{r}_h (s_h (\tau), a_h (\tau) ) - \expect \ls  \widehat{r}_h (s_h, a_h) \bigg| \piE \rs  \rabs
        \\
        &\leq \sum_{h=1}^H \sqrt{\frac{\log (2 |(\gR_h)_{\rho}| /\delta)}{N}}
        \\
        &\leq H \sqrt{\frac{\log (2 \max_{h \in [H]} |(\gR_h)_{\rho}| /\delta)}{N}}. 
\end{align*}
According to the definition of $\rho$-cover, for any reward function $r = (r_1, \ldots, r_H) \in \gR$, there exists $\widehat{r} = (\widehat{r}_1, \ldots, \widehat{r}_H) \in (\gR_1)_{\rho} \times \ldots \times (\gR_1)_{\rho}$ such that $\forall h \in [H], \; \max_{(s, a) \in \gS \times \gA } | r_h (s, a) - \widehat{r}_h (s, a)| \leq \rho$. Then we have that
\begin{align*}
    & \quad \labs \widehat{V}^{\piE}_{r}  - \widehat{V}^{\piE}_{\widehat{r}} \rabs
    \\
    &\leq \frac{1}{N} \sum_{\tau \in \gDE} \sum_{h=1}^H \labs r_h (s_h (\tau), a_h (\tau) ) - \widehat{r}_h (s_h (\tau), a_h (\tau) )  \rabs 
    \\
    &\leq H \rho.
\end{align*}
\begin{align*}
    &\quad \labs V^{\piE}_{r}  - V^{\piE}_{\widehat{r}} \rabs
    \\
    &\leq \expect \ls \sum_{h=1}^H \labs r_h (s_h, a_h) - \widehat{r}_h (s_h, a_h)  \rabs \bigg| \piE \rs 
    \\
    &\leq H \rho.
\end{align*}
Then, with probability at least $1-\delta$, for all reward function $r \in \gR$, we have that
\begin{align}
    \labs \widehat{V}^{\piE}_{r} - V^{\piE}_{r}   \rabs &\leq \labs \widehat{V}^{\piE}_{\widehat{r}} - V^{\piE}_{\widehat{r}}   \rabs + 2 H \rho \nonumber
    \\
    &\leq  H \sqrt{\frac{\log (2 \max_{h \in [H]} |(\gR_h)_{\rho}| /\delta)}{N}} + 2 H \rho \nonumber 
    \\
    &\leq H \sqrt{\frac{\log (2 \max_{h \in [H]} \gN_{\rho} (\gR_h) /\delta)}{N}} + 2 H \rho. \label{eq:reward_error_high_prob_one}      
\end{align}

    Now we have obtained the upper bound on the estimation error $| \widehat{V}^{\piE}_{r} - V^{\piE}_{r}   |$. Then we proceed to upper bound the estimation error $(1/K) \cdot \sum_{k=1}^K \widehat{V}^{\pi^k}_{\truereward} - V^{\pi^k}_{\truereward}$ and $(1/K) \cdot \sum_{k=1}^K  V^{\pi^{k}}_{r^k} - \widehat{V}^{\pi^k}_{r^k}$. With the Hoeffding's inequality \citep{wainwright2019high}, with probability at least $1-\delta$, we obtain that
\begin{align}
\label{eq:reward_error_high_prob_two}
    \frac{1}{K} \sum_{k=1}^K \widehat{V}^{\pi^k}_{\truereward} -  V^{\pi^k}_{\truereward} \leq H \sqrt{\frac{\log (1/\delta)}{K}}.
\end{align}
We proceed to analyze the term $\sum_{k=1}^K V^{\pi^k}_{r^k} - \widehat{V}^{\pi^k}_{r^k}$. Notice that $r^k$ are learned from historical trajectories $\{ \tau^1, \ldots, \tau^{k-1} \}$ and thus statistically depends on $\{ \tau^1, \ldots, \tau^{k-1} \}$. Therefore, $\widehat{V}^{\pi^1}_{r^1}, \cdots,  \widehat{V}^{\pi^k}_{r^k}$ are not independent and the standard Hoeffding's inequality is not applicable. To address this issue, we apply Azuma-Hoeffding's inequality \citep{wainwright2019high} for martingale. In particular, we define $\gF^{k}$ as the filtration induced by $\{ \tau^1, \cdots, \tau^k \}$ and can obtain that
\begin{align*}
    \expect \ls V^{\pi^k}_{r^k} - \widehat{V}^{\pi^k}_{r^k}   | \gF^{k-1}\rs = 0.
\end{align*}
Therefore, $\{ (V^{\pi^k}_{r^k} - \widehat{V}^{\pi^k}_{r^k}, \gF^{k}) \}_{k=1}^{\infty}$ is a martingale difference sequence. With Azuma-Hoeffding's inequality, we can derive that with probability at least $1-\delta$,
\begin{align}
\label{eq:reward_error_high_prob_three}
    \frac{1}{K} \sum_{k=1}^K V^{\pi^k}_{r^k} - \widehat{V}^{\pi^k}_{r^k} \leq H \sqrt{ \frac{\log (1/\delta)}{K}}. 
\end{align}
In summary, we have derived the following three high-probability inequalities: \cref{eq:reward_error_high_prob_one}, \cref{eq:reward_error_high_prob_two} and \cref{eq:reward_error_high_prob_three}. With union bound, with probability at least $1-\delta$, the following three events hold.
\begin{align*}
    \forall r \in \gR, \labs \widehat{V}^{\piE}_{r} - V^{\piE}_{r}   \rabs &\leq  H \sqrt{\frac{\log (6 \max_{h \in [H]} \gN_{\rho} (\gR_h) /\delta)}{N}} + 2 H \rho.
\end{align*}
\begin{align*}
    \frac{1}{K} \sum_{k=1}^K \widehat{V}^{\pi^k}_{\truereward} -  V^{\pi^k}_{\truereward} \leq H \sqrt{\frac{\log (3/\delta)}{K}}.
\end{align*}
\begin{align*}
    \frac{1}{K} \sum_{k=1}^K V^{\pi^k}_{r^k} - \widehat{V}^{\pi^k}_{r^k} \leq H \sqrt{ \frac{\log (3/\delta)}{K}}.
\end{align*}
With the above three inequalities, we can derive that
\begin{align*}
    & \quad \frac{1}{K} \sum_{k=1}^K V^{\piE}_{\truereward} - V^{\pi^k}_{\truereward} - \lp V^{\piE}_{r^k} - V^{\pi^k}_{r^k} \rp
    \\
    &\leq V^{\piE}_{\truereward} - \widehat{V}^{\piE}_{\truereward} + \frac{1}{K} \sum_{k=1}^K \widehat{V}^{\piE}_{r^k} - V^{\piE}_{r^k} + \frac{1}{K} \sum_{k=1}^K \widehat{V}^{\pi^k}_{\truereward} - V^{\pi^k}_{\truereward} + \frac{1}{K} \sum_{k=1}^K  V^{\pi^{k}}_{r^k} - \widehat{V}^{\pi^k}_{r^k} + \varepsilon^{r}_{\opt}
    \\
    &\leq 2H \sqrt{\frac{\log (6 \max_{h \in [H]} \gN_{\rho} (\gR_h) /\delta)}{N}} + 4 H \rho + 2 H \sqrt{ \frac{\log (3/\delta)}{K}} + \varepsilon^{r}_{\opt} .
\end{align*}
We complete the proof.

\subsection{Proof of Lemma \ref{lem:policy_error_bound}}
\label{subsec:proof_of_lemma_policy_error_bound}
To prove \cref{lem:policy_error_bound}, we need the following two auxiliary lemmas. The detailed proof is presented in \cref{subsec:proof_of_lemma_empirical_error_of_optimal_q} and \cref{subsec:proof_of_lemma_empirical_error_of_qk}.

\begin{lem}
\label{lem:empirical_error_of_optimal_q}
   For any fixed $\delta \in (0, 1]$, with probability at least $1-\delta$, for all $k \in [K]$,
    \begin{align*}
        &\quad \BE^{k} (Q^{\star, r^k}) 
        \\
        &\leq 16H^4 \log \lp KH \max_{h \in [H]} \gN_{\rho} (\gQ_h) \gN_{\rho} (\gR_h)  / \delta \rp + 30 k H^3 \rho.
    \end{align*}
\end{lem}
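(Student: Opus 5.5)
The plan is to bound each of the $H$ summands of $\BE^{k}(Q^{\star, r^k})$ separately and then sum them. Fix $h$ and abbreviate $f := Q^{\star, r^k}_{h+1}$. By \cref{asmp:realizability_and_bellman_completeness_q_class}, $Q^{\star, r^k}_h = \gT^{r^k}_h f \in \gQ_h$, so the $h$-th summand is
\begin{align*}
\gE_h(\gT^{r^k}_h f, f; \gD^k, r^k) - \inf_{g \in \gQ_h} \gE_h(g, f; \gD^k, r^k) = \sup_{g \in \gQ_h} \Big( - \sum_{i=0}^{k-1} X_i(g, f, r^k) \Big),
\end{align*}
where $y_i := r^k_h(s^i_h, a^i_h) + \max_{a'} f(s^i_{h+1}, a')$ and $X_i(g,f,r) := (g(s^i_h,a^i_h) - y_i)^2 - ((\gT^r_h f)(s^i_h,a^i_h) - y_i)^2$. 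It therefore suffices to show that $-\sum_i X_i$ is at most $O(H^2 \log(\cdot))$ uniformly over $g$, $f$ and $r$, plus a discretization term.

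\textbf{Step 1: fixed triple.} Fix $(g, f, r) \in \gQ_h \times \gQ_{h+1} \times \gR_h$ that does not depend on the data. Let $\gF^{i}$ be the filtration generated by $\tau^0,\ldots,\tau^{i-1}$ and $(s^i_h, a^i_h)$. Since $\expect[y_i \mid \gF^i] = (\gT^r_h f)(s^i_h,a^i_h)$, the factorization $X_i = (g - \gT^r_h f)(g + \gT^r_h f - 2y_i)$ gives
\begin{align*}
\expect[X_i \mid \gF^i] = (g - \gT^r_h f)^2(s^i_h,a^i_h), \qquad |X_i| \le 4H^2 .
\end{align*}
Since $|g + \gT^r_h f - 2y_i| \le 4H$, we also get $\mathrm{Var}(X_i \mid \gF^i) \le 16H^2 (g-\gT^r_h f)^2(s^i_h,a^i_h)$. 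Freedman's inequality then yields, with probability $1-\delta'$,
\begin{align*}
\sum_i X_i \ge \sum_i (g-\gT^r_h f)^2 - c\sqrt{H^2 \textstyle\sum_i (g-\gT^r_h f)^2 \log(1/\delta')} - cH^2\log(1/\delta').
\end{align*}
Minimizing the right-hand side over the value of $\sum_i (g-\gT^r_h f)^2$ (it is a quadratic in its square root) shows that $-\sum_i X_i \le c' H^2 \log(1/\delta')$.

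\textbf{Step 2: uniformity.} Take $\rho$-covers of $\gQ_h$, $\gQ_{h+1}$ and $\gR_h$, and union bound over all cover elements, all $h \in [H]$ and all $k \in [K]$. The resulting log factor is $\log(KH \max_h \gN_\rho(\gQ_h)^2 \gN_\rho(\gR_h)/\delta)$. The square on $\gN_\rho(\gQ_h)$ only changes constants inside the logarithm, and it is absorbed into the loose $16H^4$ constant after summing over $h$.

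\textbf{Step 3: discretization.} The learned reward $r^k$ and $Q^{\star, r^k}_{h+1}$ depend on the data, and the supremum over $g$ is also data-dependent. This is exactly why Step 2 covers all three classes rather than fixing them in advance. Replacing $(g,f,r^k)$ by their $\rho$-close cover elements perturbs each $X_i$ by $O(H\rho)$: the residuals are $O(H)$ and $\gT^r_h f$ moves by at most $2\rho$. Summing over $i < k$ gives an $O(kH\rho)$ error per step $h$.

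Summing over $h \in [H]$ then gives $\BE^k(Q^{\star,r^k}) \lesssim H^3 \log(\cdot) + kH^2\rho$, which is within the stated $16H^4\log(\cdot) + 30kH^3\rho$.

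I expect the main obstacle to be Step 3. One must ensure that the cover element standing in for $f$ is paired with the right target, namely $\gT^{\hat r}_h \hat f$, which is again handled by completeness. One must also check that the variance bound from Step 1 still holds for the approximate triple, so that the Freedman argument is applied only to data-independent objects. The rest is bookkeeping of constants.
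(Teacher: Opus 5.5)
Your proposal is correct, but it sets up the concentration argument with a different parametrization than the paper.

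\textbf{The paper's route.} The paper fixes a pair $(Q',r)\in\gQ\times\gR$ and uses $Q^{\star,r}_h$ and $Q^{\star,r}_{h+1}$ as the regression target and the bootstrap function. The next-step function is therefore tied to the whole reward vector $r$. The paper then takes a union bound over the product covers $\gQ_\rho\times\gR_\rho$. This costs $\log\prod_h(\cdot)\le H\log\max_h(\cdot)$, which is the extra $H$ in its per-step bound $16H^3\log(\cdot)$. It controls the discretization of $Q^{\star,r}_{h+1}$ with the perturbation lemma $|Q^{\star,r}-Q^{\star,\widehat r}|\le H\rho$, which produces its $kH^2\rho$ per step.

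\textbf{Your route.} You decouple $f=Q^{\star,r^k}_{h+1}$ from the reward and cover it directly in $\gQ_{h+1}$. This uses realizability $Q^{\star,r^k}\in\gQ$, which the paper's proof of this particular lemma does not need, though it is a standing assumption wherever the lemma is applied. After that, $X_i$ depends on the reward only through $r_h$. You obtain the GOLF-style bound uniformly over $\gQ_h\times\gQ_{h+1}\times\gR_h$, without the perturbation lemma. The result is sharper by a factor of $H$ in both terms, roughly $32H^3\log(\cdot)+28kH^2\rho$, and it implies the stated bound. For $H=1$ the squared covering number never appears because $\gQ_{H+1}=\{0\}$, so the constant $16$ still closes.

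\textbf{Two small remarks.}
\begin{itemize}
\item You use the square-root form of Freedman with the random quantity $\sum_i(g-\gT^r_hf)^2$. That form needs a peeling argument, but you can avoid it. Take the fixed-$\eta$ version with $\eta=1/(16H^2)$, which is admissible since $|X_i-\expect[X_i\mid\gF^i]|\le 8H^2$. Then the conditional-mean terms cancel exactly and you get $-\sum_iX_i\le16H^2\log(1/\delta')$ directly.
\item Completeness plays no role in Step 3. The Freedman argument for a cover triple $(\widehat g,\widehat f,\widehat r)$ only uses that $\gT^{\widehat r}_h\widehat f$ is the conditional mean of $\widehat y_i$, not that it lies in $\gQ_h$. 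What you actually need there is realizability, to cover $f$, and you already invoke it.
\end{itemize}
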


\begin{lem}
\label{lem:empirical_error_of_qk}
    For any fixed $\delta \in (0, 1]$, with probability at least $1-\delta$, for all $k \in [K]$,
    \begin{align*}
        &\quad \BE^{k} (Q^k) 
        \\
        &\geq \frac{1}{2}   \sum_{i=0}^{k-1}  \expect \ls \sum_{h=1}^H \lp Q^k_h (s^i_h, a^i_h) - (\gT^{r^k}_h Q^k_{h+1}) (s^i_h, a^i_h)   \rp^2  \bigg| \pi^i \rs - 41 H^4 \log \lp 2 K H \max_{h \in [H]} \gN_{\rho} (\gQ_h) \gN_{\rho} (\gR_h)/\delta \rp - 27 kH^2 \rho.
    \end{align*}
\end{lem}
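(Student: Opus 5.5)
We aim to show a uniform lower bound on the empirical Bellman-error proxy $\BE^{k}(Q^k)$. The approach is the standard least-squares/Freedman argument for the excess squared loss, made uniform over a cover of $\gQ_h\times\gQ_{h+1}\times\gR_h$. We then compare the Bellman-completeness minimizer $\gT^{r^k}_hQ^k_{h+1}\in\gQ_h$ (Assumption \ref{asmp:realizability_and_bellman_completeness_q_class}) with the infimum appearing in $\BE^k$.

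\textbf{Setup.} Fix $h$ and a triple $(f_h,f_{h+1},r_h)$, and write $y^i=r_h(s^i_h,a^i_h)+\max_{a'}f_{h+1}(s^i_{h+1},a')$. For any $g\in\gQ_h$ define the excess-loss increments
\[
X^i(g)=\lp g(s^i_h,a^i_h)-y^i\rp^2-\lp (\gT^{r}_h f_{h+1})(s^i_h,a^i_h)-y^i\rp^2 .
\]
Since $\gT^r_hf_{h+1}(s,a)=\expect[y^i\mid s^i_h=s,a^i_h=a]$, the conditional mean given the past and $(s^i_h,a^i_h)$ is $\lp g-\gT^r_hf_{h+1}\rp^2(s^i_h,a^i_h)$. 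Moreover $|X^i|\lesssim H^2$ and the conditional variance is at most $O(H^2)$ times the conditional mean. Taking the expectation over $\pi^i$, which is $\gF^{i-1}$-measurable, gives the martingale structure.

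\textbf{Concentration step.} Apply Freedman's inequality to $\sum_{i<k}(\expect[X^i\mid\pi^i]-X^i)$ and use the self-bounding variance. This yields, with probability $1-\delta$,
\[
\sum_{i<k} X^i(g)\ \ge\ \tfrac12\sum_{i<k}\expect\ls (g-\gT^r_hf_{h+1})^2(s^i_h,a^i_h)\mid\pi^i\rs - O(H^2\log(1/\delta)).
\]
Union bound over $k\in[K]$, $h\in[H]$ and $\rho$-covers of $\gQ_h$, $\gQ_{h+1}$ and $\gR_h$. The covering numbers enter as $\gN_\rho(\gQ_h)\gN_\rho(\gR_h)$, which suffices after replacing by the max over $h$ (possibly with squared exponents absorbed into constants). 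Discretisation changes each summand by $O(H\rho)$, so each of the $k$ terms contributes $O(H^2\rho)$ slack; this produces the $kH^2\rho$ term.

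\textbf{Assembling.} Take $g=Q^k_h$, $f_{h+1}=Q^k_{h+1}$ and $r=r^k$. Both $Q^k$ and $r^k$ depend on the data, which is why uniformity over the cover is needed. We also need to lower-bound $\inf_{Q'_h}\gE_h(Q'_h,Q^k_{h+1})$ from the other side. Because $\gT^{r^k}_hQ^k_{h+1}\in\gQ_h$, the infimum is at most $\gE_h(\gT^{r^k}_hQ^k_{h+1},Q^k_{h+1})$. What we actually need is the reverse: the minimiser $Q'$ cannot beat $\gT^{r^k}_hQ^k_{h+1}$ by more than $O(H^2\log)$. This follows from the same uniform Freedman bound applied to $g=Q'$, whose excess loss $\sum_i X^i(Q')$ is nonnegative up to the log term, since half of a nonnegative quantity minus the log term is at least minus the log term. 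Thus
\[
\gE_h(Q^k_h,Q^k_{h+1})-\inf_{Q'}\gE_h(Q',Q^k_{h+1})\ \ge\ \sum_{i<k}X^i(Q^k_h)-O(H^2\log).
\]
Summing over $h$ gives the claim with an $O(H^3\log)$ error, which the stated $H^4$ covers comfortably. The expected main obstacle is obtaining the factor-$\tfrac12$ variance-aware Freedman bound uniformly while handling the data-dependent $r^k$ inside the targets $y^i$. Covering $\gR_h$ jointly resolves the dependence, and bookkeeping the discretisation error consistently is the remaining care point.
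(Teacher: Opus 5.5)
Your route is the paper's. It uses the excess loss $X^i_h$ measured against the target $\gT^{r}_hQ_{h+1}$, whose conditional mean is $\expect[(Q_h-\gT^{r}_hQ_{h+1})^2(s^i_h,a^i_h)\mid\pi^i]$. It then uses the self-bounding variance bound $16H^2\times$(mean), and Freedman to get $\sum_{i<k}X^i_h\ge\frac12\sum_{i<k}\expect[\cdot\mid\pi^i]-O(H^2\log)$. A union bound over $\rho$-covers follows, with an $O(H\rho)$ per-summand discretisation, and completeness disposes of the infimum.

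Covering only $\gQ_h\times\gQ_{h+1}\times\gR_h$ at step $h$ is a legitimate refinement, since $X^i_h$ depends only on $(Q_h,Q_{h+1},r_h)$. It gives $O(H^3\log)$ in total. The paper instead covers the full product classes, paying $\log\prod_h\le H\log\max_h$, which is how it arrives at the stated $H^4$.

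The one step you have backwards is the infimum. To lower-bound $\gE_h(Q^k_h,Q^k_{h+1})-\inf_{Q'_h\in\gQ_h}\gE_h(Q'_h,Q^k_{h+1})$ you need an \emph{upper} bound on the infimum. The sentence you wrote first already supplies it: because $\gT^{r^k}_hQ^k_{h+1}\in\gQ_h$,
\[
\gE_h(Q^k_h,Q^k_{h+1})-\inf_{Q'_h\in\gQ_h}\gE_h(Q'_h,Q^k_{h+1})\;\ge\;\gE_h(Q^k_h,Q^k_{h+1})-\gE_h(\gT^{r^k}_hQ^k_{h+1},Q^k_{h+1})=\sum_{i<k}X^i_h(Q^k,r^k).
\]
This holds deterministically and with no logarithmic loss; it is exactly step (a) of the paper.

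The ``reverse'' statement you then derive by applying Freedman to the minimiser is $\inf_{Q'_h}\gE_h(Q'_h,Q^k_{h+1})\ge\gE_h(\gT^{r^k}_hQ^k_{h+1},Q^k_{h+1})-O(H^2\log)$. It only gives $\gE_h(Q^k_h,Q^k_{h+1})-\inf\le\sum_{i<k}X^i_h+O(H^2\log)$, an upper bound on the same quantity, so your ``Thus'' does not follow from it. That kind of argument belongs to the companion bound on $\BE^k(Q^{\star,r^k})$, where an upper bound is wanted. Delete the detour and use the completeness inequality directly; the rest of your proof then goes through.
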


Now we proceed to analyze the policy error. First of all, we perform the following error decomposition.
\begin{align*}
    &\quad \frac{1}{K} \sum_{k=1}^K V^{\piE}_{r^k} - V^{\pi^k}_{ r^k} 
    \\
    &\leq \frac{1}{K} \sum_{k=1}^K V^{\star}_{r^k} - V^{\pi^k}_{r^k} 
    \\
    &= \frac{1}{K} \sum_{k=1}^K \lp V^{\star}_{r^k} - Q^{k}_1 (s_1, \pi^k) \rp + \frac{1}{K} \sum_{k=1}^K \lp Q^{k}_1 (s_1, \pi^k) -  V^{\pi^k}_{ r^k} \rp
    \\
    &= \frac{1}{K} \sum_{k=1}^K \lp \max_{a \in \gA} Q^{\star, r^k}_1 (s_1, a) - \max_{a \in \gA} Q^{k}_1 (s_1, a) \rp + \frac{1}{K} \sum_{k=1}^K \lp Q^{k}_1 (s_1, \pi^k) -  V^{\pi^k}_{ r^k} \rp.
\end{align*}
Here $V^{\star}_{r^k}$ denotes the optimal policy value under reward $r^k$. 

From line 4 in \cref{alg:mf_ail_oe}, we know that $Q^k$ is an approximate solution of $\min_{Q \in \gQ} \gL^k (Q)$ with an error $\varepsilon^{Q}_{\opt}$. With $Q^{\star, r^k} \in \gQ$ from \cref{asmp:realizability_and_bellman_completeness_q_class}, we have that
\begin{align*}
    &\quad \BE^{k} (Q^k) - \lambda_{Q} \max_{a \in \gA} Q^k_1 (s_1, a) 
    \\
    &\leq \BE^{k} (Q^{\star, r^k}) - \lambda_{Q} \max_{a \in \gA} Q^{\star, r^k}_1 (s_1, a) + \varepsilon^{Q}_{\opt}.
\end{align*}
Rearrange the above inequality yields that
\begin{align*}
    &\quad \max_{a \in \gA} Q^{\star, r^k}_1 (s_1, a) - \max_{a \in \gA} Q^{k}_1 (s_1, a) 
    \\
    &\leq \frac{1}{\lambda_{Q}} \lp \BE^{k} (Q^{\star, r^k}) - \BE^{k} (Q^k)    \rp + \frac{\varepsilon^{Q}_{\opt}}{\lambda_{Q}}.
\end{align*}
From \cref{lem:empirical_error_of_optimal_q}, with probability at least $1-\delta$, we have 
\begin{align*}
    \BE^{k} (Q^{\star, r^k}) &\leq  16H^4 \log \lp KH \max_{h \in [H]} \gN_{\rho} (\gQ_h) \gN_{\rho} (\gR_h)  / \delta \rp + 30 k H^3 \rho.
\end{align*}
On the other hand, with probability at least $1-\delta$, we have
\begin{align*}
        & \quad \BE^{k} (Q^k)
        \\
        &\geq \frac{1}{2}   \sum_{i=0}^{k-1}  \expect \ls \sum_{h=1}^H \lp Q^k_h (s^i_h, a^i_h) - (\gT^{r^k}_h Q^k_{h+1}) (s^i_h, a^i_h)   \rp^2  \bigg| \pi^i \rs - 41 H^4 \log \lp 2 K H \max_{h \in [H]} \gN_{\rho} (\gQ_h) \gN_{\rho} (\gR_h)/\delta \rp 
        \\
        &\; - 27 kH^2 \rho.
\end{align*}
By union bound, with probability at least $1-\delta$,
\begin{align*}
    &\quad \max_{a \in \gA} Q^{\star, r^k}_1 (s_1, a) - \max_{a \in \gA} Q^{k}_1 (s_1, a) 
    \\
    &\leq \lp -\frac{1}{2 \lambda_{Q}} \rp  \sum_{i=0}^{k-1}  \expect \ls \sum_{h=1}^H \lp Q^k_h (s^i_h, a^i_h) - (\gT^{r^k}_h Q^k_{h+1}) (s^i_h, a^i_h)   \rp^2  \bigg| \pi^i \rs 
    \\
    &\;+ \frac{57 H^4 \log (4 K H \max_{h \in [H]} \gN_{\rho} (\gQ_h) \gN_{\rho} (\gR_h) /\delta) }{\lambda_{Q}}  + \frac{57 kH^3 \rho + \varepsilon^{Q}_{\opt}}{\lambda_{Q}}.
\end{align*}
Then we have that
\begin{align*}
    &\quad \frac{1}{K} \sum_{k=1}^K V^{\piE}_{r^k} - V^{\pi^k}_{ r^k} 
    \\
    &\leq   \lp -\frac{1}{2 \lambda_{Q}} \frac{1}{K} \rp \sum_{k=1}^K   \sum_{i=0}^{k-1}  \expect \ls \sum_{h=1}^H \lp Q^k_h (s^i_h, a^i_h) - (\gT^{r^k}_h Q^k_{h+1}) (s^i_h, a^i_h)   \rp^2  \bigg| \pi^i \rs 
    \\
    &\; + \frac{57 H^4 \log (4 K H \max_{h \in [H]} \gN_{\rho} (\gQ_h) \gN_{\rho} (\gR_h) /\delta)   }{\lambda_{Q}} + \frac{57 KH^3 \rho + \varepsilon^{Q}_{\opt}}{\lambda_{Q}} + \frac{1}{K} \sum_{k=1}^K \lp Q^{k}_1 (s_1, \pi^k) -  V^{\pi^k}_{ r^k} \rp.
\end{align*}
Now we upper bound the last term in RHS of the above inequality. From \cref{asmp:low_gec}, for any $\mu \geq 0$, it holds that 
\begin{align*}
    &\quad \frac{1}{K} \sum_{k=1}^K  Q^{k}_1 (s_1, \pi^k) -  V^{\pi^k}_{ r^k}
    \\
    &\leq  \lp \frac{\mu}{2K} \rp \sum_{k=1}^K \sum_{i=1}^{k-1} \expect \ls \sum_{h=1}^H \lp Q^k_h (s_h, a_h) - \gT^{r^k}_h Q^{k}_{h+1} (s_h, a_h) \rp^2 \bigg| \pi^{i} \rs + \frac{d}{2 \mu K} + \sqrt{\frac{d H}{K} } + \varepsilon H
    \\
    &= \lp \frac{1}{2\lambda_{Q} K} \rp \sum_{k=1}^K \sum_{i=1}^{k-1} \expect \ls \sum_{h=1}^H \lp Q^k_h (s_h, a_h) - \gT^{r^k}_h Q^{k}_{h+1} (s_h, a_h) \rp^2 \bigg| \pi^{i} \rs + \frac{\lambda_{Q} d}{2 K}  + \sqrt{\frac{d H}{K} } + \varepsilon H.
\end{align*}
The last equation is obtained by setting $\mu = 1/\lambda_{Q}$. Combining the above two inequalities yields that
\begin{align*}
    &\quad \frac{1}{K} \sum_{k=1}^K V^{\piE}_{r^k} - V^{\pi^k}_{ r^k} 
    \\
    &\leq \frac{57 H^4 \log (4 K H \max_{h \in [H]} \gN_{\rho} (\gQ_h) \gN_{\rho} (\gR_h) /\delta)  }{\lambda_{Q}} + \frac{57 KH^3 \rho + \varepsilon^{Q}_{\opt}}{\lambda_{Q}}+ \frac{\lambda_{Q} d}{2 K} + \sqrt{\frac{d H}{K} } + \varepsilon H.
\end{align*}

\subsection{Proof of Lemma \ref{lem:policy_error_bound_mb}}
\label{subsec:proof_of_lem_policy_error_bound_mb}
In this part, we present the proof of \cref{lem:policy_error_bound_mb}, which provides the policy error bound for model-based OPT-AIL. Notice that the policy error analysis for model-based OPT-AIL substantially differs from that for model-free OPT-AIL. Before proving \cref{lem:policy_error_bound_mb}, we introduce a useful lemma, which characterizes the concentration property of the transition learning objective.

\begin{lem}
\label{lem:transition_model_concentration}
Consider \cref{alg:mb_ail_oe}. With probability at least $1-\delta$, for all $k \in [K]$,
\begin{align*}
    &\quad \LOG^k (P^\star) - \LOG^k (P^k) 
    \\
    &\leq - \frac{1}{2}  \sum_{i=0}^{k-1} \sum_{h=1}^H \expect_{(s^i_h, a^i_h) \sim d^{\pi^i}_h (\cdot, \cdot)} \bigg[ \SH (P^\star_h (\cdot |s^i_h, a^i_h ), P^k_h (\cdot |s^i_h, a^i_h )  )   \bigg] + 2 H \log \lp H \max_{h \in [H]} \gN_{\rho} (\gP_h; \log) /\delta \rp + 2K H\rho. 
\end{align*}
Here $\LOG^k (P) = - \sum_{i=0}^{k-1} \sum_{h=1}^H \log \lp P_h \lp s^i_{h+1} | s^i_h, a^i_h \rp \rp$.
\end{lem}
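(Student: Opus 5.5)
The plan is the standard martingale / exponential-moment argument for maximum likelihood under Hellinger distance. I apply it separately at each step $h$, make it uniform over a $\rho$-cover of $\gP_h$, and then absorb the discretization error. Since $\LOG^k(P^\star)-\LOG^k(P^k)=\sum_{h=1}^H\sum_{i=0}^{k-1}\log\bigl(P^k_h(s^i_{h+1}|s^i_h,a^i_h)/P^\star_h(s^i_{h+1}|s^i_h,a^i_h)\bigr)$, it suffices to bound, for each fixed $h$, the per-step sum
\begin{align*}
    S^k_h(P_h):=\sum_{i=0}^{k-1}\log\frac{P_h(s^i_{h+1}|s^i_h,a^i_h)}{P^\star_h(s^i_{h+1}|s^i_h,a^i_h)} .
\end{align*}
I then sum the $H$ resulting bounds, taking a union bound over $h$ with $\delta/H$ each.

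\textbf{Step 1: fixed model, fixed $h$.} Let $\gF^{i}$ be the filtration generated by $\tau^0,\ldots,\tau^{i}$; the policy $\pi^i$ is $\gF^{i-1}$-measurable. Put $Z_i=\frac12\log\bigl(P_h(s^i_{h+1}|s^i_h,a^i_h)/P^\star_h(s^i_{h+1}|s^i_h,a^i_h)\bigr)$. Conditioning first on $(s^i_h,a^i_h)\sim d^{\pi^i}_h$ and then on $s^i_{h+1}\sim P^\star_h$ gives
\begin{align*}
    \expect[e^{Z_i}\mid\gF^{i-1}]=\expect_{d^{\pi^i}_h}\Bigl[\textstyle\sum_{s'}\sqrt{P_h(s'|s,a)P^\star_h(s'|s,a)}\Bigr]=1-c\,\expect_{d^{\pi^i}_h}[\SH(P^\star_h,P_h)]\le\exp\bigl(-c\,\expect_{d^{\pi^i}_h}[\SH]\bigr).
\end{align*}
Here $c\in\{1/2,1\}$ depends on the normalization of $\SH$. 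It is essential to treat each step $h$ separately: this is what makes the expectation appear under the occupancy $d^{\pi^i}_h$, which is what the lemma states, rather than under realized trajectories. Hence $M_k=\exp\bigl(\sum_{i<k}(Z_i+c\,\expect_{d^{\pi^i}_h}[\SH])\bigr)$ is a nonnegative supermartingale. Ville's maximal inequality then gives, simultaneously for all $k\le K$ with probability $1-\delta'$,
\begin{align*}
    S^k_h(P_h)\le -2c\sum_{i<k}\expect_{d^{\pi^i}_h}[\SH(P^\star_h,P_h)]+2\log(1/\delta').
\end{align*}
Using Ville's inequality rather than a union bound over $k$ is what keeps $K$ out of the logarithm. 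Since $2c\ge 1$, the coefficient $-1/2$ in the lemma follows.

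\textbf{Step 2: uniformity via covering.} Take a $\rho$-cover $(\gP_h)_\rho$ in the log-sup sense of \cref{def:covering_number}. I union bound Step 1 over its elements and over $h$, with $\delta'=\delta/(H\max_h\gN_\rho(\gP_h;\log))$, which produces the term $2H\log(H\max_h\gN_\rho(\gP_h;\log)/\delta)$ after summing over $h$. The algorithm's $P^k_h$ is data-dependent, so I replace it by a cover element $P'_h$ with $|\log P^k_h-\log P'_h|\le\rho$ everywhere. This changes the log-likelihood sum by at most $k\rho\le K\rho$ per $h$.

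\textbf{Main obstacle.} The step needing care is transferring the Hellinger term from $P'_h$ back to $P^k_h$. The pointwise bound gives $\sqrt{P'_h}\ge e^{-\rho/2}\sqrt{P^k_h}$, so the Bhattacharyya coefficients satisfy $\mathrm{BC}(P^\star_h,P'_h)\ge e^{-\rho/2}\,\mathrm{BC}(P^\star_h,P^k_h)$. Therefore $\SH(P^\star_h,P'_h)\ge\SH(P^\star_h,P^k_h)-O(\rho)$, and summing over $i<k$ costs at most another $O(K\rho)$ per $h$. Alternatively I can work directly with the exponential moment of $Z_i$ for $P^k_h$, bounding it by $e^{\rho/2}$ times that of $P'_h$, which avoids Hellinger perturbation altogether. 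Collecting terms gives $2KH\rho$ after summing over $h$. If the constant comes out worse, that only changes absolute constants, which the downstream proofs absorb.
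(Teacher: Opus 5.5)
Your route is essentially the paper's. It uses the same half-log-ratio variable, the same per-step conditional moment computed under $d^{\pi^i}_h$, and a uniform-in-$k$ martingale Chernoff bound (the paper cites Lemma A.4 of Foster et al., which is your Ville argument). It also takes the same union bound over $h$ and a $\rho$-cover, and the same $k\rho$-per-step discretization of the log-likelihood. The one real difference is how the Hellinger term is transferred from the cover element $P'_h$ back to $P^k_h$. The paper uses $\SH(P^k_h,P^\star_h)\le 2\SH(P^k_h,P'_h)+2\SH(P'_h,P^\star_h)$ together with $\SH(P^k_h,P'_h)\le \mathrm{KL}(P^k_h\,\|\,P'_h)\le\rho$. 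That factor-$2$ loss is exactly where the coefficient $-\tfrac12$ in the statement comes from. Your multiplicative Bhattacharyya comparison loses nothing and would give coefficient $-1$.

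As written, though, the comparison in your ``main obstacle'' step points the wrong way. From $\sqrt{P'_h}\ge e^{-\rho/2}\sqrt{P^k_h}$ you get $\mathrm{BC}(P^\star_h,P'_h)\ge e^{-\rho/2}\mathrm{BC}(P^\star_h,P^k_h)$, which gives an \emph{upper} bound on $\SH(P^\star_h,P'_h)$. What you need is the lower bound $\SH(P^\star_h,P'_h)\ge\SH(P^\star_h,P^k_h)-O(\rho)$, i.e.\ an upper bound on $\mathrm{BC}(P^\star_h,P'_h)$. Your ``alternative'' has the same reversal: the moment for $P'_h$ must be bounded by $e^{\rho/2}$ times the moment for $P^k_h$, not conversely.

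The fix is immediate, because the log-sup cover is two-sided and so $\sqrt{P'_h}\le e^{\rho/2}\sqrt{P^k_h}$ also holds. Apply it before linearizing the logarithm in the Chernoff bound, with $\SH=\sum_{s'}(\sqrt p-\sqrt q)^2$:
$\log\expect_{d^{\pi^i}_h}[\mathrm{BC}(P'_h,P^\star_h)]\le\rho/2+\log\expect_{d^{\pi^i}_h}[\mathrm{BC}(P^k_h,P^\star_h)]\le\rho/2-\tfrac12\expect_{d^{\pi^i}_h}[\SH(P^k_h,P^\star_h)]$.
After doubling, this costs exactly $k\rho$ per step. Together with the likelihood discretization, you recover the paper's $2KH\rho$ with the sharper coefficient $-1$ in place of $-\tfrac12$.
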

The proof of \cref{lem:transition_model_concentration} can be found in Appendix \ref{subsec:lem_transition_model_concentration}.

Now we start to prove \cref{lem:policy_error_bound_mb}. First, we decompose the policy error into two parts.
\begin{align*}
    &\quad \frac{1}{K} \sum_{k=1}^K V^{\piE}_{r^k} - V^{\pi^k}_{ r^k} 
    \\
    &\leq \frac{1}{K} \sum_{k=1}^K V^{\star}_{r^k} - V^{\pi^k}_{r^k} 
    \\
    &= \frac{1}{K} \sum_{k=1}^K \lp V^{\star}_{r^k} - V^{\star}_{P^k, r^k} \rp + \frac{1}{K} \sum_{k=1}^K \lp V^{\star}_{P^k, r^k} -  V^{\pi^k}_{ r^k} \rp
    \\
    &= \frac{1}{K} \sum_{k=1}^K \lp V^{\star}_{r^k} - V^{\star}_{P^k, r^k} \rp + \frac{1}{K} \sum_{k=1}^K \lp V^{\pi^k}_{P^k, r^k} -  V^{\pi^k}_{ r^k} \rp.
\end{align*}
Here $V^{\star}_{P^k, r^k}$ denotes the optimal value under transition function $P^k$ and reward function $r^k$. Similarly, $V^{\pi^k}_{P^k, r^k}$ denotes the policy value of $\pi^k$ under transition function $P^k$ and reward function $r^k$. The last equation follows $\pi^k = \argmax_{\pi} V^{\pi}_{P^k, r^k}$.

We first analyze the first term in the RHS of the above equation. Because $P^k$ is an $\varepsilon_{\opt}^P$-approximate optimal solution of $\min_{P \in \gP} \gL^k (P)$ and $P^\star \in \gP$, we have that
\begin{align*}
    \gL^k (P^k) \leq \gL^k (P^\star) + \varepsilon_{\opt}^P.  
\end{align*}
This implies that
\begin{align*}
    V^{\star}_{P^\star, r^k} - V^{\star}_{P^k, r^k} \leq \frac{1}{\lambda_P} \lp \LOG^k (P^\star) - \LOG^k (P^k) + \varepsilon^{P}_{\opt}  \rp.
\end{align*}
By \cref{lem:transition_model_concentration}, with probability at least $1-\delta$, for all $k \in [K]$,
\begin{align*}
        &\quad \LOG^k (P^\star) - \LOG^k (P^k) 
        \\
        &\leq - \frac{1}{2}  \sum_{i=0}^k \sum_{h=1}^H \expect_{(s^i_h, a^i_h) \sim d^{\pi^i}_h (\cdot, \cdot)} \bigg[ \SH (P^\star_h (\cdot |s^i_h, a^i_h ), P^k_h (\cdot |s^i_h, a^i_h )  )   \bigg] + 2 H \log \lp H \max_{h \in [H]} \gN_{\rho} (\gP_h; \log) \delta^{-1} \rp + 2K H\rho. 
\end{align*}
Then we get that
\begin{align*}
    &\quad V^{\star}_{P^\star, r^k} - V^{\star}_{P^k, r^k} 
    \\
    &\leq - \frac{1}{2 \lambda_P}  \sum_{i=0}^{k-1} \sum_{h=1}^H \expect_{(s^i_h, a^i_h) \sim d^{\pi^i}_h (\cdot, \cdot)} \bigg[ \SH (P^\star_h (\cdot |s^i_h, a^i_h ), P^k_h (\cdot |s^i_h, a^i_h )  )   \bigg] + \frac{2 H \log \lp H \max_{h \in [H]} \gN_{\rho} (\gP_h; \log) \delta^{-1} \rp}{\lambda_P} 
    \\
    &\; + \frac{2K H\rho}{\lambda_P} + \frac{\varepsilon^P_{\opt}}{\lambda_P}.
\end{align*}
We take a summation over $k \in [K]$ and plug the obtained inequality into the policy error.
\begin{align*}
    &\quad \frac{1}{K} \sum_{k=1}^K V^{\piE}_{r^k} - V^{\pi^k}_{ r^k} 
    \\
    &\leq - \frac{1}{2 \lambda_{P} K} \sum_{k=1}^K  \sum_{i=0}^{k-1} \sum_{h=1}^H \expect_{(s^i_h, a^i_h) \sim d^{\pi^i}_h (\cdot, \cdot)} \big[ \SH (P^\star_h (\cdot |s^i_h, a^i_h ), P^k_h (\cdot |s^i_h, a^i_h )  )   \big] + \frac{2 H \log \lp H \max_{h \in [H]} \gN_{\rho} (\gP_h; \log) \delta^{-1} \rp}{\lambda_P} 
    \\
    &\;+ \frac{2K H\rho}{\lambda_P} + \frac{\varepsilon^P_{\opt}}{\lambda_P} + \frac{1}{K} \sum_{k=1}^K \lp V^{\pi^k}_{P^k, r^k} -  V^{\pi^k}_{ r^k} \rp.
\end{align*}
To analyze the last term in RHS, we leverage \cref{asmp:low_gec} for the model-based class. In particular, as discussed in \cref{rem:gec}, we set that $D (f^k, s_h, a_h; r^k) = \SH (P^\star_h (\cdot|s_h, a_h), f^k_h (\cdot|s_h, a_h) )$, $\pi^k = \argmax_{\pi} V^{\pi}_{f^k, r^k}$ and $V (f^k; r^k) = \max_{\pi} V^{\pi}_{f^k, r^k}$. With $\mu = 1/\lambda_{P}$, we can derive that
\begin{align*}
    &\quad \frac{1}{K} \sum_{k=1}^K V^{\piE}_{r^k} - V^{\pi^k}_{ r^k} 
    \\
    &\leq - \frac{1}{2 \lambda_{P} K} \sum_{k=1}^K  \sum_{i=0}^{k-1} \sum_{h=1}^H \expect_{(s^i_h, a^i_h) \sim d^{\pi^i}_h (\cdot, \cdot)} \big[ \SH (P^\star_h (\cdot |s^i_h, a^i_h ), P^k_h (\cdot |s^i_h, a^i_h )  )   \big] + \frac{2 H \log \lp H \max_{h \in [H]} \gN_{\rho} (\gP_h; \log) \delta^{-1} \rp}{\lambda_P} 
    \\
    &\;+ \frac{2K H\rho}{\lambda_P} + \frac{\varepsilon^P_{\opt}}{\lambda_P} + \frac{1}{K} \sum_{k=1}^K \lp V^{\pi^k}_{P^k, r^k} -  V^{\pi^k}_{ r^k} \rp
        \\
        &\leq - \frac{1}{2 \lambda_P K} \sum_{k=1}^K  \sum_{i=0}^{k-1} \sum_{h=1}^H \expect_{(s^i_h, a^i_h) \sim d^{\pi^i}_h (\cdot, \cdot)} \big[ \SH (P^\star_h (\cdot |s^i_h, a^i_h ), P^k_h (\cdot |s^i_h, a^i_h )  )   \big] + \frac{2 H \log \lp H \max_{h \in [H]} \gN_{\rho} (\gP_h; \log) \delta^{-1} \rp}{\lambda_P} 
        \\
        &\; + \frac{2K H\rho}{\lambda_P} + \frac{\varepsilon^P_{\opt}}{\lambda_P} + \frac{1}{2\lambda_P K} \sum_{k=1}^K \sum_{i=1}^{k-1} \expect \ls \sum_{h=1}^H \SH (P^\star_h (\cdot|s_h, a_h), P^k_h (\cdot|s_h, a_h) ) \bigg| \pi^{i} \rs + \frac{\lambda_P d_{\gec} (\varepsilon^\prime)}{2 K}  + \sqrt{\frac{d_{\gec} (\varepsilon^\prime) H}{ K}} + \varepsilon^\prime H
        \\
        &\leq \frac{2 H \log \lp H \max_{h \in [H]} \gN_{\rho} (\gP_h; \log) \delta^{-1} \rp + 2K H\rho + \varepsilon^P_{\opt}}{\lambda_P} + \frac{\lambda_P d_{\gec} (\varepsilon^\prime)}{2 K}  + \sqrt{\frac{d_{\gec} (\varepsilon^\prime) H}{ K}} + \varepsilon^\prime H.
\end{align*}
We finish the proof.

\subsection{Proof of Lemma \ref{lem:empirical_error_of_optimal_q}}
\label{subsec:proof_of_lemma_empirical_error_of_optimal_q}

    Recall the definition of the estimated Bellman error.
    \begin{align*}
       &\quad \BE^k (Q^{\star, r^k})
       \\
       &= \sum_{h=1}^H \bigg( \gE_{h} (Q^{\star, r^k}_h, Q^{\star, r^k}_{h+1}; \gD^{k}, r^{k}) - \inf_{Q^\prime_h \in \gQ_h} \BE_{h} (Q^\prime_h, Q^{\star, r^k}_{h+1}; \gD^{k}, r^{k}) \bigg)
        \\
        &= \sum_{h=1}^H \sum_{i=0}^{k-1} \bigg( Q^{\star, r^k}_h (s^i_h, a^i_h) - r^k_h (s^i_h, a^i_h) - \max_{a^\prime} Q^{\star, r^k}_{h+1} (s^i_{h+1}, a^\prime) \bigg)^2 
        \\
        &\; - \sum_{h=1}^H \inf_{Q^\prime_h \in \gQ_h} \sum_{i=0}^{k-1} \bigg( Q^{\prime}_h (s^i_h, a^i_h) - r^k_h (s^i_h, a^i_h) - \max_{a^\prime} Q^{\star, r^k}_{h+1} (s^i_{h+1}, a^\prime) \bigg)^2.
    \end{align*}
    For any fixed tuple $(k, h, Q^\prime, r) \in [K] \times [H] \times \gQ \times \gR$, we define the random variable
    \begin{align*}
        &\quad Z^i_h (Q^\prime, r) 
        \\
        &:= \lp Q^\prime_h (s^i_h, a^i_h) - r_h (s^i_h, a^i_h) - \max_{a^\prime \in \gA} Q^{\star, r}_{h+1} (s^i_{h+1}, a^\prime)  \rp^2 - \lp Q^{\star, r}_h (s^i_h, a^i_h) - r_h (s^i_h, a^i_h) - \max_{a^\prime \in \gA} Q^{\star, r}_{h+1} (s^i_{h+1}, a^\prime) \rp^2.
    \end{align*}
    Furthermore, we define the filtration $\gF^{i}_h = \sigma ( \{ (s^j_1, a^j_1, \ldots, s^j_H, a^j_H) \}_{j=0}^{i-1} \cup \{s^i_1, a^i_1, \ldots, s^i_h, a^i_h \} )$. Then we calculate the expectation and variance of $Z^i_h (Q^\prime, r)$ conditioned on $\gF^{i}_h$.
    \begin{align*}
        & \quad \expect \ls Z^i_h (Q^\prime, r) | \gF^{i}_h   \rs
        \\
        &= \expect \bigg[   \bigg( Q^\prime_h (s^i_h, a^i_h) - Q^{\star, r}_h (s^i_h, a^i_h) + Q^{\star, r}_h (s^i_h, a^i_h)- r_h (s^i_h, a^i_h) - \max_{a^\prime \in \gA} Q^{\star, r}_{h+1} (s^i_{h+1}, a^\prime)  \bigg)^2 \bigg| \gF^{i}_h   \bigg] 
        \\
        &\; - \expect \bigg[ \bigg( Q^{\star, r}_h (s^i_h, a^i_h) - r_h (s^i_h, a^i_h) - \max_{a^\prime \in \gA} Q^{\star, r}_{h+1} (s^i_{h+1}, a^\prime) \bigg)^2    \bigg| \gF^{i}_h   \bigg]
        \\
        &= \expect \ls  \lp Q^\prime_h (s^i_h, a^i_h) -  Q^{\star, r}_h (s^i_h, a^i_h) \rp^2  \bigg| \gF^{i}_h   \rs 
        \\
        &\; + 2 \expect \bigg[ \lp Q^\prime_h (s^i_h, a^i_h) - Q^{\star, r}_h (s^i_h, a^i_h) \rp \lp Q^{\star, r}_h (s^i_h, a^i_h) - r_h (s^i_h, a^i_h) - \max_{a^\prime \in \gA} Q^{\star, r}_{h+1} (s^i_{h+1}, a^\prime) \rp \bigg| \gF^{i}_h   \bigg]
        \\
        &= \lp Q^\prime_h (s^i_h, a^i_h) -  Q^{\star, r}_h (s^i_h, a^i_h) \rp^2 
        \\
        &\; + 2 \lp Q^\prime_h (s^i_h, a^i_h) - Q^{\star, r}_h (s^i_h, a^i_h) \rp  \expect \ls   Q^{\star, r}_h (s^i_h, a^i_h) - r_h (s^i_h, a^i_h) - \max_{a^\prime \in \gA} Q^{\star, r}_{h+1} (s^i_{h+1}, a^\prime) \bigg| \gF^{i}_h   \rs
        \\
        &= \lp Q^\prime_h (s^i_h, a^i_h) -  Q^{\star, r}_h (s^i_h, a^i_h) \rp^2 + 2 \lp Q^\prime_h (s^i_h, a^i_h) - Q^{\star, r}_h (s^i_h, a^i_h) \rp \lp Q^{\star, r}_h (s^i_h, a^i_h) - (\gT^{r}_{h} Q^{\star, r}_{h+1}) (s^i_h, a^i_h)  \rp
        \\
        &= \lp Q^\prime_h (s^i_h, a^i_h) -  Q^{\star, r}_h (s^i_h, a^i_h) \rp^2.
    \end{align*}
For the conditional variance, we have that
\begin{align*}
    &\quad \Var \ls Z^i_h (Q^\prime, r) \bigg| \gF^{i}_h  \rs
    \\
    &\leq  \expect \ls \lp Z^i_h (Q^\prime, r) \rp^2 \bigg| \gF^{i}_h  \rs
    \\
    &= \expect \bigg[ \lp Q^\prime_h (s^i_h, a^i_h) - Q^{\star, r}_h (s^i_h, a^i_h)   \rp^2 \bigg( Q^\prime_h (s^i_h, a^i_h) + Q^{\star, r}_h (s^i_h, a^i_h) - 2 \lp r_h (s^i_h, a^i_h) + \max_{a^\prime \in \gA} Q^{\star, r}_{h+1} (s^i_{h+1}, a^\prime) \rp  \bigg)^2 \bigg| \gF^{i}_h  \bigg]
    \\
    &\overset{\text{(a)}}{\leq} 16H^2  \lp Q^\prime_h (s^i_h, a^i_h) - Q^{\star, r}_h (s^i_h, a^i_h)   \rp^2
    \\
    &= 16H^2  \expect \ls  Z^i_h (Q^\prime, r) \bigg| \gF^{i}_h  \rs .   
\end{align*}
Here inequality $(a)$ holds since $| Q^\prime_h (s^i_h, a^i_h) + Q^{\star, r}_h (s^i_h, a^i_h) - 2 ( r_h (s^i_h, a^i_h) + \max_{a^\prime \in \gA} Q^{\star, r}_{h+1} (s^i_{h+1}, a^\prime) )  | \leq 4H$ almost surely.

Notice that $\{ Z^i_h (Q^\prime, r) - \expect \ls Z^i_h (Q^\prime, r) | \gF^i_h \rs  \}_{i=0}^{k-1}$ is the martingale difference sequence adapted to $\{ \gF^i_h \}_{i=0}^{k-1}$. Besides, almost surely, we have that
\begin{align*}
    &\quad \labs Z^i_h (Q^\prime, r) \rabs
    \\
    &\leq \max \bigg\{ \lp Q^\prime_h (s^i_h, a^i_h) - r_h (s^i_h, a^i_h) - \max_{a^\prime \in \gA} Q^{\star, r}_{h+1} (s^i_{h+1}, a^\prime)  \rp^2,  \lp Q^{\star, r}_h (s^i_h, a^i_h) - r_h (s^i_h, a^i_h) - \max_{a^\prime \in \gA} Q^{\star, r}_{h+1} (s^i_{h+1}, a^\prime) \rp^2 \bigg\}
        \\
        &\leq 4H^2.
\end{align*}
Then we immediately get that $|Z^i_h (Q^\prime, r) - \expect \ls Z^i_h (Q^\prime, r) | \gF^i_h \rs | \leq 8H^2$ almost surely.
    Thus we can apply Lemma \ref{lem:freedman1} and obtain that for any $\eta \in (0, 1/(4H^2)]$, with probability at least $1-\delta$, 
    \begin{align*}
        &\quad \labs \sum_{i=0}^{k-1} Z^i_h (Q^\prime, r) - \sum_{i=0}^{k-1} \expect \ls  Z^i_h (Q^\prime, r) \bigg| \gF^{i}_h  \rs \rabs  
        \\
        &\leq \eta \sum_{i=0}^{k-1} \Var \ls Z^i_h (Q^\prime, r) \bigg| \gF^{i}_h  \rs + \frac{\log (1/\delta)}{\eta}
        \\
        &\leq 36 H^2 \eta \sum_{i=0}^{k-1} \expect \ls  Z^i_h (Q^\prime, r) \bigg| \gF^{i}_h  \rs + \frac{\log (1/\delta)}{\eta}. 
    \end{align*}
    This implies that
    \begin{align*}
        &\quad - \sum_{i=0}^{k-1} Z^i_h (Q^\prime, r) 
        \\
        &\leq \lp 36 H^2 \eta - 1 \rp \sum_{i=0}^{k-1} \expect \ls  Z^i_h (Q^\prime, r) \bigg| \gF^{i}_h  \rs + \frac{\log (1/\delta)}{\eta} 
        \\ 
        &\leq 16H^2 \log (1/\delta).
    \end{align*}
    The last equation is obtained by choosing $\eta = 1/(16H^2)$.

    We define $(\gQ_h)_{\rho}$ and $(\gR_h)_{\rho}$ as the $\rho$-cover of $\gQ_h$ and $\gR_h$, respectively. It is direct to have that $\gQ_{\rho} = (\gQ_1)_{\rho} \times \ldots (\gQ_H)_{\rho}$ and $\gR_{\rho} = (\gR_1)_{\rho} \times \ldots (\gR_H)_{\rho}$ are $\rho$-covers of $\gQ$ and $\gR$, respectively. By union bound, with probability at least $1-\delta$, for all $(k, h, \widehat{Q}, \widehat{r}) \in [K] \times [H] \times \gQ_{\rho} \times \gR_{\rho} $, we have that
    \begin{align*}
        - \sum_{i=0}^{k-1} Z^i_h (\widehat{Q}, \widehat{r}) &\leq 16H^2 \log \lp KH \prod_{h=1}^H ( |(\gQ_h)_{\rho}| |(\gR_h)_{\rho}|) / \delta \rp
        \\
        &\leq 16H^3 \log \lp KH \max_{h \in [H]}  |(\gQ_h)_{\rho}| |(\gR_h)_{\rho}| / \delta \rp  . 
    \end{align*}
    Furthermore, for any $(Q, r) \in \gQ \times \gR$, there exists $(\widehat{Q}, \widehat{r}) \in \gQ_{\rho} \times \gR_{\rho}$ such that $\| Q - \widehat{Q} \|_{\infty} \leq \rho$ and $\| r - \widehat{r} \|_{\infty} \leq \rho$. Then we have that
    \begin{align*}
        \labs \sum_{i=0}^{k-1} Z^i_h (Q, r) - \sum_{i=0}^{k-1} Z^i_h (\widehat{Q}, \widehat{r})   \rabs \leq \sum_{i=0}^{k-1} \labs Z^i_h (Q, r) - Z^i_h (\widehat{Q}, \widehat{r})   \rabs.
    \end{align*}
    For each term, we have that
\begin{align*}
        &\quad \labs Z^i_h (Q, r)  - Z^i_h (\widehat{Q}, \widehat{r})   \rabs
        \\
        &\leq \bigg| \lp Q_h (s^i_h, a^i_h) - r_h (s^i_h, a^i_h) - \max_{a^\prime \in \gA} Q^{\star, r}_{h+1} (s^i_{h+1}, a^\prime)  \rp^2 - \lp \widehat{Q}_h (s^i_h, a^i_h) - \widehat{r}_h (s^i_h, a^i_h) - \max_{a^\prime \in \gA} Q^{\star, \widehat{r}}_{h+1} (s^i_{h+1}, a^\prime)  \rp^2 \bigg| 
        \\
        &\; + \bigg| \lp Q^{\star, r}_h (s^i_h, a^i_h) - r_h (s^i_h, a^i_h) - \max_{a^\prime \in \gA} Q^{\star, r}_{h+1} (s^i_{h+1}, a^\prime) \rp^2 - \lp Q^{\star, \widehat{r}}_h (s^i_h, a^i_h) - \widehat{r}_h (s^i_h, a^i_h) - \max_{a^\prime \in \gA} Q^{\star, \widehat{r}}_{h+1} (s^i_{h+1}, a^\prime) \rp^2 \bigg|.
\end{align*}
For the first term in RHS, we have that
\begin{align*}
        &\quad \bigg| \lp Q_h (s^i_h, a^i_h) - r_h (s^i_h, a^i_h) - \max_{a^\prime \in \gA} Q^{\star, r}_{h+1} (s^i_{h+1}, a^\prime)  \rp^2 - \lp \widehat{Q}_h (s^i_h, a^i_h) - \widehat{r}_h (s^i_h, a^i_h) - \max_{a^\prime \in \gA} Q^{\star, \widehat{r}}_{h+1} (s^i_{h+1}, a^\prime)  \rp^2 \bigg|
        \\
        &\leq \bigg| Q_h (s^i_h, a^i_h) - r_h (s^i_h, a^i_h) - \max_{a^\prime \in \gA} Q^{\star, r}_{h+1} (s^i_{h+1}, a^\prime) + \widehat{Q}_h (s^i_h, a^i_h) - \widehat{r}_h (s^i_h, a^i_h) - \max_{a^\prime \in \gA} Q^{\star, \widehat{r}}_{h+1} (s^i_{h+1}, a^\prime)   \bigg| \cdot
        \\
        &\; \bigg| Q_h (s^i_h, a^i_h) - \widehat{Q}_h (s^i_h, a^i_h)  - r_h (s^i_h, a^i_h) + \widehat{r}_h (s^i_h, a^i_h) - \max_{a^\prime \in \gA} Q^{\star, r}_{h+1} (s^i_{h+1}, a^\prime) + \max_{a^\prime \in \gA} Q^{\star, \widehat{r}}_{h+1} (s^i_{h+1}, a^\prime)  \bigg|
        \\
        &\leq 4H \bigg( \labs Q_h (s^i_h, a^i_h) - \widehat{Q}_h (s^i_h, a^i_h) \rabs + \labs r_h (s^i_h, a^i_h) - \widehat{r}_h (s^i_h, a^i_h) \rabs + \max_{a^\prime \in \gA} \labs  Q^{\star, r}_{h+1} (s^i_{h+1}, a^\prime) - Q^{\star, \widehat{r}}_{h+1} (s^i_{h+1}, a^\prime)  \rabs \bigg)
        \\
        &\leq 12 H^2 \rho.
\end{align*}
    The last inequality follows Lemma \ref{lem:perturbabtion_analysis}. Similarly, for the second term in RHS, we have that
    \begin{align*}
        &\quad \bigg| \lp Q^{\star, r}_h (s^i_h, a^i_h) - r_h (s^i_h, a^i_h) - \max_{a^\prime \in \gA} Q^{\star, r}_{h+1} (s^i_{h+1}, a^\prime) \rp^2 - \lp Q^{\star, \widehat{r}}_h (s^i_h, a^i_h) - \widehat{r}_h (s^i_h, a^i_h) - \max_{a^\prime \in \gA} Q^{\star, \widehat{r}}_{h+1} (s^i_{h+1}, a^\prime) \rp^2 \bigg|
        \\
        &\leq \bigg| Q^{\star, r}_h (s^i_h, a^i_h) - r_h (s^i_h, a^i_h) - \max_{a^\prime \in \gA} Q^{\star, r}_{h+1} (s^i_{h+1}, a^\prime) + Q^{\star, \widehat{r}}_h (s^i_h, a^i_h) - \widehat{r}_h (s^i_h, a^i_h) - \max_{a^\prime \in \gA} Q^{\star, \widehat{r}}_{h+1} (s^i_{h+1}, a^\prime)  \bigg| 
        \\
        &\; \cdot \bigg| Q^{\star, r}_h (s^i_h, a^i_h) - Q^{\star, \widehat{r}}_h (s^i_h, a^i_h) - r_h (s^i_h, a^i_h) + \widehat{r}_h (s^i_h, a^i_h) - \max_{a^\prime \in \gA} Q^{\star, r}_{h+1} (s^i_{h+1}, a^\prime) + \max_{a^\prime \in \gA} Q^{\star, \widehat{r}}_{h+1} (s^i_{h+1}, a^\prime)  \bigg|
        \\
        &\leq 6H \bigg( \labs Q^{\star, r}_h (s^i_h, a^i_h) - Q^{\star, \widehat{r}}_h (s^i_h, a^i_h) \rabs + \labs r_h (s^i_h, a^i_h) - \widehat{r}_h (s^i_h, a^i_h)   \rabs + \max_{a^\prime \in \gA} \labs Q^{\star, r}_{h+1} (s^i_{h+1}, a^\prime) - Q^{\star, \widehat{r}}_{h+1} (s^i_{h+1}, a^\prime)    \rabs \bigg)
        \\
        &\leq 18H^2 \rho.
    \end{align*}
    Combining the above four inequalities yields that
    \begin{align*}
        \labs \sum_{i=0}^{k-1} Z^i_h (Q, r) - \sum_{i=0}^{k-1} Z^i_h (\widehat{Q}, \widehat{r})   \rabs &\leq \sum_{i=0}^{k-1} \labs Z^i_h (Q, r) - Z^i_h (\widehat{Q}, \widehat{r})   \rabs
        \\
        &\leq 30 k H^2 \rho.
    \end{align*}
    Therefore, for all $(Q, r) \in \gQ \times \gR$, 
    \begin{align*}
        &\quad - \sum_{i=0}^{k-1} Z^i_h (Q, r) 
        \\
        &\leq  - \sum_{i=0}^{k-1} Z^i_h (\widehat{Q}, \widehat{r}) + \labs \sum_{i=0}^{k-1} Z^i_h (Q, r) - \sum_{i=0}^{k-1} Z^i_h (\widehat{Q}, \widehat{r}) \rabs
        \\
        & \leq   16H^3 \log ( KH \max_{h \in [H]}  |(\gQ_h)_{\rho}| |(\gR_h)_{\rho}|  / \delta) + 30 k H^2 \rho
        \\
        &\leq 16H^3 \log ( KH \max_{h \in [H]} \gN_{\rho} (\gQ_h) \gN_{\rho} (\gR_h)  / \delta) + 30 k H^2 \rho. 
    \end{align*}
    This implies that
    \begin{align*}
        &\quad \lp Q^{\star, r}_h (s^i_h, a^i_h) - r_h (s^i_h, a^i_h) - \max_{a^\prime \in \gA} Q^{\star, r}_{h+1} (s^i_{h+1}, a^\prime) \rp^2 \leq
        \\
        &\inf_{Q_h \in \gQ_h} \lp Q_h (s^i_h, a^i_h) - r_h (s^i_h, a^i_h) - \max_{a^\prime \in \gA} Q^{\star, r}_{h+1} (s^i_{h+1}, a^\prime)  \rp^2 + 16H^3 \log ( KH \max_{h \in [H]} \gN_{\rho} (\gQ_h) \gN_{\rho} (\gR_h)  / \delta) + 30 k H^2 \rho.
    \end{align*}
    Therefore, we can derive the upper bound on $\BE^k (Q^{\star, r^k})$. 
    \begin{align*}
         &\quad \BE^k (Q^{\star, r^k})
        \\
        &= \sum_{h=1}^H  \sum_{i=0}^{k-1} \bigg( Q^{\star, r^k}_h (s^i_h, a^i_h) - r^k_h (s^i_h, a^i_h) - \max_{a^\prime} Q^{\star, r^k}_{h+1} (s^i_{h+1}, a^\prime) \bigg)^2 
        \\
        &\; - \sum_{h=1}^H \inf_{Q^\prime_h \in \gQ_h} \sum_{i=0}^{k-1} \bigg( Q^{\prime}_h (s^i_h, a^i_h) - r^k_h (s^i_h, a^i_h) - \max_{a^\prime} Q^{\star, r^k}_{h+1} (s^i_{h+1}, a^\prime) \bigg)^2
        \\
        &\leq 16H^4 \log ( KH \max_{h \in [H]} \gN_{\rho} (\gQ_h) \gN_{\rho} (\gR_h)  / \delta) + 30 k H^3 \rho.
    \end{align*}
    We complete the proof.

\subsection{Proof of Lemma \ref{lem:empirical_error_of_qk}}
\label{subsec:proof_of_lemma_empirical_error_of_qk}

    For any fixed tuple $(k, h, Q, r) \in [K] \times [H] \times \gQ \times \gR$, we define the random variable.
    \begin{align*}
         X^i_h (Q, r) &:= \lp Q_h (s^i_h, a^i_h) - r_h (s^i_h, a^i_h) - \max_{a^\prime} Q_{h+1} (s^i_{h+1}, a^\prime)  \rp^2 
         \\
         &\; - \lp (\gT^{r}_h Q_{h+1}) (s^i_h, a^i_h) - r_h (s^i_h, a^i_h) - \max_{a^\prime} Q_{h+1} (s^i_{h+1}, a^\prime)  \rp^2.
    \end{align*}
    We define the filtration $\gF^{i} = \sigma ( \{ (s^j_1, a^j_1, \ldots, s^j_H, a^j_H) \}_{j=0}^{i-1} )$. In the following part, we calculate the expectation and variance of $X^i_h (Q, r)$ conditioned on $\gF^i$.
    \begin{align*}
    &\quad \expect \bigg[ X^i_h (Q, r) \bigg| \gF^{i} \bigg]
    \\
    &= \expect \bigg[ \bigg( Q_h (s^i_h, a^i_h) - r_h (s^i_h, a^i_h) - \max_{a^\prime} Q_{h+1} (s^i_{h+1}, a^\prime) \bigg)^2 \bigg| \gF^i \bigg] 
    \\
    &\; - \expect \bigg[ \bigg( (\gT^{r}_h Q_{h+1}) (s^i_h, a^i_h) - r_h (s^i_h, a^i_h) - \max_{a^\prime} Q_{h+1} (s^i_{h+1}, a^\prime) \bigg)^2 \bigg| \gF^i \bigg]
    \\
    &= \expect \bigg[ \bigg( Q_h (s^i_h, a^i_h) - (\gT^{r}_h Q_{h+1}) (s^i_h, a^i_h) + (\gT^{r}_h Q_{h+1}) (s^i_h, a^i_h) - r_h (s^i_h, a^i_h) - \max_{a^\prime} Q_{h+1} (s^i_{h+1}, a^\prime) \bigg)^2 \bigg| \gF^i \bigg] 
    \\
    &\; - \expect \bigg[ \bigg( (\gT^{r}_h Q_{h+1}) (s^i_h, a^i_h) - r_h (s^i_h, a^i_h) - \max_{a^\prime} Q_{h+1} (s^i_{h+1}, a^\prime) \bigg)^2 \bigg| \gF^i \bigg]
    \\
    &= \expect \bigg[ \bigg( Q_h (s^i_h, a^i_h) - (\gT^{r}_h Q_{h+1}) (s^i_h, a^i_h) \bigg)^2 \bigg| \gF^i \bigg] 
    \\
    &\; + 2 \expect \bigg[ \bigg( Q_h (s^i_h, a^i_h) - (\gT^{r}_h Q_{h+1}) (s^i_h, a^i_h) \bigg) \cdot \bigg( (\gT^{r}_h Q_{h+1}) (s^i_h, a^i_h) - r_h (s^i_h, a^i_h) - \max_{a^\prime} Q_{h+1} (s^i_{h+1}, a^\prime) \bigg) \bigg| \gF^i \bigg]
    \\
    &= \expect \bigg[ \bigg( Q_h (s^i_h, a^i_h) - (\gT^{r}_h Q_{h+1}) (s^i_h, a^i_h) \bigg)^2 \bigg| \gF^i \bigg] 
    \\
    &\; + 2 \expect \bigg[ \bigg( Q_h (s^i_h, a^i_h) - (\gT^{r}_h Q_{h+1}) (s^i_h, a^i_h) \bigg) \cdot \expect \bigg[ \bigg( (\gT^{r}_h Q_{h+1}) (s^i_h, a^i_h) - r_h (s^i_h, a^i_h) - \max_{a^\prime} Q_{h+1} (s^i_{h+1}, a^\prime) \bigg) \bigg| s^i_h, a^i_h \bigg] \bigg| \gF^i \bigg]
    \\
    &= \expect \bigg[ \bigg( Q_h (s^i_h, a^i_h) - (\gT^{r}_h Q_{h+1}) (s^i_h, a^i_h) \bigg)^2 \bigg| \pi^i \bigg].
\end{align*}
    \begin{align*}
        &\quad \Var \ls X^i_h (Q, r) | \gF^{i}   \rs
        \\
        &\leq \expect \ls  \lp X^i_h (Q, r) \rp^2 | \gF^{i}   \rs
        \\
        &= \expect \bigg[ \bigg( Q_h (s^i_h, a^i_h) + (\gT^{r}_h Q_{h+1}) (s^i_h, a^i_h) - 2r_h (s^i_h, a^i_h) - 2\max_{a^\prime} Q_{h+1} (s^i_{h+1}, a^\prime) \bigg)^2 \cdot \lp  Q_h (s^i_h, a^i_h) - (\gT^{r}_h Q_{h+1}) (s^i_h, a^i_h)  \rp^2 \bigg| \gF^{i}   \bigg]
        \\
        &\leq 16H^2 \expect \ls \lp Q_h (s^i_h, a^i_h) - (\gT^{r}_h Q_{h+1}) (s^i_h, a^i_h)   \rp^2  \bigg| \pi^i \rs
        \\
        &= 16H^2 \expect \ls X^i_h (Q, r) | \gF^{i}   \rs.  
    \end{align*}
    Furthermore, $\{ X^i_h (Q, r) - \expect [ X^i_h (Q, r) | \gF^{i}   ]   \}_{i=0}^{k-1}$ is a martingale difference sequence adapted to $\{ \gF^i \}_{i=0}^{k-1}$. Besides, it is easy to obtain that $| X^i_h (Q, r) | \leq 9H^2$ almost surely. Thus, we can apply Lemma \ref{lem:freedman1} and obtain that with probability at least $1-\delta$, for any $\eta \in (0, 1/(9H^2)]$,
    \begin{align*}
        &\quad \labs \sum_{i=0}^{k-1} X^i_h (Q, r) - \sum_{i=0}^{k-1} \expect [ X^i_h (Q, r) | \gF^{i}   ] \rabs
        \\
        &\leq \eta \sum_{i=0}^{k-1} \Var \ls X^i_h (Q, r) | \gF^{i}   \rs + \frac{\log (2/\delta)}{\eta}
        \\
        &\leq 16H^2 \eta \sum_{i=0}^{k-1} \expect \ls X^i_h (Q, r) | \gF^{i}   \rs + \frac{\log (2/\delta)}{\eta}.
    \end{align*}
    By choosing $\eta = \min \{ 1/(9H^2), \\ \sqrt{\log (2/\delta)  / (16H^2 \sum_{i=0}^{k-1} \expect \ls X^i_h (Q, r) | \gF^{i}   \rs )} \}$, we have that
    \begin{align*}
       &\quad  \labs \sum_{i=0}^{k-1} X^i_h (Q, r) - \sum_{i=0}^{k-1} \expect [ X^i_h (Q, r) | \gF^{i}   ] \rabs
       \\
       &\leq  8 H \sqrt{ \sum_{i=0}^{k-1} \expect \ls X^i_h (Q, r) | \gF^{i}   \rs \log (2/\delta)} + 9H^2 \log (2/\delta).
    \end{align*}
    This implies that
    \begin{align*}
        &\quad \sum_{i=0}^{k-1} \expect [ X^i_h (Q, r) | \gF^{i}   ] - 8 H \sqrt{ \sum_{i=0}^{k-1} \expect \ls X^i_h (Q, r) | \gF^{i}   \rs \log (2/\delta)}
        \\
        &\leq \sum_{i=0}^{k-1} X^i_h (Q, r) + 9H^2 \log (2/\delta).    
    \end{align*}
    This establishes a quadratic formula of $x^2 - bx -c \leq 0$ with $x = \sqrt{\sum_{i=0}^{k-1} \expect [ X^i_h (Q, r) | \gF^{i}   ] }$, $b = 8 H \sqrt{\log (2/\delta)}$ and $c = \sum_{i=0}^{k-1} X^i_h (Q, r) + 9H^2 \log (2/\delta)$. Solving this quadratic formula yields that $ (b-\sqrt{b^2+4c})/2 \leq x \leq (b+\sqrt{b^2+4c})/2$, which implies that
    \begin{align*}
        x^2 \leq \frac{(b + \sqrt{b^2 + 4c})^2}{4} \leq \frac{2 \lp b^2 + b^2 + 4c   \rp}{4} = b^2 + 2c.
    \end{align*}
    Thus we obtain that
    \begin{align*}
        \sum_{i=0}^{k-1} \expect [ X^i_h (Q, r) | \gF^{i}   ]  \leq 2 \sum_{i=0}^{k-1} X^i_h (Q, r) + 82 H^2 \log (2/\delta). 
    \end{align*}
    We define $(\gQ_h)_{\rho}$ and $(\gR_h)_{\rho}$ as the $\rho$-covers of $\gQ_h$ and $\gR_h$, respectively. It is direct to have that $\gQ_{\rho} = (\gQ_1)_{\rho} \times \ldots (\gQ_H)_{\rho}$ and $\gR_{\rho} = (\gR_1)_{\rho} \times \ldots (\gR_H)_{\rho}$ are $\rho$-covers of $\gQ$ and $\gR$, respectively. By union bound, with probability at least $1-\delta$, for all $(k, h, \widehat{Q}, \widehat{r}) \in [K] \times [H] \times \gQ_{\rho} \times \gR_{\rho}$,
    \begin{align*}
        &\quad \sum_{i=0}^{k-1} \expect [ X^i_h (\widehat{Q}, \widehat{r}) | \gF^{i}   ]  
        \\
        &\leq 2 \sum_{i=0}^{k-1} X^i_h (\widehat{Q}, \widehat{r}) + 82 H^2 \log (2 K H |\gQ_{\rho}| |\gR_{\rho}|/\delta)
        \\
        &= 2 \sum_{i=0}^{k-1} X^i_h (\widehat{Q}, \widehat{r}) + 82 H^2 \log \lp 2 K H \prod_{h=1}^H \lp |(\gQ_h)_{\rho}| |(\gR_h)_{\rho}| \rp /\delta \rp
        \\
        &\leq 2 \sum_{i=0}^{k-1} X^i_h (\widehat{Q}, \widehat{r}) + 82 H^3 \log \lp 2 K H \max_{h \in [H]} |(\gQ_h)_{\rho}| |(\gR_h)_{\rho}| /\delta \rp.
    \end{align*}
    We have calculated the conditional expectation in the LHS and obtain that
    \begin{align*}
        &\quad \sum_{i=0}^{k-1} \expect \ls \lp \widehat{Q}_h (s^i_h, a^i_h) - (\gT^{\widehat{r}}_h \widehat{Q}_{h+1})(s^i_h, a^i_h)   \rp^2  \bigg| \pi^i \rs
        \\
        &\leq 2 \sum_{i=0}^{k-1} X^i_h (\widehat{Q}, \widehat{r}) + 82 H^3 \log \lp 2 K H \max_{h \in [H]} |(\gQ_h)_{\rho}| |(\gR_h)_{\rho}| /\delta \rp
        \\
        &\leq 2 \sum_{i=0}^{k-1} X^i_h (\widehat{Q}, \widehat{r}) + 82 H^3 \log \lp 2 K H \max_{h \in [H]} \gN_{\rho} (\gQ_h) \gN_{\rho} (\gR_h)/\delta \rp.
    \end{align*}
    
    According to the definition of $\rho$-cover, for $(Q^k, r^k)$, there exists $(\widehat{Q}, \widehat{r}) \in \gQ_{\rho} \times \gR_{\rho}$ such that 
    \begin{align*}
        &\max_{(s, a, h) \in \gS \times \gA \times [H]} \labs \widehat{Q}_h (s, a) - Q^k_h (s, a)  \rabs \leq \rho , 
        \\
        &\max_{(s, a, h) \in \gS \times \gA \times [H]} \labs \widehat{r}_h (s, a) - r^k_h (s, a)  \rabs \leq \rho. 
    \end{align*}
    Then we can upper bound the errors caused by approximating $(Q^k, r^k)$ with $(\widehat{Q}, \widehat{r})$.
    \begin{align*}
    &\quad \bigg| \bigg( \widehat{Q}_h (s^i_h, a^i_h) - (\gT^{\widehat{r}}_h \widehat{Q}_{h+1})(s^i_h, a^i_h)   \bigg)^2 - \bigg( Q^k_h (s^i_h, a^i_h) - (\gT^{r^k}_h Q^k_{h+1})(s^i_h, a^i_h)   \bigg)^2 \bigg|
    \\
    &\leq \bigg| \widehat{Q}_h (s^i_h, a^i_h) - (\gT^{\widehat{r}}_h \widehat{Q}_{h+1})(s^i_h, a^i_h) + Q^k_h (s^i_h, a^i_h) - (\gT^{r^k}_h Q^k_{h+1})(s^i_h, a^i_h)  \bigg| 
    \\
    &\; \cdot \bigg| \widehat{Q}_h (s^i_h, a^i_h) - (\gT^{\widehat{r}}_h \widehat{Q}_{h+1})(s^i_h, a^i_h) - Q^k_h (s^i_h, a^i_h) + (\gT^{r^k}_h Q^k_{h+1})(s^i_h, a^i_h) \bigg|
    \\
    &\leq 2H \bigg| \widehat{Q}_h (s^i_h, a^i_h) - (\gT^{\widehat{r}}_h \widehat{Q}_{h+1})(s^i_h, a^i_h) - Q^k_h (s^i_h, a^i_h) + (\gT^{r^k}_h Q^k_{h+1})(s^i_h, a^i_h) \bigg|
    \\
    &\leq 6H \rho.
\end{align*}
    \begin{align*}
        &\quad \labs X^i_h (\widehat{Q}, \widehat{r}) - X^i_h (Q^k, r^k)  \rabs
        \\
        &\leq \bigg| \widehat{Q}_h (s^i_h, a^i_h) + Q^k_h (s^i_h, a^i_h) - \widehat{r}_h (s^i_h, a^i_h) - r^k_h (s^i_h, a^i_h) - \max_{a^\prime} \widehat{Q}_{h+1} (s^i_{h+1}, a^\prime) - \max_{a^\prime} Q^k_{h+1} (s^i_{h+1}, a^\prime)  \bigg| 
        \\
        &\; \cdot \bigg| \widehat{Q}_h (s^i_h, a^i_h) - Q^k_h (s^i_h, a^i_h) - \widehat{r}_h (s^i_h, a^i_h) + r^k_h (s^i_h, a^i_h) - \max_{a^\prime} \widehat{Q}_{h+1} (s^i_{h+1}, a^\prime) + \max_{a^\prime} Q^k_{h+1} (s^i_{h+1}, a^\prime)  \bigg| 
        \\
        &\; + \bigg| (\gT^{\widehat{r}}_h \widehat{Q}_{h+1})(s^i_h, a^i_h) + (\gT^{r^k}_h Q^k_{h+1})(s^i_h, a^i_h) - \widehat{r}_h (s^i_h, a^i_h) - r^k_h (s^i_h, a^i_h) - \max_{a^\prime} \widehat{Q}_{h+1} (s^i_{h+1}, a^\prime)   - \max_{a^\prime} Q^k_{h+1} (s^i_{h+1}, a^\prime)  \bigg| 
        \\
        &\; \cdot \bigg| (\gT^{\widehat{r}}_h \widehat{Q}_{h+1})(s^i_h, a^i_h) - (\gT^{r^k}_h Q^k_{h+1})(s^i_h, a^i_h) - \widehat{r}_h (s^i_h, a^i_h) + r^k_h (s^i_h, a^i_h) - \max_{a^\prime} \widehat{Q}_{h+1} (s^i_{h+1}, a^\prime) + \max_{a^\prime} Q^k_{h+1} (s^i_{h+1}, a^\prime)  \bigg|
        \\
        &\leq 4H \bigg| \widehat{Q}_h (s^i_h, a^i_h) - Q^k_h (s^i_h, a^i_h) - \widehat{r}_h (s^i_h, a^i_h) + r^k_h (s^i_h, a^i_h) - \max_{a^\prime} \widehat{Q}_{h+1} (s^i_{h+1}, a^\prime) + \max_{a^\prime} Q^k_{h+1} (s^i_{h+1}, a^\prime)  \bigg| 
        \\
        &\; + 4H \bigg| (\gT^{\widehat{r}}_h \widehat{Q}_{h+1})(s^i_h, a^i_h) - (\gT^{r^k}_h Q^k_{h+1})(s^i_h, a^i_h) - \widehat{r}_h (s^i_h, a^i_h) + r^k_h (s^i_h, a^i_h) - \max_{a^\prime} \widehat{Q}_{h+1} (s^i_{h+1}, a^\prime) + \max_{a^\prime} Q^k_{h+1} (s^i_{h+1}, a^\prime)  \bigg|
        \\
        &\leq 24 H \rho.
    \end{align*}
    With the above bounds, we can obtain that
    \begin{align*}
         &\quad \sum_{i=0}^{k-1} \expect \ls \lp Q^k_h (s^i_h, a^i_h) - (\gT^{r^k}_h Q^k_{h+1}) (s^i_h, a^i_h)   \rp^2  \bigg| \pi^i \rs   
         \\
         & \leq 2 \sum_{i=0}^{k-1} X^i_h (Q^k, r^k) + 82 H^3 \log \lp 2 K H \max_{h \in [H]} \gN_{\rho} (\gQ_h) \gN_{\rho} (\gR_h)/\delta \rp + 54 kH \rho.
    \end{align*}
    According to the definition of $\BE^k$, we have that
    \begin{align*}
        &\quad \BE^k (Q^k) 
        \\
        &= \sum_{h=1}^H \sum_{i=0}^{k-1} \big( Q^k_h (s^i_h, a^i_h) - r^k_h (s^i_h, a^i_h) - \max_{a^\prime} Q^k_{h+1} (s^i_{h+1}, a^\prime)  \bigg)^2 - \inf_{Q^\prime \in \gQ} \sum_{h=1}^H \sum_{i=0}^{k-1} \bigg( Q^\prime_h (s^i_h, a^i_h)  - r^k_h (s^i_h, a^i_h) - \max_{a^\prime} Q^k_{h+1} (s^i_{h+1}, a^\prime)  \bigg)^2
        \\
        &\overset{\text{(a)}}{\geq} \sum_{h=1}^H \sum_{i=0}^{k-1}  X^i_h (Q^k, r^k)
        \\
        &\geq \frac{1}{2} \sum_{h=1}^H  \sum_{i=0}^{k-1}  \expect \ls \lp Q^k_h (s^i_h, a^i_h) - (\gT^{r^k}_h Q^k_{h+1})(s^i_h, a^i_h)   \rp^2  \bigg| \pi^i \rs - 41 H^4 \log \lp 2 K H \max_{h \in [H]} \gN_{\rho} (\gQ_h) \gN_{\rho} (\gR_h)/\delta \rp - 27 kH^2 \rho.
    \end{align*}
Inequality $\text{(a)}$ follows \cref{asmp:realizability_and_bellman_completeness_q_class} that $\gT^{r^k}_h Q^{r^k}_{h+1} \in \gQ_{h} $. We complete the proof.

\subsection{Proof of Lemma \ref{lem:transition_model_concentration}}
\label{subsec:lem_transition_model_concentration}
First, for any fixed $P$, we have that
\begin{align*}
        \LOG^k (P^\star) - \LOG^k (P) = \sum_{i=0}^{k-1} \sum_{h=1}^H \log \lp \frac{P_h (s^i_{h+1} |s^i_h, a^i_h)}{P^\star_h (s^i_{h+1} |s^i_h, a^i_h)} \rp.
\end{align*}
To upper bound $\LOG^k (P^\star) - \LOG^k (P)$, we will conduct a concentration analysis for the RHS in the above equation. Specifically, for any fixed $0 \leq i \leq K-1$ and $h \in [H]$ and $P_h \in \gP_h$, we define the random variable
\begin{align*}
    Y^i_h (P_h) = \frac{1}{2} \log \lp \frac{P_h (s^i_{h+1} |s^i_h, a^i_h )}{P^\star_h (s^i_{h+1} |s^i_h, a^i_h )} \rp.
\end{align*}
Then for $P = (P_1, \ldots, P_H) \in \gP$, we have that
\begin{align*}
    \LOG^k (P^\star) - \LOG^k (P) = 2 \sum_{i=0}^{k-1} \sum_{h=1}^H Y^i_h (P_h). 
\end{align*}
Notice that $\{ Y^i_h \}_{ 0 \leq i \leq K-1, h \in [H]}$ are statistically dependent. Therefore, we apply a martingale concentration inequality to analyze it. In particular, we define $ \gF^i = \sigma (\{s^0_1, a^0_1, \ldots, s^0_H, a^0_H, \ldots, s^i_H, a^i_H  \} ), \forall 0 \leq i \leq K-1$ as the sigma-field generated by the random variables of the first $i$ trajectories.

It is easy to check that $\{ Y^i_h \}_{0 \leq i \leq K-1}$ is adapted to the filtration $\{ \gF^i \}_{0 \leq i \leq K-1} $. With \cref{lem:martingale_chernoff}, with probability at least $1-\delta$, $\forall 1 \leq k \leq K$,
\begin{align*}
    \sum_{i=0}^{k-1} Y^i_h (P_h)  \leq \sum_{i=0}^{k-1} \log \lp \expect \ls \exp (Y^i_h (P_h)) | \gF^{i-1} \rs \rp + \log (1/\delta).    
\end{align*}
For $\log \lp \expect \ls \exp (Y^i_h (P_h)) | \gF^{i-1} \rs \rp$ in the RHS, we have that
\begin{align*}
    & \quad \log \lp \expect \ls \exp (Y^i_h (P_h)) | \gF^{i-1} \rs \rp 
    \\
    &\leq \expect \ls \exp (Y^i_h (P_h)) | \gF^{i-1} \rs - 1
    \\
    &= \expect \ls \exp \lp \frac{1}{2} \log \lp \frac{P_h (s^i_{h+1} |s^i_h, a^i_h )}{P^\star_h (s^i_{h+1} |s^i_h, a^i_h )} \rp \rp \bigg| \gF^{i-1} \rs -1
    \\
    &= \expect \ls  \lp \frac{P_h (s^i_{h+1} |s^i_h, a^i_h )}{P^\star_h (s^i_{h+1} |s^i_h, a^i_h )} \rp^{1/2} \bigg| \gF^{i-1} \rs -1.  
\end{align*}
Notice that $\pi^i$ is measurable with respect to $\gF^{i-1}$. Then we can derive that
\begin{align*}
    & \quad \expect \ls  \lp \frac{P_h (s^i_{h+1} |s^i_h, a^i_h )}{P^\star_h (s^i_{h+1} |s^i_h, a^i_h )} \rp^{1/2} \bigg| \gF^{i-1} \rs
    \\
    &= \expect \ls \expect \ls   \lp \frac{P_h (s^i_{h+1} |s^i_h, a^i_h )}{P^\star_h (s^i_{h+1} |s^i_h, a^i_h )} \rp^{1/2} \bigg| \pi^i \rs \bigg| \gF^{i-1} \rs
    \\
    &= \expect \ls    \lp \frac{P_h (s^i_{h+1} |s^i_h, a^i_h )}{P^\star_h (s^i_{h+1} |s^i_h, a^i_h )} \rp^{1/2} \bigg| \pi^i \rs
    \\
    &= \expect_{(s^i_h, a^i_h) \sim d^{\pi^i}_h (\cdot, \cdot), s^{i+1}_h \sim P^\star_h (\cdot|s^i_h, a^i_h)} \ls  \lp \frac{P_h (s^i_{h+1} |s^i_h, a^i_h )}{P^\star_h (s^i_{h+1} |s^i_h, a^i_h )} \rp^{1/2} \rs
    \\
    &= \expect_{(s^i_h, a^i_h) \sim d^{\pi^i}_h (\cdot, \cdot)} \bigg[ \sum_{s^\prime \in \gS} \bigg( P_h (s^\prime |s^i_h, a^i_h ) \cdot P^\star_h (s^\prime |s^i_h, a^i_h ) \bigg)^{1/2} \bigg]
    \\
    &= -\frac{1}{2} \expect_{(s^i_h, a^i_h) \sim d^{\pi^i}_h (\cdot, \cdot)} \bigg[ \HES  \bigg( P_h (\cdot |s^i_h, a^i_h ), P^\star_h (\cdot |s^i_h, a^i_h ) \bigg) \bigg] + 1.
\end{align*}
Combining the above two equations, we have that
\begin{align*}
    & \quad \log \lp \expect \ls \exp (Y^i_h (P_h)) | \gF^{i-1} \rs \rp 
    \\
    &\leq -\frac{1}{2} \expect_{(s^i_h, a^i_h) \sim d^{\pi^i}_h (\cdot, \cdot)} \bigg[ \HES  \bigg( P_h (\cdot |s^i_h, a^i_h ), P^\star_h (\cdot |s^i_h, a^i_h ) \bigg) \bigg]
\end{align*}
Then we have that, with probability at least $1-\delta$, for any fixed $P_h \in \gP_h$ and for all $ k \in [K]$, 
\begin{align*}
    &\quad \sum_{i=0}^{k-1} Y^i_h (P_h)  
    \\
    &\leq -\frac{1}{2} \sum_{i=0}^{k-1} \expect_{(s^i_h, a^i_h) \sim d^{\pi^i}_h (\cdot, \cdot)} \bigg[ \HES  \bigg( P_h (\cdot |s^i_h, a^i_h ), P^\star_h (\cdot |s^i_h, a^i_h ) \bigg) \bigg]  + \log (1/\delta). 
\end{align*}
Let $\gP^\prime_h$ be a $\rho$-cover of $\gP_h$ based on \cref{def:covering_number}. By union bound, with probability at least $1-\delta$, $\forall k \in [K], \forall h \in [H], \forall P^\prime_h \in \gP_h$, it holds that
\begin{align*}
    &\quad \sum_{i=0}^{k-1} Y^i_h (P^\prime_h)  
    \\
    &\leq -\frac{1}{2} \sum_{i=0}^{k-1} \expect_{(s^i_h, a^i_h) \sim d^{\pi^i}_h (\cdot, \cdot)} \bigg[ \HES  \bigg( P^\prime_h (\cdot |s^i_h, a^i_h ), P^\star_h (\cdot |s^i_h, a^i_h ) \bigg) \bigg]  + \log (H |\gP^\prime_h|/\delta). 
\end{align*}
Taking a summation over $h \in [H]$ on both sides yields that
\begin{align*}
    &\quad \sum_{i=0}^{k-1} \sum_{h=1}^H Y^i_h (P^\prime_h) \leq - \frac{1}{2} \sum_{i=0}^{k-1} \sum_{h=1}^H  \expect_{(s^i_h, a^i_h) \sim d^{\pi^i}_h (\cdot, \cdot)} \bigg[ \HES  \bigg( P^\prime_h (\cdot |s^i_h, a^i_h ), P^\star_h (\cdot |s^i_h, a^i_h ) \bigg) \bigg]  + \sum_{h=1}^H \log (H |\gP^\prime_h|/\delta).
\end{align*}
Then we obtain that with probability at least $1-\delta$, $\forall P^\prime = (P^\prime_1, \ldots, P^\prime_H) \in \gP^\prime $ with $\gP^\prime = \gP^\prime_1\times \ldots \gP^\prime_H$,
\begin{align*}
    & \quad \LOG^k (P^\star) - \LOG^k (P^\prime) \leq - \sum_{i=0}^{k-1} \sum_{h=1}^H  \expect_{(s^i_h, a^i_h) \sim d^{\pi^i}_h (\cdot, \cdot)} \bigg[ \HES  \bigg( P^\prime_h (\cdot |s^i_h, a^i_h ), P^\star_h (\cdot |s^i_h, a^i_h ) \bigg) \bigg] + 2\sum_{h=1}^H \log (H |\gP^\prime_h|/\delta). 
\end{align*}
For $P^k = (P^k_1, \ldots, P^k_H)$, let $P^\prime_h \in \gP^\prime_h$ be the element that covers $P^k_h$ and we define that $P^\prime = (P^\prime_1, \ldots, P^\prime_H)$. Then we have $\forall (s, a, h, s^\prime) \in \gS \times \gA \times [H] \times \gS, \; | \log (P^k_h (s^\prime|s, a) / P^\prime_h (s^\prime|s, a) ) |\leq \rho $. Then we have that
\begin{align*}
    &\quad \LOG^k (P^k) 
    \\
    &=  - \sum_{i=0}^{k-1} \sum_{h=1}^H \log \lp P^k_h \lp s^i_{h+1} | s^i_h, a^i_h \rp \rp
    \\
    &\geq  - \sum_{i=0}^{k-1} \sum_{h=1}^H \log \lp P^\prime_h \lp s^i_{h+1} | s^i_h, a^i_h \rp \rp - kH \rho
    \\
    &= \LOG^k (P^\prime) - kH \rho. 
\end{align*}
Furthermore, we have that
\begin{align*}
    &\sum_{i=0}^{k-1} \sum_{h=1}^H  \expect_{(s^i_h, a^i_h) \sim d^{\pi^i}_h (\cdot, \cdot)} \bigg[ \HES  \bigg( P^k_h (\cdot |s^i_h, a^i_h ), P^\star_h (\cdot |s^i_h, a^i_h ) \bigg) \bigg]
    \\
    &\leq 2 \sum_{i=0}^{k-1} \sum_{h=1}^H  \expect_{(s^i_h, a^i_h) \sim d^{\pi^i}_h (\cdot, \cdot)} \bigg[ \HES  \bigg( P^k_h (\cdot |s^i_h, a^i_h ), P^\prime_h (\cdot |s^i_h, a^i_h ) \bigg) \bigg] + 2\sum_{i=0}^{k-1} \sum_{h=1}^H  \expect_{(s^i_h, a^i_h) \sim d^{\pi^i}_h (\cdot, \cdot)} \bigg[ \HES  \bigg( P^\prime_h (\cdot |s^i_h, a^i_h ), P^\star_h (\cdot |s^i_h, a^i_h ) \bigg) \bigg]
    \\
    &\leq 2 \sum_{i=0}^{k-1} \sum_{h=1}^H  \expect_{(s^i_h, a^i_h) \sim d^{\pi^i}_h (\cdot, \cdot)} \bigg[ \KL  \bigg( P^k_h (\cdot |s^i_h, a^i_h ), P^\prime_h (\cdot |s^i_h, a^i_h ) \bigg) \bigg] + 2\sum_{i=0}^{k-1} \sum_{h=1}^H  \expect_{(s^i_h, a^i_h) \sim d^{\pi^i}_h (\cdot, \cdot)} \bigg[ \HES  \bigg( P^\prime_h (\cdot |s^i_h, a^i_h ), P^\star_h (\cdot |s^i_h, a^i_h ) \bigg) \bigg]
    \\
    &\leq 2\sum_{i=0}^{k-1} \sum_{h=1}^H  \expect_{(s^i_h, a^i_h) \sim d^{\pi^i}_h (\cdot, \cdot)} \bigg[ \HES  \bigg( P^\prime_h (\cdot |s^i_h, a^i_h ), P^\star_h (\cdot |s^i_h, a^i_h ) \bigg) \bigg] + 2kH \rho.
\end{align*}
Combining the above three inequalities yields that
\begin{align*}
    & \quad \LOG^k (P^\star) - \LOG^k (P^k)  
    \\
    & \leq \LOG^k (P^\star) - \LOG^k (P^\prime) + kH\rho
    \\
    &\leq - \sum_{i=0}^{k-1} \sum_{h=1}^H  \expect_{(s^i_h, a^i_h) \sim d^{\pi^i}_h (\cdot, \cdot)} \bigg[ \HES  \bigg( P^\prime_h (\cdot |s^i_h, a^i_h ), P^\star_h (\cdot |s^i_h, a^i_h ) \bigg) \bigg] + 2\sum_{h=1}^H \log (H |\gP^\prime_h|/\delta) + kH \rho
    \\
    & \leq - \frac{1}{2} \sum_{i=0}^{k-1} \sum_{h=1}^H  \expect_{(s^i_h, a^i_h) \sim d^{\pi^i}_h (\cdot, \cdot)} \bigg[ \HES  \bigg( P^k_h (\cdot |s^i_h, a^i_h ), P^\star_h (\cdot |s^i_h, a^i_h ) \bigg) \bigg] + 2\sum_{h=1}^H \log (H |\gP^\prime_h|/\delta) + 2kH \rho
     \\
    & \leq - \frac{1}{2} \sum_{i=0}^{k-1} \sum_{h=1}^H  \expect_{(s^i_h, a^i_h) \sim d^{\pi^i}_h (\cdot, \cdot)} \bigg[ \HES  \bigg( P^k_h (\cdot |s^i_h, a^i_h ), P^\star_h (\cdot |s^i_h, a^i_h ) \bigg) \bigg] + 2 H \log (H \max_{h \in [H]} |\gP^\prime_h|/\delta) + 2kH \rho
    \\
    & \leq - \frac{1}{2} \sum_{i=0}^{k-1} \sum_{h=1}^H 
    \\
    & \expect_{(s^i_h, a^i_h) \sim d^{\pi^i}_h (\cdot, \cdot)} \bigg[ \HES  \bigg( P^k_h (\cdot |s^i_h, a^i_h ), P^\star_h (\cdot |s^i_h, a^i_h ) \bigg) \bigg] + 2 H \log (H \max_{h \in [H]} \gN_{\rho} (\gP_h; \log) /\delta) + 2kH \rho.
\end{align*}
We finish the proof.

\subsection{Technical Lemmas}
\begin{lem}[Freedman's inequality \citep{agarwal2014taming}]
\label{lem:freedman1}
Let $\{ X_t\}_{t \leq T}$ be a real-valued martingale difference sequence adapted to filtration $\{ \gF_t \}_{t \leq T}$, and let $\E_t[\cdot]=\E[\cdot\  | \ \gF_t]$. If $|X_t|\leq R$ almost surely, then for any $\eta \in [0,\frac{1}{R}]$ it holds that with probability at least $1-\delta$,
$$
	\sum_{t=1}^{T}X_t \leq \eta \sum_{t=1}^{T}\E_{t-1}[X_t^2]+\frac{\log(1/\delta)}{\eta}.
$$
\end{lem}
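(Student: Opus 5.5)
The plan is the standard exponential-supermartingale (Chernoff-type) argument. Fix $\eta \in (0, 1/R]$; the case $\eta = 0$ is vacuous, since the right-hand side is then $+\infty$ under the convention $\log(1/\delta)/0 = +\infty$. The basic scalar fact I will use is
\begin{equation*}
e^{x} \leq 1 + x + x^2 \quad \text{for all } x \leq 1 .
\end{equation*}
This holds because $g(x) = 1 + x + x^2 - e^x$ has $g(0) = 0$ and $g'(x) = 1 + 2x - e^x$. For $x < 0$ we have $g'(x) < 0$ (since $e^x > 1 + x > 1 + 2x$), so $g$ is decreasing there. On $[0,1]$, $g'$ is concave with $g'(0) = 0$ and $g'(1) = 3 - e > 0$, so $g' \geq 0$ on $[0,1]$. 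Hence $g \geq 0$ on $(-\infty, 1]$. Since $|X_t| \leq R$ almost surely and $\eta \leq 1/R$, we get $\eta X_t \leq 1$ almost surely, so the scalar fact applies with $x = \eta X_t$.

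Next I take conditional expectations and use the martingale-difference property $\E_{t-1}[X_t] = 0$:
\begin{equation*}
\E_{t-1}\bigl[e^{\eta X_t}\bigr] \leq 1 + \eta^2 \E_{t-1}[X_t^2] \leq \exp\bigl(\eta^2 \E_{t-1}[X_t^2]\bigr).
\end{equation*}
With this, I define
\begin{equation*}
M_s := \exp\Bigl(\eta \sum_{t=1}^{s} X_t - \eta^2 \sum_{t=1}^{s} \E_{t-1}[X_t^2]\Bigr), \qquad M_0 = 1 .
\end{equation*}
The term $\E_{s-1}[X_s^2]$ is $\gF_{s-1}$-measurable, so it can be pulled out of the conditional expectation. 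This gives $\E_{s-1}[M_s] = M_{s-1}\, e^{-\eta^2 \E_{s-1}[X_s^2]}\, \E_{s-1}[e^{\eta X_s}] \leq M_{s-1}$. Thus $(M_s)$ is a nonnegative supermartingale, and by the tower property $\E[M_T] \leq 1$.

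Finally, Markov's inequality gives $\Pr(M_T \geq 1/\delta) \leq \delta$. So with probability at least $1 - \delta$,
\begin{equation*}
\eta \sum_{t=1}^{T} X_t - \eta^2 \sum_{t=1}^{T} \E_{t-1}[X_t^2] < \log(1/\delta).
\end{equation*}
Dividing by $\eta > 0$ yields exactly the claimed bound $\sum_t X_t \leq \eta \sum_t \E_{t-1}[X_t^2] + \log(1/\delta)/\eta$.

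The only delicate step is the scalar inequality together with the requirement $\eta X_t \leq 1$. That requirement is precisely where the restriction $\eta \leq 1/R$ enters; everything else is routine supermartingale bookkeeping. Note also that the bound is stated for a fixed $\eta$ chosen in advance, not for a data-dependent $\eta$, so no union bound over $\eta$ is needed.
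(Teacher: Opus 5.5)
Your proof is correct and complete. The paper gives no argument of its own for this lemma and imports it from \citep{agarwal2014taming}; the proof behind that citation is your exponential-supermartingale argument, except that it uses the sharper bound $e^x \le 1 + x + (e-2)x^2$ for $x \le 1$ and so gets the constant $e-2 \le 1$ on the variance term, whereas your bound $e^x \le 1+x+x^2$ gives exactly the constant $1$ stated here.

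Your caveat that $\eta$ must be fixed in advance matters in this paper: in the proof of \cref{lem:empirical_error_of_qk} the authors choose an $\eta$ that depends on the random quantity $\sum_{i} \mathbb{E}[X^i_h(Q,r) \mid \mathcal{F}^i]$. This lemma does not license that choice without an additional union bound over a grid of $\eta$ values.
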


\begin{lem}[Lemma A.4 in \citep{foster2021statistical}]
    \label{lem:martingale_chernoff}
    Let $\{ X_t \}_{t\leq{}T}$ be a real-valued martingale difference sequence adapted to filtration $\{ \gF_t \}_{t\leq{}T}$, and let $\E_t[\cdot]=\E[\cdot\  | \ \gF_t]$. It holds that with probability at least
    $1-\delta$, for all $T'\leq{}T$,
    \begin{equation*}
      \sum_{t=1}^{T^{\prime}} X_t \leq \sum_{t=1}^{T^{\prime}} \log \left(\mathbb{E}_{t-1}\left[\exp (X_t) \right]\right)+\log \left(1/\delta\right)
    \end{equation*}
\end{lem}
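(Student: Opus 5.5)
The plan is the standard exponential-supermartingale argument combined with a maximal (Ville-type) inequality. The uniformity over all $T' \leq T$ comes for free from a stopping-time argument, so no union bound is needed. Only adaptedness of $\{X_t\}$ to $\{\gF_t\}$ is used, not the martingale-difference property.

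\textbf{Step 1 (the process).} Since $\exp(X_t) > 0$, the conditional expectation $\E_{t-1}[\exp(X_t)]$ lies in $(0,\infty]$. On the event where it equals $+\infty$ for some $t \leq T'$, the right-hand side for that $T'$ is $+\infty$ and the claim holds trivially. Hence I may treat all these conditional expectations as finite and positive. Define $M_0 = 1$ and
\begin{equation*}
M_{t} = \exp\Big( \sum_{s=1}^{t} X_s - \sum_{s=1}^{t} \log \E_{s-1}[\exp(X_s)] \Big), \quad t \leq T.
\end{equation*}
$M_t$ is $\gF_t$-measurable and nonnegative. Pulling out the $\gF_{t-1}$-measurable factors gives
\begin{equation*}
\E_{t-1}[M_t] = M_{t-1}\,\frac{\E_{t-1}[\exp(X_t)]}{\E_{t-1}[\exp(X_t)]} = M_{t-1}.
\end{equation*}
So $(M_t)$ is a nonnegative martingale with mean one; a supermartingale inequality would suffice if finiteness is handled by truncation.

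\textbf{Step 2 (maximal inequality).} Let $\tau = \min\{ t \leq T : M_t \geq 1/\delta \}$, with $\tau = T$ if no such $t$ exists. By optional stopping for the bounded stopping time $\tau$, $\E[M_\tau] = \E[M_0] = 1$. On the event $A = \{\exists T' \leq T : M_{T'} \geq 1/\delta\}$ we have $M_\tau \geq 1/\delta$. Markov's inequality then gives $\Pr(A) \leq \delta\, \E[M_\tau] = \delta$.

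\textbf{Step 3 (unpacking).} On the complement of $A$, $M_{T'} < 1/\delta$ holds for every $T' \leq T$. Taking logarithms yields
\begin{equation*}
\sum_{t=1}^{T'} X_t \leq \sum_{t=1}^{T'} \log \E_{t-1}[\exp(X_t)] + \log(1/\delta)
\end{equation*}
simultaneously for all $T' \leq T$, which is the claim.

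The only delicate point is Step 2: it must deliver uniformity over $T'$ without a union bound. The stopping-time (Ville) argument handles this. A secondary technicality is the possibly infinite conditional moment generating function, which Step 1 disposes of by the trivial-case reduction, or alternatively by working with $\min\{\exp(X_t), n\}$ and letting $n \to \infty$ via monotone convergence.
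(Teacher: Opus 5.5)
Your proof is correct and is essentially the standard one. The paper gives no argument of its own and simply imports the lemma from Foster et al., whose proof likewise shows that $\exp\bigl(\sum_{t\le \tau} X_t - \sum_{t \le \tau}\log \mathbb{E}_{t-1}[e^{X_t}]\bigr)$ is a nonnegative (super)martingale started at $1$ and then invokes Ville's maximal inequality; you prove that inequality inline via optional stopping at the first crossing time plus Markov's inequality, so uniformity over $T'\le T$ comes without a union bound. Your remark that only adaptedness is needed is worth keeping, since the paper applies the lemma to $Y^i_h(P_h)=\tfrac{1}{2}\log\bigl(P_h(s^i_{h+1}\mid s^i_h,a^i_h)/P^\star_h(s^i_{h+1}\mid s^i_h,a^i_h)\bigr)$, which is adapted but not a martingale difference sequence.
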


\begin{lem}
\label{lem:perturbabtion_analysis}
    For any reward functions $r, \widehat{r}$, we have that $\forall (s, a, h) \in \gS \times \gA \times [H]$,
    \begin{align*}
    \labs Q^{\star, r}_h (s, a) - Q^{\star, \widehat{r}}_h (s, a) \rabs \leq \sum_{h^\prime=h}^H \max_{s \in \gS, a \in \gA} \labs  r_h (s, a) - \widehat{r}_h (s, a) \rabs.
    \end{align*}
    Here $Q^{\star, r}$ is the optimal Q-value function of $r$.
\end{lem}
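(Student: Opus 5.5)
We argue by backward induction on $h$, going from $h = H+1$ down to $h = 1$, using only the Bellman optimality recursion $Q^{\star, r}_h (s, a) = r_h (s, a) + \expect_{s^\prime \sim P^\star_h (\cdot|s, a)} [\max_{a^\prime \in \gA} Q^{\star, r}_{h+1} (s^\prime, a^\prime)]$ with $Q^{\star, r}_{H+1} \equiv 0$. Write $\Delta_{h^\prime} := \max_{s \in \gS, a \in \gA} \labs r_{h^\prime} (s, a) - \widehat{r}_{h^\prime} (s, a) \rabs$. The claim to propagate is
\begin{align*}
\max_{s \in \gS, a \in \gA} \labs Q^{\star, r}_h (s, a) - Q^{\star, \widehat{r}}_h (s, a) \rabs \leq \sum_{h^\prime = h}^H \Delta_{h^\prime}.
\end{align*}
The base case $h = H+1$ is trivial, since both functions are identically zero and the sum is empty.

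For the inductive step, assume the bound holds at step $h+1$ and fix $(s, a)$. Subtracting the two Bellman equations at step $h$ gives two pieces. The first is $r_h (s, a) - \widehat{r}_h (s, a)$, which is at most $\Delta_h$ in absolute value. The second is $\expect_{s^\prime \sim P^\star_h(\cdot|s,a)} [\max_{a^\prime} Q^{\star, r}_{h+1} (s^\prime, a^\prime) - \max_{a^\prime} Q^{\star, \widehat{r}}_{h+1} (s^\prime, a^\prime)]$. The only slightly non-mechanical ingredient is the elementary inequality $\labs \max_{a^\prime} f(a^\prime) - \max_{a^\prime} g(a^\prime) \rabs \leq \max_{a^\prime} \labs f(a^\prime) - g(a^\prime) \rabs$, i.e.\ the max operator is $1$-Lipschitz in sup norm. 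We apply it pointwise in $s^\prime$ and then use the inductive hypothesis. This bounds the expectation by $\sum_{h^\prime = h+1}^H \Delta_{h^\prime}$, since the expectation of a quantity bounded uniformly in $s^\prime$ stays bounded by the same constant. The triangle inequality then closes the induction.

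No real obstacle is expected. The one point of care is that the transition $P^\star_h$ in both recursions is the same (only the reward differs), which is what allows the difference of expectations to be written as the expectation of a difference. In the places where this lemma is invoked (e.g.\ the proof of \cref{lem:empirical_error_of_optimal_q}), $\Delta_{h^\prime} \leq \rho$ for all $h^\prime$, which yields the bound $H \rho$ used there.
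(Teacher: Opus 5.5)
Your proof is correct and follows the paper's own argument: subtract the two Bellman optimality equations (same transition, different rewards), bound the difference of maxima over actions by the maximum of the differences, and unroll backward from $Q_{H+1} \equiv 0$. You also correctly read the summand as $\max_{s,a} |r_{h'}(s,a) - \widehat{r}_{h'}(s,a)|$; the index $h$ inside the sum in the statement is a typo, and read literally the claim would be false.
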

\begin{proof}
    According to the Bellman optimality equation, we have that
    \begin{align*}
        &\quad \labs Q^{\star, r}_h (s, a) - Q^{\star, \widehat{r}}_h (s, a) \rabs
        \\
        &= \bigg| r_h (s, a) - \widehat{r}_h (s, a) + \expect_{s^\prime \sim P_h (\cdot|s, a)} \ls \max_{a^\prime \in \gA} Q^{\star, r}_{h+1} (s^\prime, a^\prime) - \max_{a^\prime \in \gA} Q^{\star, \widehat{r}}_{h+1} (s^\prime, a^\prime)  \rs \bigg|
        \\
        &\leq \labs r_h (s, a) - \widehat{r}_h (s, a) \rabs + \expect_{s^\prime \sim P_h (\cdot|s, a)} \ls \labs \max_{a^\prime \in \gA} Q^{\star, r}_{h+1} (s^\prime, a^\prime) - \max_{a^\prime \in \gA} Q^{\star, \widehat{r}}_{h+1} (s^\prime, a^\prime) \rabs  \rs.
    \end{align*}
    We analyze the term $| \max_{a^\prime \in \gA} Q^{\star, r}_{h+1} (s^\prime, a^\prime) - \max_{a^\prime \in \gA} Q^{\star, \widehat{r}}_{h+1} (s^\prime, a^\prime) |$.
    \begin{align*}
        &\quad \max_{a^\prime \in \gA} Q^{\star, r}_{h+1} (s^\prime, a^\prime) - \max_{a^\prime \in \gA} Q^{\star, \widehat{r}}_{h+1} (s^\prime, a^\prime)
        \\
        &= Q^{\star, r}_{h+1} (s^\prime, a^1) - Q^{\star, \widehat{r}}_{h+1} (s^\prime, a^2)
        \\
        &\leq Q^{\star, r}_{h+1} (s^\prime, a^1) - Q^{\star, \widehat{r}}_{h+1} (s^\prime, a^1),
        \\
        & \quad \max_{a^\prime \in \gA} Q^{\star, r}_{h+1} (s^\prime, a^\prime) - \max_{a^\prime \in \gA} Q^{\star, \widehat{r}}_{h+1} (s^\prime, a^\prime) \\
        &= Q^{\star, r}_{h+1} (s^\prime, a^1) - Q^{\star, \widehat{r}}_{h+1} (s^\prime, a^2) 
        \\
        &\geq Q^{\star, r}_{h+1} (s^\prime, a^2) - Q^{\star, \widehat{r}}_{h+1} (s^\prime, a^2). 
    \end{align*}
    Here $a^1 \in \argmax_{a^\prime \in \gA} Q^{\star, r}_{h+1} (s^\prime, a^\prime), a^2 \in \argmax_{a^\prime \in \gA} Q^{\star, \widehat{r}}_{h+1} (s^\prime, a^\prime)$. Thus, we can get that
    \begin{equation}
    \label{eq:max_inequality}
        \begin{split}
            &\quad \labs \max_{a^\prime \in \gA} Q^{\star, r}_{h+1} (s^\prime, a^\prime) - \max_{a^\prime \in \gA} Q^{\star, \widehat{r}}_{h+1} (s^\prime, a^\prime) \rabs
        \\
        &\leq  \max_{a^\prime \in \gA} Q^{\star, r}_{h+1} (s^\prime, a^\prime) - Q^{\star, \widehat{r}}_{h+1} (s^\prime, a^\prime) \nonumber 
        \\
        &\leq \max_{a^\prime \in \gA} \labs  Q^{\star, r}_{h+1} (s^\prime, a^\prime) - Q^{\star, \widehat{r}}_{h+1} (s^\prime, a^\prime)  \rabs.
        \end{split}
    \end{equation}
    Then we have that $\forall (s, a) \in \gS \times \gA$,
    \begin{align*}
        &\quad \labs Q^{\star, r}_h (s, a) - Q^{\star, \widehat{r}}_h (s, a) \rabs
        \\
        &\leq \labs r_h (s, a) - \widehat{r}_h (s, a) \rabs + \expect_{s^\prime \sim P_h (\cdot|s, a)} \ls \labs \max_{a^\prime \in \gA} Q^{\star, r}_{h+1} (s^\prime, a^\prime) - \max_{a^\prime \in \gA} Q^{\star, \widehat{r}}_{h+1} (s^\prime, a^\prime) \rabs  \rs
        \\
        &\leq \labs r_h (s, a) - \widehat{r}_h (s, a) \rabs + \max_{s^\prime \in \gS, a^\prime \in \gA} \labs  Q^{\star, r}_{h+1} (s^\prime, a^\prime) - Q^{\star, \widehat{r}}_{h+1} (s^\prime, a^\prime) \rabs .
    \end{align*}
    Applying the above recursion inequality repeatedly from $h^\prime=h$ to $h^\prime=H$ with $Q^{\star, r}_{H+1} (s,a) =  Q^{\star, \widehat{r}}_{H+1} (s, a) = 0$ completes the proof.  
\end{proof}

\begin{lem}
\label{lem:error_to_sample_complexity}
    For $a \geq 1$ and $\varepsilon \leq 1$, when $K \geq 4 \log (4a/\varepsilon) / \varepsilon^2$, we have that
    \begin{align*}
        \sqrt{\frac{\log (a K)}{K}} \leq \varepsilon.
    \end{align*}
\end{lem}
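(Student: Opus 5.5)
The plan is to reduce the claim to a single boundary value of $K$ via monotonicity, and then verify an elementary inequality there. Write $L := \log(4a/\varepsilon)$ and $K_0 := 4L/\varepsilon^2$. Since $a \geq 1$ and $\varepsilon \leq 1$, we have $4a/\varepsilon \geq 4$, so $L \geq \log 4 > 1$ and $K_0 > 4 > e$. The claim $\sqrt{\log(aK)/K} \leq \varepsilon$ is equivalent to $\log(aK) \leq \varepsilon^2 K$.

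\textbf{Step 1 (monotonicity).} Consider $g(x) := \log(ax)/x$. Its derivative is $(1 - \log(ax))/x^2$, which is negative whenever $ax > e$. Since $a \geq 1$ and $x \geq K_0 > e$, the function $g$ is nonincreasing on $[K_0, \infty)$. Hence it suffices to prove $\log(aK_0) \leq \varepsilon^2 K_0 = 4L$.

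\textbf{Step 2 (the inequality at $K_0$).} We have $\log(aK_0) = \log(4aL/\varepsilon^2)$, while $4L = \log\big((4a/\varepsilon)^4\big)$. So it suffices to show
\[
\frac{4aL}{\varepsilon^2} \leq \frac{256 a^4}{\varepsilon^4}, \qquad \text{i.e.} \qquad L \leq \frac{64 a^3}{\varepsilon^2}.
\]
Using $\log x \leq x$ with $x = 4a/\varepsilon$, we get $L \leq 4a/\varepsilon$. Then $4a/\varepsilon \leq 64 a^3/\varepsilon^2$ because $a \geq 1$ and $\varepsilon \leq 1$. This closes the argument.

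The only point needing care is Step 1: we must check that the threshold $K_0$ already lies in the region where $g$ is decreasing. This follows because $K_0 > e$ under the standing assumptions $a \geq 1$ and $\varepsilon \leq 1$. Beyond that, the argument is routine, with generous slack in the constants.
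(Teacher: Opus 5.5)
Your proof is correct and follows the same route as the paper. Both show that $\log(ax)/x$ is decreasing once $ax > e$ (guaranteed here since $K_0 \geq 4$ and $a \geq 1$), then check the inequality at the threshold $K_0 = 4\log(4a/\varepsilon)/\varepsilon^2$; the only cosmetic difference is that the paper expands $\log(aK_0)$ as $\log(4a/\varepsilon) + \log\log(4a/\varepsilon) + \log(1/\varepsilon)$ and bounds each piece by $\log(4a/\varepsilon)$, whereas you exponentiate and reduce to $\log(4a/\varepsilon) \leq 64a^3/\varepsilon^2$, which is an equivalent form of the same boundary check.
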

\begin{proof}
We consider the function $f (K) = \sqrt{\log (a K)/K}$ and calculate the gradient.
\begin{align*}
    f^\prime (K) = \frac{1}{2} \lp \frac{\log (aK)}{K} \rp^{-1/2} \lp \frac{1-\log (aK)}{K^2} \rp.
\end{align*}
When $K \geq 4 \log (4a/\varepsilon) / \varepsilon^2 \geq 4$, we have that $f^\prime (K) \leq 0$, implying that $ f (K)$ is a monotonically decreasing function in this range. Then we have that
    \begin{align*}
        \sqrt{\frac{\log (a K)}{K}} &\leq \sqrt{\frac{\log \lp 4 a \log (4a/\varepsilon) / \varepsilon^2  \rp}{4 \log (4a/\varepsilon)}} \varepsilon
        \\
        &= \sqrt{\frac{ \log (4a/\varepsilon) + \log ( \log (4a/\varepsilon) ) + \log (1/\varepsilon)  }{4 \log (4a/\varepsilon)}} \varepsilon
        \\
        &\overset{(a)}{\leq} \sqrt{\frac{ \log (4a/\varepsilon) +  \log (4a/\varepsilon) + \log (1/\varepsilon)  }{4 \log (4a/\varepsilon)}} \varepsilon
        \\
        &\overset{(b)}{\leq} \varepsilon.
    \end{align*}
    Inequality $(a)$ follows that $\log (x) \leq x + 1$ and inequality $(b)$ follows that $a \geq 1$. 

\end{proof}

%% file: appendix/experiment_details.tex
\section{Implementation Details}
\label{sec:implementation_details}

\subsection{Implementation Details of OPT-AIL}

\textbf{Reward Update. } As mentioned in \cref{subsec:practical_reward_update}, we choose $\psi(r)$ in \cref{eq:practical_reward_update} as the gradient penalty (GP) regularization of the reward model \citep{Arjovsky2017wgan}, which can help stabilize the online optimization process by enforcing 1-Lipschitz continuity of the reward model $r$. Here $\mathcal{D}^I$ is a linear interpolation between the replay buffer $\gD^k$ and expert demonstrations $\gDE$.
\begin{align*}
    \psi(r) = \expect_{\tau\sim\mathcal{D}^I}\ls\sum_{h=1}^H(\|\nabla r_h (s_h,a_h)\| - 1)^2\rs
\end{align*}
\\
\textbf{Model-free Policy Update. } Here we present the implementation details of policy updates. Firstly, to stabilize the training process, we refine the optimism regularization term by subtracting a baseline Q-value function from a random policy $\mu\equiv \text{Unif}(\mathcal{A})$, which has been utilized in \citep{kumar2020conservative, liu2024maximize}. Furthermore, recognizing that initial state samples can be limited and lack diversity, we employ both the replay buffer $\gD^k$ and expert demonstrations $\gDE$ to compute the Q-value loss, which is a common data augmentation approach and has been validated in many deep AIL methods \citep{Kostrikov20value_dice,garg2021iq-learn,viano2022proximal}. Incorporating these two enhancements, we reformulate the Q-value model training objective as follows.
\begin{align*}
         \min_{Q \in \mathcal{Q}} &\expect_{\tau \sim \gD^k\cup\gDE} \ls \sum_{h=1}^H \lp Q_h (s_h, a_h) - r^k_h - \widebar{Q}_{h+1} (s_{h+1}, \pi^k) \rp^2 \rs - \lambda \expect_{\tau \sim \gD^k\cup\gDE} \ls\sum_{h=1}^H \lp Q_h(s_h,\pi^k) - Q_h(s_h,\mu)\rp\rs .
\end{align*}
\\
\textbf{Model-based Policy Update. }Here we present the implementation details of the model-based policy update. In particular, similar to the model-free policy update, we utilize the data augmentation technique, which uses both the replay buffer $\gD^k$ and expert demonstrations $\gDE$ to calculate the negative log-likelihood loss for transition functions.
\begin{equation*}
    \begin{split}
        &\nabla \ell^k (P) := - \expect_{\tau \sim \gD^k \cup \gDE} \ls \sum_{h=1}^H \nabla \log \lp P_h \lp s_{h+1} | s_h, a_h \rp \rp \rs
    \\
    & - \lambda_{P} \expect_{\tau\sim\gD^k \cup \gDE} \bigg[  \sum_{h=1}^H \nabla \log \lp P_h \lp s_{h+1} | s_h, a_h \rp \rp \bigg( r^k_h (s_h, a_h) + Q^{\pi^{k-1}, P, r^k}_{h+1} (s_{h+1}, a_{h+1}) - Q^{\pi^{k-1}, P, r^k}_h (s_{h}, a_h) \bigg)  \bigg].
    \end{split}
\end{equation*}

\subsection{Architecture and Training Details}
The experiments are conducted on a machine with 64 CPU cores and 4 RTX4090 GPU cores. Each experiment is replicated five times using different random seeds. For each task, we adopt online DrQ-v2 \citep{yarats2021mastering} to train an agent with sufficient environment interactions and regard the resultant policy as the expert policy. Specifically, we use 3M training steps for \texttt{Cheetah Run}, \texttt{Hopper Hop}, and \texttt{Walker Run}, and 1M training steps for other tasks. Then we roll out this expert policy to collect expert demonstrations. The architecture and training details of OPT-AIL and all baselines are listed below.
\begin{itemize}
    \item \textbf{MF OPT-AIL and MB OPT-AIL:} Our codebase of OPT-AIL and MB OPT-AIL extends the open-sourced framework of \href{https://github.com/Div99/IQ-Learn}{IQLearn}. We retain the structure and parameter design of the actor and critic from the original framework while employing SAC \citep{haarnoja2018sac} with a fixed temperature for policy updates. We also implement a discriminator with a similar architecture to the critic network, and additionally incorporate layer normalization and tanh activation before the output to improve training stability. For MB OPT-AIL, we assume the dynamics model outputs a Gaussian distribution with a fixed standard deviation of $0.01$. We use a two-layer MLP as the backbone of the dynamics model, which outputs the mean of the distribution. A comprehensive enumeration of the hyperparameters of OPT-AIL and MB OPT-AIL is provided in Table \ref{tab:params} and Table \ref{tab:mb_params}, respectively.

    \item \textbf{BC:} We implement BC based on our codebase. The actor model is trained using Mean Squared Error (MSE) loss over 10k training steps;

    \item \textbf{PPIL:} We use the author's codebase, which is available at \href{https://github.com/lviano/p2il}{https://github.com/lviano/p2il};

    \item \textbf{IQLearn:} We use the author's codebase, which is available at \href{https://github.com/Div99/IQ-Learn}{https://github.com/Div99/IQ-Learn};

    \item \textbf{FILTER:} We use the author's codebase, which is available at \href{https://github.com/gkswamy98/fast_irl}{https://github.com/gkswamy98/fast\_irl};

    \item \textbf{HyPE and HyPER:} We use the author's codebase, which is available at \href{https://github.com/gkswamy98/hyper}{https://github.com/gkswamy98/hyper}.

    \item \textbf{CMIL:} We adapt the original CMIL codebase from \href{https://github.com/victorkolev/cmil}{https://github.com/victorkolev/cmil}, which is designed for vision-based tasks, to support vector-based tasks.    

\end{itemize}

We emphasize that for a fair comparison, all algorithms use the same hyperparameters (or the default in the original implementation) except for the gradient penalty coefficient. Specifically, in OPT-AIL, the gradient penalty coefficient is set to 1 for \texttt{Cartpole Swingup}, \texttt{Walker Walk}, and \texttt{Walker Stand}, and 10 for other tasks. For baselines, the gradient penalty coefficient is always set to 10 as provided by the authors. We also attempt to adjust this parameter for the baselines but find that the default parameters provided by the authors work well.

\begin{table}[htbp]
	\renewcommand{\arraystretch}{1.1}
	\centering
	\caption{MF OPT-AIL Hyper-parameters.}
	\label{tab:params}
	\vspace{1mm}
	\begin{tabular}{l l| l }
	\toprule
	\multicolumn{2}{l|}{Parameter} &  Value\\
	\midrule
	& discount ($\discount$) &  0.99\\
        & gradient penalty coefficient ($\beta$) & 1, 10\\
        & optimism regularization coefficient ($\lambda$) & $10^{-3}$ \\
        & temperature ($\alpha$) & $10^{-2}$\\
	& replay buffer size & $5\cdot10^5$\\
	& batch size & 256\\
	& optimizer & Adam \\
        \multicolumn{2}{l|}{\it{Discriminator}}& \\
        & learning rate & $3 \cdot 10^{-5}$\\  
        & number of hidden layers  & 2\\
        & number of hidden units per layer  & 256\\
        & activation & ReLU\\
        \multicolumn{2}{l|}{\it{Actor}}& \\
        & learning rate & $3 \cdot 10^{-5}$\\
        & number of hidden layers  & 2\\
        & number of hidden units per layer  & 256\\
        & activation & ReLU\\
        \multicolumn{2}{l|}{\it{Critic}}& \\
        & learning rate & $3 \cdot 10^{-4}$\\  
        & number of hidden layers  & 2\\
        & number of hidden units per layer  & 256\\
        & activation & ReLU\\
\bottomrule
\end{tabular}
\end{table}

\begin{table}[htbp]
	\renewcommand{\arraystretch}{1.1}
	\centering
	\caption{MB OPT-AIL Hyper-parameters.}
	\label{tab:mb_params}
	\vspace{1mm}
	\begin{tabular}{l l| l }
	\toprule
	\multicolumn{2}{l|}{Parameter} &  Value\\
	\midrule
	& discount ($\discount$) &  0.99\\
        & gradient penalty coefficient ($\beta$) & 1, 10\\
        & optimism regularization coefficient ($\lambda_P$) & $10^{-2}$ \\
        & temperature ($\alpha$) & $10^{-2}$\\
	& replay buffer size & $5\cdot10^5$\\
	& batch size & 256\\
        & dynamics rollout horizon & 1 \\
        & generated data ratio & 0.2\\
	& optimizer & Adam \\
        \multicolumn{2}{l|}{\it{Discriminator}}& \\
        & learning rate & $3 \cdot 10^{-5}$\\  
        & number of hidden layers  & 2\\
        & number of hidden units per layer  & 256\\
        & activation & ReLU\\
        \multicolumn{2}{l|}{\it{Actor}}& \\
        & learning rate & $3 \cdot 10^{-5}$\\
        & number of hidden layers  & 2\\
        & number of hidden units per layer  & 256\\
        & activation & ReLU\\
        \multicolumn{2}{l|}{\it{Critic}}& \\
        & learning rate & $3 \cdot 10^{-4}$\\  
        & number of hidden layers  & 6\\
        & number of hidden units per layer  & 256\\
        & activation & ReLU\\
        \multicolumn{2}{l|}{\it{Dynamics}}& \\
        & learning rate & $3 \cdot 10^{-5}$\\  
        & number of hidden layers  & 2\\
        & number of hidden units per layer  & 256\\
        & activation & ReLU\\
        & number of ensemble & 7\\
\bottomrule
\end{tabular}
\end{table}

\section{Additional Experimental Results}
\label{appendix:results}
In this section, we list the learning curves for 8 DMControl tasks with 4, 7, and 10 expert trajectories respectively. The corresponding results are depicted in Figure \ref{fig:expert_4}, Figure \ref{fig:expert_7}, and Figure \ref{fig:expert_10}. Here the x-axis is the number of environment interactions and the y-axis is the return. The solid lines are the mean of results while the shaded region corresponds to the standard deviation over 5 random seeds. Among the model-based algorithms, MB OPT-AIL consistently matches or exceeds HyPER's performance in terms of interaction efficiency. Similarly, MF OPT-AIL achieves comparable or better interaction efficiency compared to prior model-free approaches.

\begin{figure*}[htbp]
    \centering
    \includegraphics[width=\linewidth]{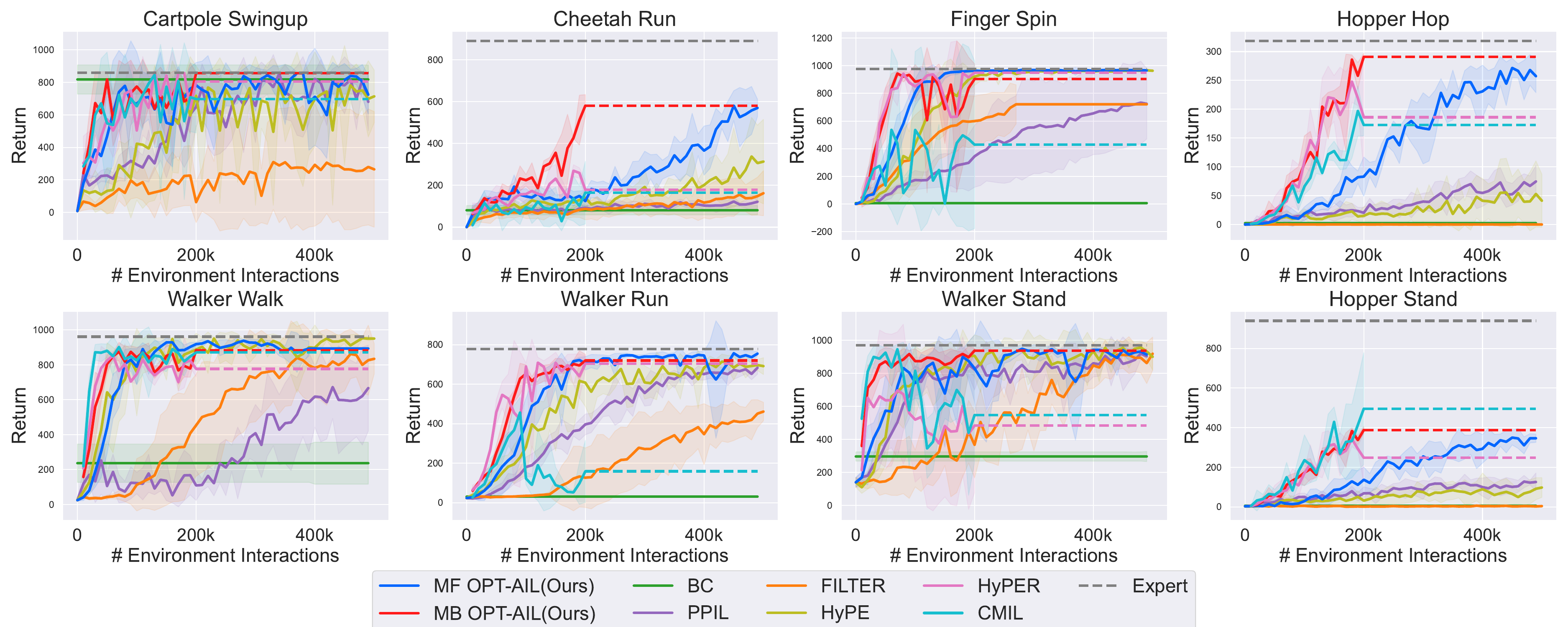}
    \caption{Learning curves on 8 DMControl tasks over 5 random seeds using 4 expert trajectories.}
    \label{fig:expert_4}
\end{figure*}

\begin{figure*}[htbp]
    \centering
    \includegraphics[width=\linewidth]{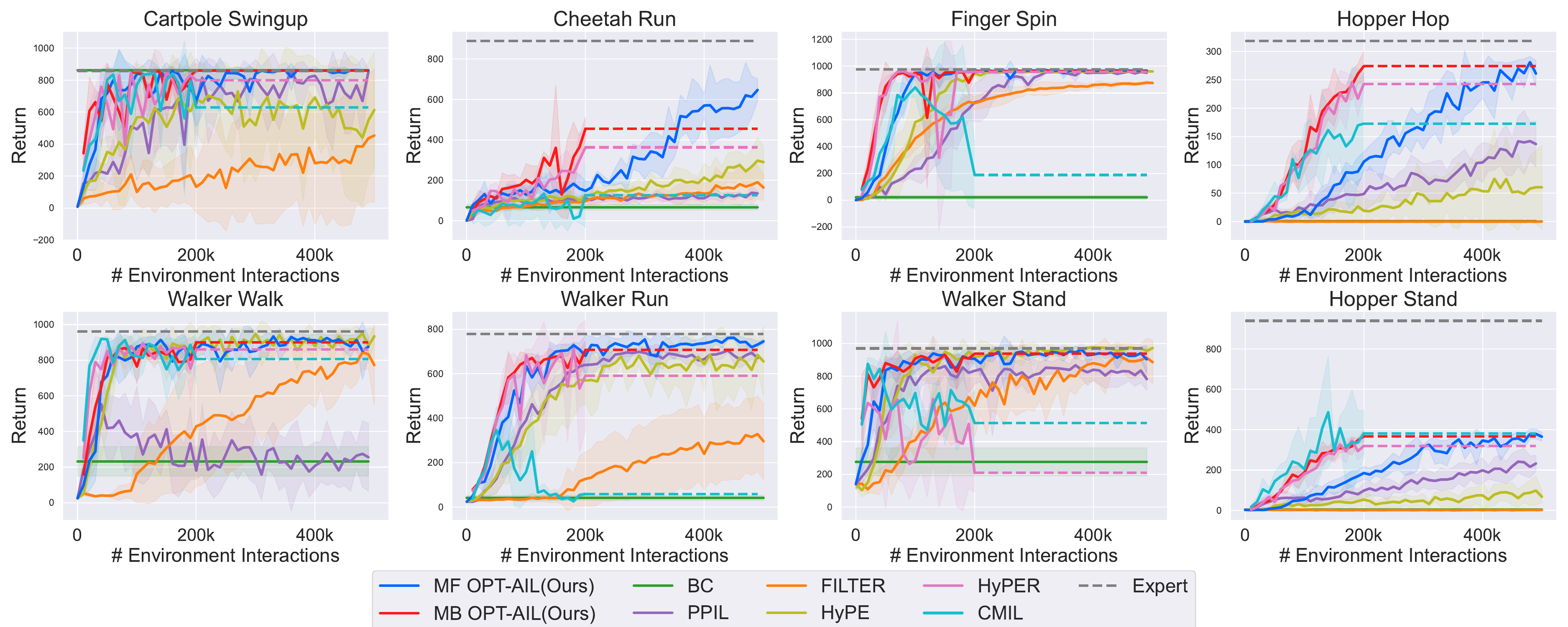}
    \caption{Learning curves on 8 DMControl tasks over 5 random seeds using 7 expert trajectories.}
    \label{fig:expert_7}
\end{figure*}

\begin{figure*}[htbp]
    \centering
    \includegraphics[width=\linewidth]{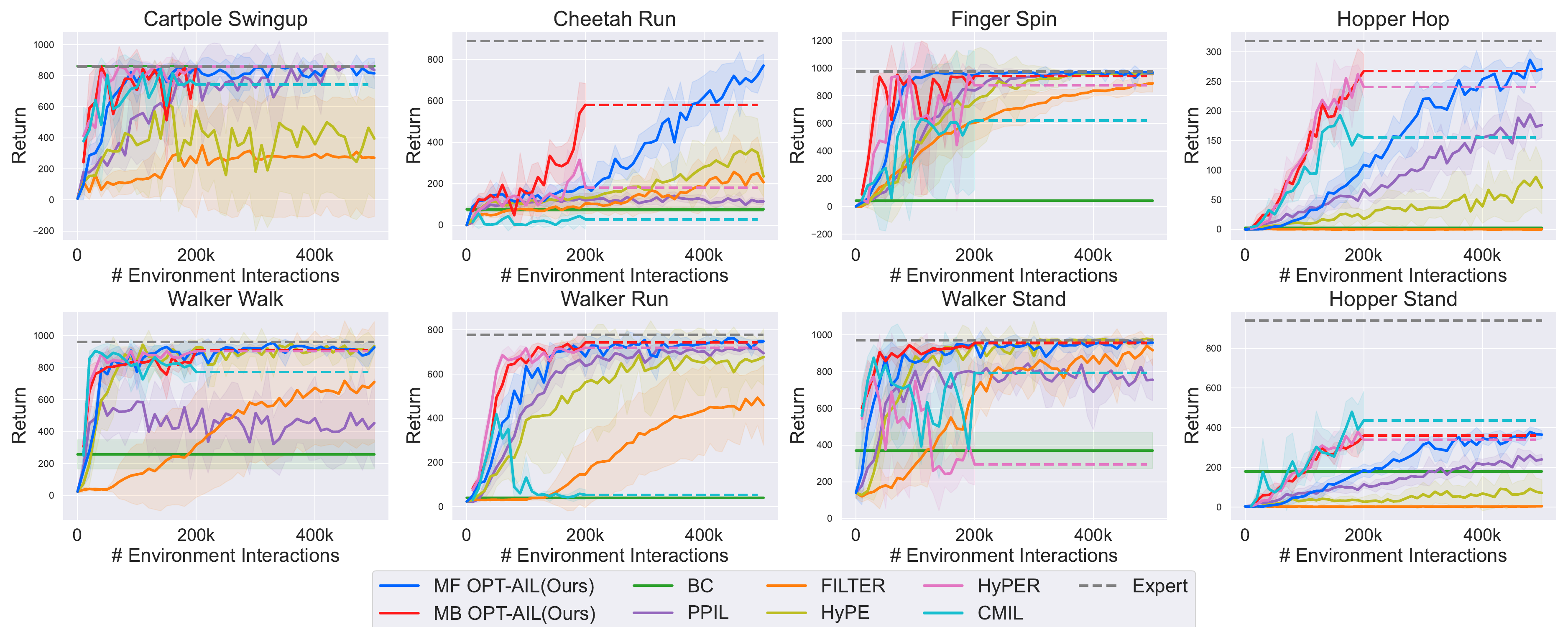}
    \caption{Learning curves on 8 DMControl tasks over 5 random seeds using 10 expert trajectories.}
    \label{fig:expert_10}
\end{figure*}